\documentclass[twoside]{article}

\usepackage{aistats2021}
%% PACKAGES AND COMMANDS

%remarks

\usepackage[utf8]{inputenc} % allow utf-8 input
\usepackage[T1]{fontenc}    % use 8-bit T1 fonts
\usepackage{hyperref}       % hyperlinks
\usepackage{url}            % simple URL typesetting
\usepackage{booktabs}       % professional-quality tables
\usepackage{amsfonts}       % blackboard math symbols
\usepackage{nicefrac}       % compact symbols for 1/2, etc.
\usepackage{microtype}      % microtypography
\usepackage{graphicx}
\usepackage{subcaption}    
\usepackage{color} 
\usepackage{floatrow}
\usepackage{caption}

% additional packages 
\usepackage{amsmath}
\usepackage{amsthm}
\usepackage{amssymb}
\usepackage{thmtools, thm-restate}
\DeclareMathOperator*{\argmin}{arg\,min\,}
\newcommand{\R}{\mathbb{R}}
\newcommand{\N}{\mathbb{N}}

\newcommand{\E}{\mathbb{E}}
% \setcitestyle{square, numbers}
\usepackage{graphicx}
\newtheorem{theorem}{Theorem}[section]

\newtheorem{lemma}[theorem]{Lemma}

\newtheorem{remark}{Remark}[section]
\usepackage{xcolor}
\usepackage{stmaryrd}
\usepackage{algorithmic}
\usepackage[ruled,vlined]{algorithm2e}
% notations
% ----------------------------
\newcommand{\bmY}{\mathcal{Y}}
\newcommand{\bmA}{\mathcal{A}}
\newcommand{\bmB}{\mathcal{B}}
\newcommand{\bmX}{\mathcal{X}}
\newcommand{\bmH}{\mathcal{H}}
\newcommand{\bmG}{\mathcal{G}}
\newcommand{\bmF}{\mathcal{F}}
\newcommand{\bmS}{\mathcal{S}}
\newcommand{\bmL}{\mathcal{L}}

\newcommand{\bmU}{\mathcal{U}}

\newcommand{\bmO}{\mathcal{O}}
\newcommand{\bmR}{\mathcal{R}}
\newcommand{\loss}{\Delta}

% ------- kerrnels
\newcommand{\kerx}{{k_x}}
\newcommand{\kery}{k}
\newcommand{\bmHy}{\bmH_y}
\newcommand{\bmHK}{\bmH_{\K}}
\newcommand{\bmHx}{\bmH_x}
\newcommand{\K}{K}
% estimators and targets
% ----------------------------

\newcommand{\hbpsi}{h^{*}_{\psi}}

\newcommand{\hhatpsi}{\hat{h}_{\psi}}

% ----------------------------
% end of notations
% If your paper is accepted, change the options for the package
% aistats2021 as follows:
%
% \usepackage[accepted]{aistats2021}
%
% This option will print headings for the title of your paper and
% headings for the authors names, plus a copyright note at the end of
% the first column of the first page.

% If you set papersize explicitly, activate the following three lines:
%\special{papersize = 8.5in, 11in}
%\setlength{\pdfpageheight}{11in}
%\setlength{\pdfpagewidth}{8.5in}

%\usepackage[square,numbers]{natbib}
% If you use natbib package, activate the following three lines:
\usepackage[round]{natbib}

% If you use BibTeX in apalike style, activate the following line:
%\bibliographystyle{apalike}

\begin{document}

% If your paper is accepted and the title of your paper is very long,
% the style will print as headings an error message. Use the following
% command to supply a shorter title of your paper so that it can be
% used as headings.
%
%\runningtitle{I use this title instead because the last one was very long}

% If your paper is accepted and the number of authors is large, the
% style will print as headings an error message. Use the following
% command to supply a shorter version of the authors names so that
% they can be used as headings (for example, use only the surnames)
%
%\runningauthor{Surname 1, Surname 2, Surname 3, ...., Surname n}
\twocolumn[

\aistatstitle{Learning Output Embeddings in Structured Prediction}
%Semi-supervised Approximation of Output Kernels in Structured Prediction
\aistatsauthor{Luc Brogat-Motte$^1$ \And   Alessandro Rudi$^2$ \And  C\'eline Brouard$^3$ \And   Juho Rousu$^4$ \And   Florence d'Alch\'e-Buc$^1$}

% \aistatsaddress{LTCI\\ T\'el\'ecom Paris \\ IP Paris \And  INRIA \\ D\'epartement d'informatique\\ ENS, PSL University \And INRAE \\ MIAT \\ Universit\'e de Toulouse \And Department of Computer Science \\ Aalto University \And LTCI \\ T\'el\'ecom Paris \\ IP Paris }]
\aistatsaddress{}]

\begin{abstract}
A powerful and flexible approach to structured prediction consists in embedding the structured objects to be predicted into a feature space of possibly infinite dimension by means of output kernels, and then, solving a regression problem in this output space. A prediction in the original space is computed by solving a pre-image problem. In such an approach, the embedding, linked to the target loss,  is defined prior to the learning phase. In this work, we propose to jointly learn a finite approximation  of  the  output embedding and the regression function into the new feature space.
%Output Embedding Learning (OEL) allows to 
For that purpose, we leverage a priori information on the outputs and also unexploited unsupervised output data, which are both often available in structured prediction problems. We prove that the resulting structured predictor is a consistent estimator, and derive an excess risk bound. Moreover, the novel structured prediction tool enjoys a significantly smaller computational complexity than former output kernel methods. %» We give a general learning method that we theoretically study in the linear case, proving consistency and excess-risk bound. 
The approach empirically tested on various structured prediction problems reveals to be versatile and able to handle large datasets.%reveals to be especially useful when the training dataset is small compared to the complexity of the task.
\end{abstract}

\section{INTRODUCTION} \label{sec:introduction}
A large number of real-world applications involves the prediction of a structured output \citep{nowozin11}, whether it be a sparse multiple label vector in recommendation systems \citep{TsoumakasK07}, a ranking over a finite number of objects in user preference prediction \citep{hullermeier2008} or a labeled graph in metabolite identification \citep{nguyen2019}. Embedding-based methods generalizing ridge regression to structured outputs \citep{weston2003kernel,cortes2005,Brouard_icml11,kadri2013,brouard2016input,Ciliberto2016}, besides conditional generative models and margin-based methods \citep{tsochantaridis2004,taskar2004,bakhtin2020}, represent one of the main theoretical and practical frameworks to solve structured prediction problems and also find use in other fields of supervised learning such as zero-shot learning \citep{palatucci2009}. 

%Compared to other supervised tasks,  structured prediction raises specific 
%issues: the output space does enjoy neither a vector space structure nor a group structure and is often of finite but huge cardinality. 
%The definition of a loss function depends on the nature of the structured data, calling for  an {\it ad hoc} definition, each time a  new problem is tackled. Moreover,  an  additional difficulty arises from the complexity of the labeling task  which prevents the collection of very large labeled datasets. 
In  this work, we focus on the framework of Output Kernel Regression (OKR) \citep{geurts2006,brouard2016input} where the structured loss to be minimized depends on a kernel, referred as the {\it output kernel}.  OKR relies on a simple idea: structured outputs are embedded into a Hilbert space (the canonical feature  space associated to the output kernel), enabling to substitute to the initial structured output prediction problem, a less complex problem of vectorial output regression. Once this problem is solved,  a structured prediction function is obtained by decoding the embedded prediction into the original output space,  e.g. solving a pre-image problem. To benefit from an infinite dimensional embedding, the kernel trick is leveraged in the output space, opening the approach to a large variety of structured outputs. 

%Referred here as , these approaches have been first introduced  by  the seminal work presented in \citep{weston2003kernel,cortes2005}, extended to kernelized trees and  ensemble methods, and implemented in the framework of  vector-valued Reproducing Kernel Hilbert spaces with well chosen  operator-valued kernels \citep{Brouard_icml11,, brouard2016input}.

A generalization of the OKR approaches under the name of Implicit Loss Embedding has  been recently studied from a statistical point of view in \citep{ciliberto2020general}, extending the theoretical guarantees developed in  \citep{Ciliberto2016,nowak2019general} about the Structure Encoding Loss Framework (SELF). In particular, this work proved that the  excess risk of the final structured output predictor depends on the excess risk of the surrogate regression estimator. This motivates the approach of this paper, controlling the error of the regression estimator by adapting the embedding to the observed data.

In this work, we propose to jointly learn a finite dimensional embedding that approximates the given infinite dimensional embedding and regress the new embedded output variable instead of the original embedding.
% manque la justification
% better control on h
Our contributions are four-fold:
\begin{itemize}
    \item We introduce,  Output Embedding  Learning  (OEL), a novel approach to Structured Prediction that jointly learns a finite dimensional embedding of the outputs and the regression of the embedded output given the input, leveraging the prior information about the structure and unlabeled output data.
    \item We devise an OEL approach focusing on kernel ridge regression and a projection-based embedding that exploits the closed-form of the regression problem. We  provide an efficient learning algorithm based on randomized SVD and Nystr\"om approximation of kernel ridge regression.
    \item For this novel estimator, we prove its consistency and derive excess risk bounds.
    \item We provide a comprehensive experimental study on various Structured Prediction problems with a comparison with dedicated methods, showing the versatility of our approach. We particularly highlight the advantages of OEL when unlabeled output data are available while the labeled dataset is limited. However, even when only using labeled data, OEL is shown to reach similar state-of-the-art results with a drastically reduced decoding time compared to OKR.
\end{itemize}
%giving rise to theoretical guarantees in terms of calibration and excess risks. They also belong to a larger family of losses called Implicit Loss Embedding which is ciliberto2020general

% Contribution
%Structured prediction problems face with exponential number of outputs and so can't be tackle as standard classification problems. Key to structured prediction is to capture the output structure. We propose to learn jointly the output embedding and its estimator, with the idea to learn a compact embedding with unsupervised and supervised information. In section \ref{methods}, we propose two methods: shallow and deep kernel. In section \ref{theory}, we do a theoretical analysis of the shallow approach, giving consistency and learning rate. In section \ref{experiments}, we test our methods to illustrate the theoretical insights, and also show the advantages of joint learning. 

\section{OUTPUT EMBEDDING LEARNING}\label{sec:pb-setting}
%\section{General Problem Setting}\label{sec:pb-setting}

{\bf Notations:~}$\bmX$ denotes the input space and $\bmY$ is the set of structured objects of finite cardinality $| \bmY |=D$. Given two spaces $\bmA$, $\bmB$, $\bmF(\bmA, \bmB)$ denotes the set of functions from $\bmA$ to $\bmB$. Given two Hilbert spaces $\bmH_1$ and $\bmH_2$, $\bmB(\bmH_1,\bmH_2)$ is the space of bounded linear operators from $\bmH_1$ to $\bmH_2$. $Id_{\bmH_1}$ is the identity operator over $\bmH_1$. The adjoint of an operator $A$ is noted $A^*$. 
\subsection{Introducing OEL}
%Without a loss of generality, we assume that each code representing a structured object is denoted by $y \in \mathbb{R}^p$ with $p$, a strictly positive integer.  
Structured Prediction is generally associated to a loss $\loss: \bmY \times  \bmY \to \mathbb{R}$ that takes into account the inherent structure of objects in $\bmY$. In this work, we consider a structure-dependent loss by relying on an embedding $\psi: \bmY \to \bmHy$ that maps the structured objects into a Hilbert space $\bmHy$  and  the squared loss defined over pairs of elements of $\bmHy$: $\loss(y,y')=\| \psi(y) - \psi(y')\|^2_{\bmHy}$.

A principled and general way to define the embedding $\psi$ consists in choosing  $\psi(y) = \kery(\cdot,y)$, the canonical feature map of a positive definite symmetric kernel $\kery$ defined over $\bmY$, referred here as the {\it output kernel}. The space $\bmHy$ is then the Reproducing Kernel Hilbert Space associated to kernel $\kery$. This choice enables to solve various structured prediction problems within the same setting, drawing on the rich kernel literature devoted to structured objects \citep{gartner08}.

Given an unknown joint probability distribution $P(X,Y)$ defined on $ \mathcal{X} \times \mathcal{Y}$, the goal of structured prediction is to solve the following learning problem:
\begin{equation}\label{eq:pb-target}
\min\limits_{f \in \bmF(\bmX,\bmY)}~\E_{X,Y \sim P}\left[ \| \psi(Y) - \psi(f(X)) \|^2_{\bmHy} \right],
\end{equation}
with the help of a training i.i.d. sample $\bmS_n =\{(x_i,y_i), i=1, \ldots n\}$ drawn from $P$. 
%Note that in practise, the embedded outputs $\psi(y_i)$'s are given under the  form of the kernel Gram matrix $K_y$  of the output kernel values.

%In this work, we assume some additional information: a training i.i.d. sample that we denote $\{\psi(y_i), i=1, \ldots, m\}$ drawn from the marginal $P(Y)$ is available. It is not rare to have access to such information and this is the case for instance in a computational biochemistry problem that we solve in the experimental part (metabolite identification task).

To overcome the  inherent difficulty of learning $f$ through $\psi$,  Output Kernel Regression address Structured Prediction as a surrogate problem, e.g. regressing the target  variable $\psi(Y)$ given $X$, and then make their prediction in the original space $\bmY$ with a decoding function as follows (see Figure \ref{fig:schema}, left):

%% JR: I propose to describe the basic regression task here
\begin{equation}
\label{eq:reg-surrogate}
\hbpsi \in \argmin_{h \in \bmF(\bmX,\bmHy)}~\E_{X,Y \sim P}\left[\| \psi(Y) - h(X) \|^2_{\bmHy} \right],
\end{equation}
This regression step is then followed by a pre-image or {\it decoding} step in order to recover $f^*$:
\begin{equation*}
  f^*(x) = d \circ \hbpsi(x), 
\end{equation*}
where the decoding function $d$ computes $d(z)= \arg \min_{y \in \bmY} \| z - \psi(y) \|^2_{\bmHy}$.

% JR: I would add abit more gentle meta text
While the above is a powerful approach for structured prediction, relying on a fixed output embedding given by $\psi$ may not be optimal in terms of prediction error, and it is hard by a human expert to decide on a good embedding. 

% JR: i think the figrue helps here
\begin{figure}[t]
     \centering
 %    \hfill
     \begin{subfigure}[b]{0.4\textwidth}
         \centering
         \includegraphics[width=\textwidth]{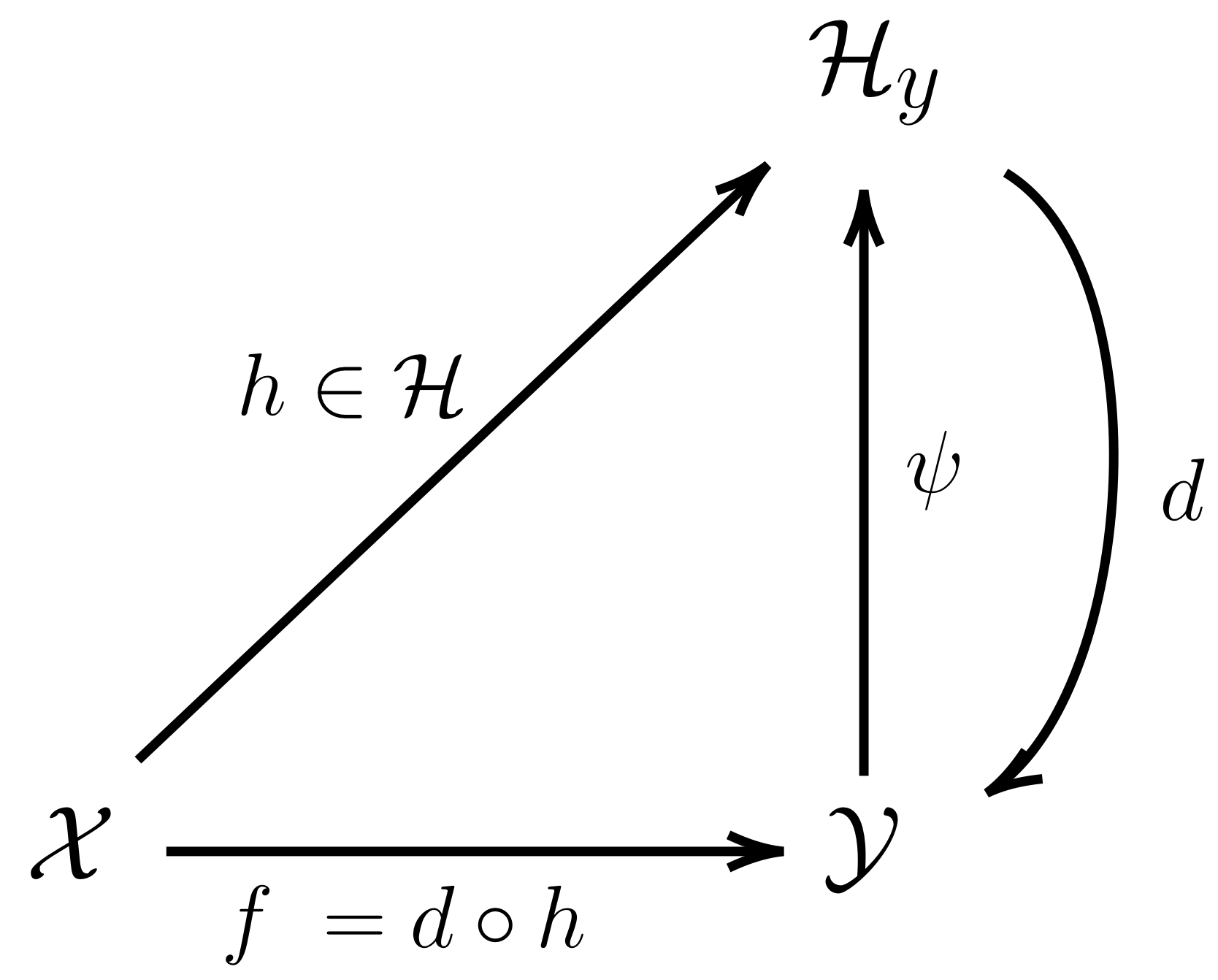}
         \caption{OKR}
         \label{fig:y equals x}
     \end{subfigure}
 %    \hfill
    \hspace{0.5cm}
     \begin{subfigure}[b]{0.4\textwidth}
         \centering
         \includegraphics[width=\textwidth]{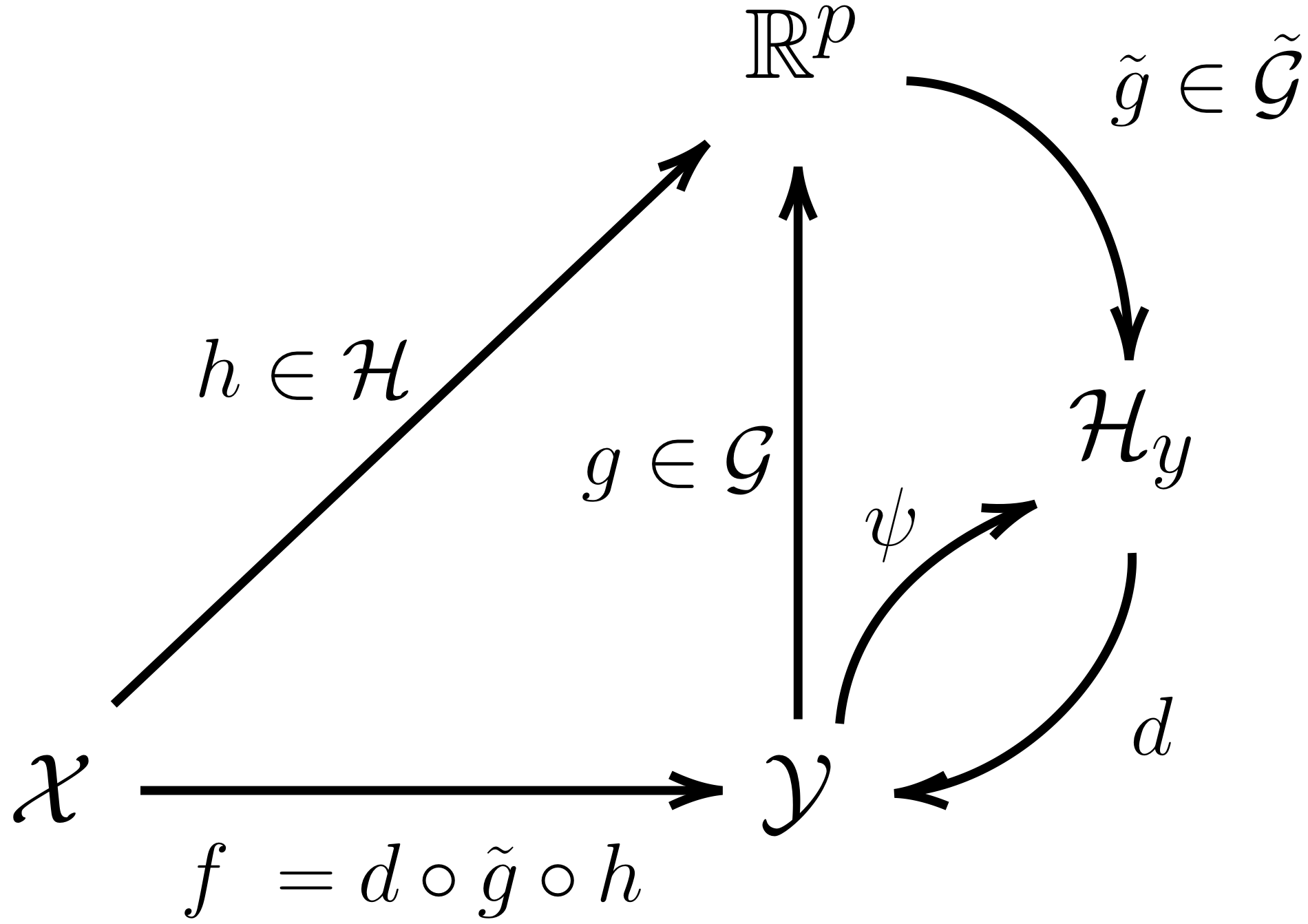}
         \caption{OKR with OEL}
         \label{fig:three sin x}
     \end{subfigure}
     \hfill
     \caption{Schematic illustration of OKR and OKR with OEL}
     \label{fig:schema}
\end{figure}

In this paper, we propose to jointly learn a novel output embedding $g: \bmY \rightarrow \mathbb{R}^p$ as a finite dimensional proxy  of $\psi$ and the corresponding regression model $h: \bmX \to \mathbb{R}^p$.
% JR: This is not really understandable here, should be written more explicitly
%Eventually, the target surrogate problem is not solved exactly but approximately, opening the door to better generalization properties.

%In this work we focus on the squared loss (as the surrogate loss) although many other losses are relevant, as proposed in \citet{laforgue2019dualization}.
Our novel approach, called Output Embedding Learning (OEL), thus consists in solving the two problems (Figure \ref{fig:schema}, right).\\
{\bf Learning:} $\gamma \in [0,1]$, minimize w.r.t $h, g, \tilde g$
\begin{align}
\label{eq:oel-gen}
\begin{split}
   &\gamma \underbrace{\E_{X,Y} [\| h(X) - g(Y) \|^2_{\mathbb{R}^p}]}_{\bmR(h,g)}\\ &+ (1- \gamma) \underbrace{\E_Y [\| \tilde g \circ g(Y) - \psi(Y)\|^2_{\bmHy}]}_{\Gamma(g)},
 \end{split}
 \end{align}
{\bf Decoding:}
\begin{equation}
f^*(x) = d \circ {\tilde g}^* \circ h^*_{g^*}(x).
\end{equation}
In the learning objective of Eq. \eqref{eq:oel-gen}, the term $\bmR(h,g)$ expresses a surrogate regression problem from the input space to the learned output embedding space
while $\Gamma(g)$ is a reconstruction error that constrains $g$ to provide a good proxy of $\psi$, and thus, encouraging the novel surrogate loss $(h(x),y) \to \| h(x) - g(y)\|^2$ to be calibrated with the loss $\loss$.

This approach allows learning an output embedding that, intuitively,  is easier to predict from inputs than $\psi$ and also provides control of the complexity of surrogate regression model $h$ by choosing the dimension $p$. 

Concerning the decoding phase, one can be surprised not to use directly $h^*_{g^*}(x)$ by considering a decoding function using the $\ell_2$ norm in $\mathbb{R}^p$. The reason to call for $\tilde g: \R^p \rightarrow \bmH_y$ is driven by theory. This choice allows us to derive an excess risk for the corresponding prediction function, empirically estimated.

To solve the learning problem in practise, a training i.i.d. sample $\bmS_n =\{(x_i,y_i), i=1, \ldots n\}$ is used for estimating $\bmL(h,g)$. For estimating $\Gamma(g)$ we can also benefit from additional i.i.d. samples of the outputs $Y$, denoted $\bmU_m$. Such data is generally easy to obtain for many structured output problems, including the metabolite identification task described in the experiments.

\subsection{Solving OEL with a linear transformation of the embedding}\label{subsec:oel_linear}
% prior to this section, we have defined the general problem with H_y: RKHS associated with k, only g and H_x are not defined.
 We consider the case where the chosen model for the output embedding is a linear transformation of $ \psi $: $g(y)=G \psi(y)$ where  $G$ is an operator, $G \in \bmG_p = \{ A \in \bmL(\bmHy,\R^p), AA^*=Id_p\}$, with the linear associated decodings: $\tilde g(z) = G^*z~\in \bmHy$. Here $\tilde g \circ g$ can be interpreted as a one-layer linear autoencoder whose inputs and outputs belong to the Hilbert space $\bmHy$ (thus giving overall non-linear embedding $g$) \citep{laforgue2018autoencoding}, and the hidden layer is trained in supervised mode (through $h$), or alternatively as a Kernel PCA model \citep{scholk98} of the outputs, but trained in supervised mode.
 
We denote  $\hbpsi(x)$ the conditional expectation of $\psi(Y)$ given $x$, $\hbpsi(x) = \E_{y}[\psi(y)|x]$. Within this setting, the general problem depicted in Eq. \eqref{eq:oel-gen}  instantiates as follows:
\begin{equation}\label{eq:linear-pb-1}
\begin{split}
  \min\limits_{h \in \bmF(\bmX, \mathbb{R}^p), G \in \bmG_p} &\gamma \E_{X,Y} [\| h(X) - G\psi(Y) \|^2_{\R^p}] \\& \quad + (1- \gamma) \E_Y [\| G^*G \psi(Y) - \psi(Y)\|^2_{\bmHy}]
  \end{split}
\end{equation}

% \begin{equation}\label{eq:linear-pb-1}
%   \min\limits_{h \in \bmF(\bmX, \mathbb{R}^p), G \in \bmG_p} \gamma~
%     \E_{X,Y} [\| h(X) - G\psi(Y) \|^2_{\R^p}] + (1- \gamma) \E_Y [\| G^*G \psi(Y) - \psi(Y)\|^2_{\bmHy}]. 
%   \end{equation}
Leveraging the regression $\hbpsi(x)$, and $\| G \psi(y)\|_{\R^p} = \| G^*G \psi(y)\|_{\bmHy}$ ($G$ is orthogonal), this is equivalent to solve the following subspace learning problem\footnote{see details in Section 1 of the supplements}:
\begin{equation}\label{eq:linear-pb-3}
 \begin{split}
\min\limits_{G} & \:\gamma~ \E_{X} [\| G^*G \hbpsi(X) -  \hbpsi(X)\|^2_{\bmHy}] \\ & \quad+ (1- 2\gamma) \E_Y [\| G^*G\psi(Y) - \psi(Y)\|^2_{\bmHy}] ,
 \end{split}
 \end{equation}
 where we restrict $\gamma \leq \frac{1}{2}$ to ensure that the objective is  theoretically grounded. In the following, we use $\gamma_c := \frac{c}{1+c} \leq \frac{1}{2}$, with $c \in [0,1]$. The objective boils down to estimating the linear subspaces of the $\E_{y|x}[\psi(y)]$ and the $\psi(y)$. 
 %When $ \gamma \leq \frac{1}{2} $, the objective boils down to estimating the linear subspaces of the $\E_{y|x}[\psi(y)]$ and the $\psi(y)$. Otherwise, it is not clear that the objective is theoretically grounded, and so it could be seen as the minimum amount of calibration term required. In the following we use $\gamma_c := \frac{c}{1+c} \leq \frac{1}{2}$, with $c \in [0,1]$.

In the empirical context, given an i.i.d. labeled sample $\{(x_i,y_i), i=1, \ldots n\}$, and an  i.i.d unlabeled sample $\{y_j, j=1, \ldots m \}$, we propose to use an empirical estimate $\hhatpsi$ of the unknown conditional expectation $\hbpsi$ and solve the following remaining optimization problem in $G$:
\begin{equation}\label{eq:linear-emp-3}
\begin{split}
  \min\limits_{G \in \bmG_p} \frac{c}{n}
 &\sum_{i=1}^n \| G^*G \hhatpsi(x_i) - \hhatpsi(x_i)\|^2_{\bmHy}  \\& \quad + \frac{(1- c)}{m} \sum_{j=1}^m\| G^*G \psi(y_j) - \psi(y_j)\|^2_{\bmHy}.
\end{split}
\end{equation}

{\bf Learning in vector-valued RKHS:}~
To find an  empirical estimate  of $\hbpsi$, we need a hypothesis space $\bmH \subset \bmF(\bmX, \bmHy)$, whose functions have infinite dimensional outputs. Following the Input Output Kernel Regression (IOKR) approach depicted in \cite{brouard2016input}, we solve a kernel ridge regression problem in $\bmHK$, the RKHS associated to the operator-valued kernel $\K(x,x')  = Id_{\bmHy} \kerx(x,x')$ and we got the following closed-form expression:
\begin{equation}\label{eq:hIOKR}
    \hhatpsi (x) = \sum_{i=1}^n \alpha_i(x) \psi(y_i),\,\textrm{ with } {\boldsymbol \alpha}(x) = (K_x + n \lambda I)^{-1} \kappa_X^x,
\end{equation}
where $\kappa_X^x = [\kerx(x_1,x),\ldots,\kerx(x_n,x)]^T$ and $\lambda > 0$ is the ridge regularization hyperparameter.

{\bf OEL estimator:}~
For a given $(c, \lambda, p)$, denoting $\hat{G}_p$ as the solution of the above problem, the proposed estimator for the solution of the problem stated in Eq. \eqref{eq:linear-pb-1} can be expressed as: $\hat{h}_{\hat{G}_p}(x)= \hat{G}_p\hhatpsi(x)$.
However it is important to stress that we only need to compute the associated structured prediction function:
\begin{align}\label{eq:h-solution}
    \hat f(x) &= \arg \min_{y \in \bmY} \|\hat{G}_p^*\hat{G}_p\hhatpsi(x) - \psi(y)\|^2_{\bmH_y}.
    %&= \arg \min_{y \in \bmY}\|\hat{h}_{\hat{G}_p}(x) - \hat G \psi(y)\|^2_{\bmH_y} + \|\hat{G}_p^*\hat{G}_p\psi(y) - \psi(y)\|^2_{\bmH_y}
\end{align}
%Given $\lambda > 0$, $\hhatpsi(x)$ is the unique solution of the following kernel ridge regression:
%\begin{eqnarray*}
%\hhatpsi(x) &=& \arg \min \limits_{h \in \bmHk}  \frac{1}{n} \sum_{i=1}^n \| h(x_i) - \psi(y_i)\|^2_{\bmHy} + \lambda \| h \|^2_{\bmHk}, 
%\end{eqnarray*}https://www.overleaf.com/project/5f6f3866af1031000196eabf
%enjoying a closed-form solution.

%Hence, 

We derive Algorithm \ref{algo:linear} which consists in computing the singular value decomposition of the mixed gram matrix $K$, noticing that the objective \eqref{eq:linear-emp-3} is equivalent to minimize the empirical mean reconstruction error of the $n+m$ vectors of $\bmH_y$: %$\left(\sqrt{\frac{c}{n}}\hhatpsi(x_1), \dots, \sqrt{\frac{c}{n}}\hhatpsi(x_n), \sqrt{\frac{(1-c)}{m}}\psi(y_1), \dots, \sqrt{\frac{(1-c)}{m}}\psi(y_m)\right)$.\\
$\left(\left(\sqrt{\frac{c}{n}}\hhatpsi(x_i)\right)_{i=1}^n, \left(\sqrt{\frac{(1-c)}{m}}\psi(y_i)\right)_{i=1}^m\right)$.\\

\begin{algorithm}
   \caption{Output Embedding Learning with KRR (Training)}
\begin{algorithmic}
   \STATE {\textbf{Input:}} $K_x, K_y^{s,s} \in \R^{n \times n}$ (supervised data), $K_y^{u,u} \in \R^{m \times m}, K_y^{s,u} \in \R^{n \times m}$ (unsupervised data), $ \lambda \geq 0 $ KRR regularization, $p \in \N^*$ embedding dimension, $c \in [0, 1]$ supervised/unsupervised balance.
   \STATE {\textbf{KRR estimation:}} $ W = (K_x + n\lambda I)^{-1} $ / $K_h = WK_xK_y^{s,s}K_xW$ / $K_{hy} = WK_xK_y^{s,u}$
   \STATE {\textbf{Subspace estimation:}}
   \STATE 1) $K = 
            \begin{bmatrix}
            \frac{c}{n} K_h & \sqrt{\frac{c(1-c)}{nm}} K_{hy} \\
            \sqrt{\frac{c(1-c)}{nm}} K_{hy}^T & \frac{(1- c)}{m} K_y^{u,u}
            \end{bmatrix} \in \R^{(n + m) \times (n + m)}$
   
   \STATE 2) $\beta = 
            \begin{bmatrix}
            \vert & & \vert \\
            \frac{u_1}{\sqrt{\mu_1}}  & \dots  & \frac{u_p}{\sqrt{\mu_p}}   \\
            \vert & & \vert
            \end{bmatrix} \in \R^{(m + n) \times p} \gets SVD(K) = \sum_{l=1}^{n+m} \mu_l u_lu_l^T$
    \STATE 3) $GY = K\beta$ 
    
   \RETURN $W$ KRR coefficients, $\beta$ output embedding coefficients, GY new training embedding

\end{algorithmic}
\label{algo:linear}
\end{algorithm}

\paragraph{Training computational complexity} The complexity in time of the training Algorithm \ref{algo:linear} is the sum of the complexity of a Kernel Ridge Regression (KRR) with $ n$ data and a Singular Value Decomposition  with $ n + m $ data: $\bmO(n^3) +  \bmO((n + m)^3)$. However, this complexity can be a lot improved  as for both KRR and SVD there exists a rich literature of approximation methods \citep{rudi2017falkon, halko2011finding}. For instance, using Nyström KRR approximation of rank $ q $ and randomized SVD approximation of rank $p$, then, the time complexity becomes: $\bmO(n q^2) + \bmO((n + m)p^2)$.
\begin{table}[ht!]
    \centering
\begin{tabular}{l|l|l|}
\cline{2-3}
                       & Time & Space \\ \hline
\multicolumn{1}{|l|}{KRR Optimal}  & $\mathcal{O}(n^3)$ &  $\mathcal{O}(n^2)$\\ \hline
\multicolumn{1}{|l|}{KRR Approx. } & $\mathcal{O}(q^2n)$  &  $\mathcal{O}(qn)$\\ \hline
\multicolumn{1}{|l|}{SVD Optimal}  & $\mathcal{O}((n+m)^3)$ &  $\mathcal{O}((n+m)^2)$\\ \hline
\multicolumn{1}{|l|}{SVD Approx. } & $\mathcal{O}(p^2(n+m))$  &  $\mathcal{O}(p(n+m))$\\ \hline
\end{tabular}
    \caption{OEL training complexity}
    \label{tab:training_complexity}
\end{table}

\paragraph{Decoding computational complexity} Using an output kernel makes the decoding complexity of OKR approaches costly. The cost of one prediction is dominated by the computation of the $\langle\hhatpsi(x_{te}),\, \psi(y_c)\rangle_{\bmH_y}, \forall y_c \in \bmY$, in $\bmO(n \times |\bmY|)$. Note that $|\bmY|$ is typically very big in structured prediction, for instance, in multilabel classification with $d$ labels $|\bmY| = |\{0,1\}^d| = 2^d$. However, when using OEL this decoding step is alleviated by the finite output dimension $p < n$. Indeed, by developing the norm in \eqref{eq:h-solution}, OEL decoding boils down to computing $\langle \hat G \hhatpsi(x_{te}),\, \hat G\psi(y_c)\rangle_{\R^p}, \forall y_c \in \bmY$, and the complexity is $\bmO(p \times |\bmY|)$.

\section{THEORETICAL ANALYSIS}
%\subsection{Theoretical Analysis}

% JR: I don't think this sentence adds any information beyond the previous section
%Predictions are done thanks to the estimations $ h(x) \in \R^p $ through a  projection $\tilde g $ and then a decoding $d$. 

From a statistical viewpoint we 
%recall that the goal is to control 
are interested in controlling the {\em expected risk} of the estimator $f = d \circ \tilde g \circ h$, that for the considered loss corresponds to 
$$
\mathcal{R}(f) = \E_{X,Y}[\|\psi(f(X)) - \psi(Y)\|^2_{\bmHy}].
$$
Interpreting the decoding step in the context of structured prediction, we can leverage the so called {\em comparison inequality} from \cite{Ciliberto2016}. This inequality is applied for our case in the next lemma and relates the excess-risk of $f = d \circ \tilde g \circ h $ to the $L^2$ distance of $\tilde g \circ \hat h$ to $h^*_{\psi}$ (see \cite{Ciliberto2016} for more details on structured prediction and the comparison inequality).
\begin{restatable}{lemma}{cil}
\label{lm:comparison}
For every measurable $\hat h: \bmX \rightarrow \R^p, \tilde g: \R^p \rightarrow \bmHy$, $f = d \circ \tilde g \circ \hat h$, 
\begin{align*}
    \mathcal{R}(f) - \mathcal{R}(f^*) \leq c(\psi) ~ \sqrt{\E_X\|\tilde g \circ \hat h(X) - h^*_{\psi}(X) \|^2_{\bmHy}}
\end{align*}
with $c(\psi) = 2 \sqrt{2Q^2 + Q^4 + 1}$, $Q = \sup_{y \in \bmY} \|\psi(y)\|^2$.
\end{restatable}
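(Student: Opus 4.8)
The plan is to reduce the excess risk to a pointwise (conditional) quantity and then apply the comparison inequality of \cite{Ciliberto2016}, exploiting that the squared loss $\loss(y,y')=\|\psi(y)-\psi(y')\|^2_{\bmHy}$ is a Structure Encoding Loss and that the decoding $d$ is exactly the minimizer used in that framework. First I would write $\mathcal{R}(f)-\mathcal{R}(f^*)=\E_X[e(X)]$ with conditional excess risk $e(x)=\E_{Y|x}[\loss(f(x),Y)-\loss(f^*(x),Y)]$. Expanding the squared norm, the term $\E_{Y|x}[\|\psi(Y)\|^2_{\bmHy}]$ does not depend on the prediction and cancels, while $\E_{Y|x}[\psi(Y)]=h^*_\psi(x)$, leaving $e(x)=\big(\|\psi(f(x))\|^2-\|\psi(f^*(x))\|^2\big)-2\langle \psi(f(x))-\psi(f^*(x)),\,h^*_\psi(x)\rangle_{\bmHy}$. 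I would also note in passing that $f^*=d\circ h^*_\psi$ is genuinely the Bayes predictor for $\loss$, since minimizing $\|h^*_\psi(x)-\psi(y)\|^2$ in $y$ coincides with minimizing $\E_{Y|x}\loss(y,Y)$, so that $\mathcal{R}(f^*)$ is the correct reference.

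The second step is to use the optimality of the decoding. Writing $\hat z(x):=\tilde g\circ \hat h(x)$, the predictor $f(x)=d(\hat z(x))$ minimizes $y\mapsto \|\hat z(x)-\psi(y)\|^2$, equivalently $y\mapsto \|\psi(y)\|^2-2\langle \psi(y),\hat z(x)\rangle$; comparing the values at $f(x)$ and $f^*(x)$ gives $\|\psi(f(x))\|^2-\|\psi(f^*(x))\|^2\le 2\langle \psi(f(x))-\psi(f^*(x)),\,\hat z(x)\rangle$. Substituting this bound into $e(x)$ lets the two self-norm terms recombine into a single inner product, $e(x)\le 2\langle \psi(f(x))-\psi(f^*(x)),\,\hat z(x)-h^*_\psi(x)\rangle$. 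Cauchy--Schwarz together with the uniform bound $\|\psi(f(x))-\psi(f^*(x))\|_{\bmHy}\le 2\sqrt{Q}$ (using $\|\psi(y)\|^2\le Q$) then controls $e(x)$ by a constant multiple of $\|\hat z(x)-h^*_\psi(x)\|_{\bmHy}$. Integrating over $X$ and applying Jensen's inequality, $\E_X\|\cdot\|\le \sqrt{\E_X\|\cdot\|^2}$, produces a bound of the advertised shape $c(\psi)\sqrt{\E_X\|\tilde g\circ \hat h(X)-h^*_\psi(X)\|^2_{\bmHy}}$.

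The delicate point is pinning down the precise constant $c(\psi)=2\sqrt{2Q^2+Q^4+1}$ rather than a crude one, and this is where I would invoke \cite{Ciliberto2016} in its exact form instead of the ad hoc bound above. Concretely, one exhibits a Structure Encoding representation $\loss(y,y')=\langle \Phi(y),V\Phi(y')\rangle$ with an augmented feature such as $\Phi(y)=(\psi(y),\|\psi(y)\|^2,1)$ and a fixed bounded operator $V$ carrying the coefficients $(-2,+1,+1)$, observes that the surrogate $\hat z$ perturbs only the $\psi$-component of $\E_{Y|x}[\Phi(Y)]$ (the remaining components are known exactly and drop out of the decoding), and then reads off $c(\psi)$ from the operator norm of $V$ together with the supremum of the relevant feature norms. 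The main obstacle is therefore not probabilistic but linear-algebraic bookkeeping: choosing the representation so that the three contributions of the loss (the two self-terms and the cross-term) collect exactly into $2Q^2+Q^4+1$, while keeping the right-hand side expressed solely through the $\bmHy$-distance $\|\tilde g\circ \hat h-h^*_\psi\|$ as in the statement.
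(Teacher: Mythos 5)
Your proposal is correct, and it effectively contains two proofs. The second one --- exhibiting a SELF/ILE representation built from the augmented feature $(\psi(y),\|\psi(y)\|_{\bmHy}^2,1)$ with coefficients $(-2,+1,+1)$ and reading the constant off the cited comparison theorem --- is exactly the paper's proof of Lemma~\ref{lm:comparison}: the paper takes ${\cal V}=\bmHy\oplus\R\oplus\R$, $\gamma(y)=(\sqrt 2\,\psi(y),\|\psi(y)\|_{\bmHy}^2,1)$, $\theta(y')=(-\sqrt 2\,\psi(y'),1,\|\psi(y')\|_{\bmHy}^2)$ (your operator $V$ absorbed into the feature maps) and invokes Theorem~3 of \cite{ciliberto2020general} to obtain $c(\psi)=2\sup_{y}\|\gamma(y)\|_{\cal V}=2\sqrt{2Q^2+Q^4+1}$. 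Your first argument --- conditional excess risk, optimality of the decoding compared at $f(x)$ and $f^*(x)$, Cauchy--Schwarz, Jensen --- is a genuinely different, fully self-contained route that the paper does not take, and your instinct to dismiss it as ``crude'' is backwards. With $Q=\sup_y\|\psi(y)\|_{\bmHy}$ (the convention used in the paper's own proof, in Lemma~\ref{krr}, and in Theorem~\ref{thm:excess-risk}; the statement's ``$Q=\sup_y\|\psi(y)\|^2$'' is a typo), your direct argument yields the constant $4Q$, and since $c(\psi)=2\sqrt{Q^4+2Q^2+1}=2(Q^2+1)\ge 4Q$ by AM--GM, your elementary derivation alone already proves the lemma with a constant at least as sharp --- you could have stopped after your second paragraph. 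It also sidesteps the bookkeeping you correctly flag as the delicate point: the cited comparison inequality bounds the excess risk by the $L^2$ error, in ${\cal V}$, of an estimator of $x\mapsto\E[\theta(Y)\,|\,X=x]$, and converting that into the $\bmHy$-distance $\|\tilde g\circ\hat h-h^*_\psi\|_{\bmHy}$ appearing in the statement (only the first ${\cal V}$-component is perturbed, and the $\sqrt 2$ scaling inflates its error by a factor $\sqrt 2$) is precisely the step the paper's one-line proof leaves implicit. In short, the paper's route buys generality, since the identical argument covers every SELF loss; your elementary route buys explicitness and a sharper constant for this particular quadratic loss.
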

It is possible to apply the comparison inequality, since the considered loss function $\Delta(y,y') = \|\psi(y) - \psi(y')\|^2_{\bmHy}$ belongs to the wide family of SELF losses \citep{Ciliberto2016} for which the comparison inequality holds. A loss is SELF if it satisfies the {\em implicit embedding property} \citep{ciliberto2020general}, i.e. there exists an Hilbert space ${\cal V}$ and two feature maps $\gamma, \theta: {\cal Y} \to {\cal V}$ such that
$$\Delta(y,y') = \left\langle\gamma(y),\theta(y')\right\rangle_{\cal V}, \quad \forall ~ y,y' \in {\cal Y}.$$
In our case the construction is direct and corresponds to ${\cal V} = {\bmH_y} \oplus \R \oplus \R$, $\gamma(y) = (\sqrt{2} \psi(y), \|\psi(y)\|_{\bmHy}^2, 1)$ and $\theta(y') = (-\sqrt{2} \psi(y'), 1, \|\psi(y')\|_{\bmHy}^2)$.

Intuitively, the idea of output embedding learning, is to find a new embedding that provides an easier regression task while being still able to predict in the initial regression space. In our formulation this is possible due to introduction of $\tilde g$ and a suitable choice of $\bmG, \tilde \bmG$. In this construction, with the orthogonal decoding $ \tilde g: z \in \R^p \rightarrow \tilde G z \in \bmHy $ described in \ref{subsec:oel_linear}, the initial regression problem $\E[\|\tilde g \circ \hat h_g(X) - h^*_{\psi}(X) \|^2]$ decomposes into two parts:
\begin{equation}\label{eq:risk-decomposition}
\begin{split}
& \underbrace{\E_X[\|\tilde g \circ \hat h_{g}(X) - \tilde g \circ h^*_{g}(X) \|^2]}_{\text{simplified regression problem}}\\ &\qquad\qquad + \underbrace{\E_X[\|\tilde g \circ  h^*_{g}(X) - h^*_{\psi}(X) \|^2]}_{\text{reconstruction error}}
\end{split}
\end{equation}
In the case of KRR, and arbitrary set of encoding function $\bmG$ from $ \R^p $ to $\bmHy$, the left term expresses as KRR excess-risk on a linear subspace of $\bmH_y$ of dimension $p$ . For the right term, defining the covariance $M_c:\bmHy \to \bmHy$ for all $ c \in [0,1]$, 
% In the case of KRR, linear projections $ \tilde g: z \in \R^p \rightarrow \tilde G z \in \bmHy $, and arbitrary set of function $\bmG$ from $ \R^p $ to $\bmHy$, the left term expresses as KRR excess-risk on a linear subspace of $\bmH_y$ of dimension $p$ . For the right term, defining the covariance $M_c:\bmHy \to \bmHy$ for all $ c \in [0,1]$, 
\[M_{c} = c\E_{x}(h^*(x) \otimes h^*(x)) + (1-c) \E_{y}(\psi(y) \otimes \psi(y))\]
we have the following bound due to Jensen inequality,
\begin{restatable}{lemma}{upb}\label{jensen} Under the assumptions of Lemma~\ref{lm:comparison}, when $\tilde{g}$ is a linear projection, we have
\begin{align*}
    \E_x[\|\tilde g (h^*_{g}(x)) - h^*_{\psi}(x) \|^2] &\leq % c \times \E_{X} [\| G^*G h^*_{\psi}(X) -  h^*_{\psi}(X)\|^2_{\bmHy}] + (1- c) \times \E_Y [\| G^*G\psi(Y) - \psi(Y)\|^2_{\bmH_y}]\\
    %&= 
    \langle I - P,\, M_c \rangle_{\bmH_y \otimes \bmH_y}
\end{align*}
\end{restatable}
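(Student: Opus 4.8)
The plan is to reduce both sides to expectations of squared reconstruction norms under the projection $P := G^*G$ and then invoke Jensen's inequality. First I would record the structural facts from Section~\ref{subsec:oel_linear}: since $G \in \bmG_p$ satisfies $GG^* = Id_p$, the operator $P = G^*G$ is the orthogonal projection onto the $p$-dimensional range of $G^*$ (indeed $P^2 = G^*(GG^*)G = P$ and $P^* = P$). Because the squared loss is minimized by the conditional mean and $g = G\psi$ is linear, the optimal regressor for the learned embedding is $h^*_{g}(x) = \E_y[G\psi(y)\mid x] = G\,h^*_{\psi}(x)$, so that $\tilde g \circ h^*_{g}(x) = G^*G\,h^*_{\psi}(x) = P\,h^*_{\psi}(x)$. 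Hence the left-hand side equals $\E_x\|(I-P)h^*_{\psi}(x)\|^2_{\bmHy}$.

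Next I would expand the right-hand side. Writing $\langle\cdot,\cdot\rangle_{\bmHy\otimes\bmHy}$ for the Hilbert--Schmidt inner product and using $\langle I-P,\, v\otimes v\rangle_{\bmHy\otimes\bmHy} = \mathrm{Tr}\big((I-P)(v\otimes v)\big) = \langle v,(I-P)v\rangle_{\bmHy} = \|(I-P)v\|^2_{\bmHy}$ for any $v$ (the last equality because $I-P$ is a self-adjoint idempotent), linearity of the trace and of the expectation gives
\begin{equation*}
\langle I-P,\, M_c\rangle_{\bmHy\otimes\bmHy} = c\,\E_x\|(I-P)h^*_{\psi}(x)\|^2_{\bmHy} + (1-c)\,\E_y\|(I-P)\psi(y)\|^2_{\bmHy}.
\end{equation*}
The key observation is that the first, $c$-weighted, summand is exactly the left-hand side, so the claimed bound is equivalent to $(1-c)\,\E_x\|(I-P)h^*_{\psi}(x)\|^2 \le (1-c)\,\E_y\|(I-P)\psi(y)\|^2$, i.e.\ (for $c<1$; the case $c=1$ is an equality) to $\E_x\|(I-P)h^*_{\psi}(x)\|^2 \le \E_y\|(I-P)\psi(y)\|^2$.

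The remaining inequality is where Jensen enters, and it is the substantive step. Since $h^*_{\psi}(x) = \E_y[\psi(y)\mid x]$ and $I-P$ is linear and bounded, $(I-P)h^*_{\psi}(x) = \E_y[(I-P)\psi(y)\mid x]$; convexity of $\|\cdot\|^2_{\bmHy}$ then yields $\|(I-P)h^*_{\psi}(x)\|^2 \le \E_y[\|(I-P)\psi(y)\|^2\mid x]$ pointwise in $x$. Taking $\E_x$ and applying the tower rule collapses the conditioning, giving $\E_x\|(I-P)h^*_{\psi}(x)\|^2 \le \E_{x,y}\|(I-P)\psi(y)\|^2 = \E_y\|(I-P)\psi(y)\|^2$, which closes the argument. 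The only points requiring care are the justification that $I-P$ commutes with the conditional expectation (boundedness and linearity, together with Bochner integrability of $\psi(Y)$ guaranteed by $Q = \sup_y\|\psi(y)\|^2 < \infty$ inherited from Lemma~\ref{lm:comparison}) and the trace identity for the Hilbert--Schmidt pairing; everything else is bookkeeping.
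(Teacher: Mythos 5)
Your proof is correct and takes essentially the same route as the paper's: both reduce the left-hand side to $\E_x\|(I-P)h^*_{\psi}(x)\|^2$, split it as a $c$/$(1-c)$ convex combination, apply Jensen's inequality to the conditional expectation $h^*_{\psi}(x)=\E_{y|x}[\psi(y)]$ to dominate the $(1-c)$-weighted part by $\E_y\|(I-P)\psi(y)\|^2$, and identify the resulting sum with $\langle I-P,\,M_c\rangle_{\bmHy\otimes\bmHy}$ via the Hilbert--Schmidt pairing. The only differences are presentational: you argue backwards from the right-hand side and spell out details (idempotency of $P$, the trace identity, commutation of $I-P$ with the conditional expectation) that the paper leaves implicit.
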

The closer $ c $ is to $ 1 $, the tighter is the bound, but having $ c $ close to $ 0 $ could lead to a much easier learning objective. Relying on results on subspace learning in \cite{Rudi2013OnTS} and following their proofs, we bound this upper bound and get the Theorem~\ref{thm:excess-risk}. In particular, we use natural assumption on the spectral properties of the mixed covariance operator $ M_c $ as introduced in \cite{Rudi2013OnTS}. 

\noindent{\bf Assumption 1.}{\em There exist $\omega, \Omega > 0$ and $r > 1$ such that the eigendecomposition of the positive operator $M_c$ has the following form
\begin{align}\label{eq:eig-decay}
M_c = \sum_{j \in \N} \sigma_j u_j \otimes u_j, \quad \omega j^{-r} \leq \sigma_j \leq \Omega j^{-r}.
\end{align}
}
The assumption above controls the so called {\em degrees of freedom} of the learning problem. A fast decay of $\sigma_j$ can be interpreted as a problem that is well approximated by just learning the first few eigenvectors (see \cite{caponnetto2007optimal,Rudi2013OnTS} for more details).
To conclude, the first part of the r.h.s. of \eqref{eq:risk-decomposition} is further decomposed and then bounded via Lemma~18 of \cite{Ciliberto2016}, leading to Theorem~\ref{thm:excess-risk}. Before we need an assumption on the approximability of $h^*_\psi$. 

\noindent{\bf Assumption 2.}{\em 
The function $h^*_\psi$ satisfies $h^*_\psi \in \bmHy \otimes \bmHx $.
}

The assumption above where $\bmH_x$ is the RKHS associated to $k_x$, guarantees that $h^*_\psi$ is approximable by kernel ridge regression with the given choice of the kernel on the input. The kernel $K(x,x')= Id_{\bmHy} \kerx(x,x')$ satisfies this assumption.
%with the given choice of the kernel on the input.
Now we are ready to state the theorem.
%
% \begin{theorem}[Excess-risk bound, KRR + linear OEL]
% \label{thm:excess-risk}
% Let $ \rho $ be a distribution over $ \bmX \times \bmY $, $ \rho_y $ the marginal of $Y$, $ (x_i, y_i)_{i=1}^n $ be  i.i.d samples from $ \rho $, $ (y_i)_{i=1}^m $ i.i.d samples from $\rho_y$, $ \lambda \leq \kappa^2 := \sup\limits_{x \in \bmX} \|\kerx(x, .)\|_{\bmHx}, \delta > 0$, and $M_{c}$ and $h^*_\psi$ satisfy Assumption 1 and 2. When

% {\color{red} \bf to be updated, when the proof is corrected}
% \begin{equation}
% \label{eq:tradeoff-p}
% p^r \leq  \min\left\{\frac{\omega m}{9 (1-c) Q^2 \log\tfrac{m}{\delta}},\frac{\omega n\lambda^2}{9 c Q^2 \kappa^2 \log(\frac{n}{\delta})},\frac{\omega}{2 c Q\left(1 + \frac{\kappa}{\lambda}\right)\sqrt{\E_x\|\hat h_g(x) - h_\psi^*(x)\|^2} }\right\},
% \end{equation}
% then the following holds with probability at least $1 - \delta$,
% %
% $$\mathcal{R}(\widehat{f}) - \mathcal{R}(f^*) ~~\leq~~  c(\psi) \underbrace{C \frac{Q + R}{\sqrt{\lambda n}}\log^2\tfrac{10}{\delta} + R \sqrt{\lambda}}_{\text{KRR risk on p-dimensional subspace}}  ~~+~~ c(\psi)\underbrace{\sqrt{\frac{\Omega'}{p^{(r-1)/2}}}}_{\text{Reconstruction error}},$$
% where $Q= \sup_{y \in \bmY} \|k(y, .)\|_{\bmH_y}$, $R = \|h^*_\psi\|_{\bmHx \times \bmHy}$,~$C = 4\kappa(1 + (4\kappa^2/\sqrt{\lambda n})^{1/2})$ and $\Omega' = 3\sqrt{3} \Omega^{1/r}(c(1 - 1/r)c(1 + 1/r)c(1/r))^{1/2}$. Note that $C \leq 8\kappa$ when $\lambda \geq 16\kappa^4/n$ .
% \end{theorem}
\begin{restatable}[Excess-risk bound, KRR + OEL]{theorem}{thmer}
\label{thm:excess-risk}
Let $ \rho $ be a distribution over $ \bmX \times \bmY $, $ \rho_y $ the marginal of $Y$, $ (x_i, y_i)_{i=1}^n $ be  i.i.d samples from $ \rho $, $ (y_i)_{i=1}^m $ i.i.d samples from $\rho_y$, $ \lambda \leq \kappa^2 := \sup\limits_{x \in \bmX} \|\kerx(x, .)\|_{\bmHx}, \delta > 0$, and $M_{c}$ and $h^*_\psi$ satisfy Assumption 1 and 2. When
\begin{equation}
\label{eq:tradeoff-p}
p^r \leq  \min\left\{\frac{\omega m}{9 (1-c) \log(m/\delta)},\frac{\omega}{8 c t_n}\right\},
\end{equation}
then the following holds with probability at least $1 - 4\delta$,
\begin{equation*}
\begin{split}
    &\sqrt{\E_x(\|\tilde g \circ \hat h_g(x) - h^*_{\psi}(x) \|^2) } \leq \\&\underbrace{C \frac{Q + R}{\sqrt{\lambda n}}\log^2\tfrac{10}{\delta} + R \sqrt{\lambda}}_{\text{KRR on subspace of dim. p}} \qquad + \underbrace{\sqrt{\frac{\Omega'}{p^{r-1}}}}_{\text{Reconstruction error}}
\end{split}
\end{equation*}
where $Q= \sup_{y \in \bmY} \|k(y, .)\|_{\bmHy}$, $R = \|h^*_\psi\|_{\bmHx \times \bmHy}$,~$C = 4\kappa(1 + (4\kappa^2/\sqrt{\lambda n})^{1/2})$,~ $t_n = 5 \max\left(4c^2v_n, 2c v_n, 4cRw_n u_n, 2c w_n^2u_n, 2R^2 u_n\right)$, with $u_n = \frac{4\kappa^2 \log\frac{2}{\delta}}{\sqrt{n}}, v_n = \E_x(\|\hat h_{\psi}(x) - h^*_\psi\|^2)$, $w_n=(\frac{Q}{\lambda \kappa} u_n (1+R) + \lambda R)$. Finally $\Omega' = \Omega q_r$, $q_r$ a constant depending only on $r$ (defined in the proof).
\end{restatable}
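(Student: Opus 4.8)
The plan is to bound the square‑rooted $L^2$ error by controlling separately the two terms of the decomposition in \eqref{eq:risk-decomposition} and then invoking subadditivity of the square root, $\sqrt{a+b}\le\sqrt a+\sqrt b$, to produce the two summands of the stated bound. A preliminary remark is that \eqref{eq:risk-decomposition} is in fact an equality of squared norms: writing $\hat P=\hat G_p^*\hat G_p$, the residual $\tilde g\circ\hat h_g(x)-\tilde g\circ h^*_g(x)=\hat G_p^*(\hat h_g(x)-h^*_g(x))$ lies in $\mathrm{range}(\hat G_p^*)=\hat P\,\bmH_y$, whereas $\tilde g\circ h^*_g(x)-h^*_\psi(x)=-(I-\hat P)h^*_\psi(x)$ lies in the orthogonal complement $(I-\hat P)\bmH_y$ (using $h^*_g(x)=\hat G_p h^*_\psi(x)$ by linearity of the conditional expectation), so the cross term vanishes by Pythagoras. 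It therefore suffices to bound each piece and add the roots.

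For the simplified regression term I would exploit that $\hat G_p\hat G_p^*=I_p$, so $\tilde g=\hat G_p^*$ is an isometry and $\|\hat G_p\|_{op}=1$; hence $\E_x[\|\tilde g(\hat h_g(x)-h^*_g(x))\|^2]=\E_x[\|\hat G_p(\hhatpsi(x)-h^*_\psi(x))\|^2]\le \E_x[\|\hhatpsi(x)-h^*_\psi(x)\|^2]=v_n$, the full kernel ridge regression excess risk. Under Assumption 2 this is controlled by the standard KRR analysis (Lemma~18 of \citet{Ciliberto2016}, of Caponnetto–De Vito type), which delivers the first summand $C\frac{Q+R}{\sqrt{\lambda n}}\log^2\frac{10}{\delta}+R\sqrt\lambda$ on a high‑probability event.

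For the reconstruction term I would apply Lemma~\ref{jensen} to replace $\E_x[\|\tilde g\circ h^*_g(x)-h^*_\psi(x)\|^2]$ by $\langle I-\hat P,\,M_c\rangle$. The crux is that $\hat P$ is the top‑$p$ eigenprojection not of $M_c$ but of the empirical mixed covariance $\hat M_c$ assembled from the $n$ KRR‑reconstructed points $\hhatpsi(x_i)$ and the $m$ output samples $\psi(y_j)$. I would thus bound $\langle I-\hat P,\,M_c\rangle$ by a constant multiple of the population‑optimal tail $\sum_{j>p}\sigma_j\le\Omega\sum_{j>p}j^{-r}\le q_r\Omega\,p^{-(r-1)}$ using the subspace‑learning perturbation results of \citet{Rudi2013OnTS}: under the eigendecay of Assumption 1, once $p$ is small enough that the eigengap around $\sigma_p$ dominates the covariance perturbation $\|\hat M_c-M_c\|$, the empirical eigenprojection is near‑optimal, giving the second summand $\sqrt{\Omega'/p^{r-1}}$ with $\Omega'=\Omega q_r$.

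The main obstacle is controlling $\|\hat M_c-M_c\|$ and converting it, through the eigengap, into the tradeoff condition \eqref{eq:tradeoff-p}. Two independent perturbation sources must be tracked. The $(1-c)$ block concentrates to $(1-c)\E_y[\psi\otimes\psi]$ at rate $\sqrt{\log(m/\delta)/m}$ via a Hilbert‑space Bernstein bound on $m$ i.i.d.\ bounded vectors, which produces the first entry of the minimum in \eqref{eq:tradeoff-p}. The $c$ block must absorb both the sampling fluctuation of the $n$ inputs and the substitution of $\hhatpsi(x_i)$ for $h^*_\psi(x_i)$, so its deviation is expressed through $v_n$ together with the auxiliary quantities $u_n,w_n$ collected in $t_n$, yielding the second entry. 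Requiring $p^r$ below the minimum of the two guarantees the eigengap beats the perturbation, at which point the \citet{Rudi2013OnTS} bound applies; a union bound over the regression event and the two concentration events then gives the claim with probability at least $1-4\delta$.
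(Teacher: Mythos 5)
Your proposal reproduces the architecture of the paper's own proof: the same Pythagorean split of $\E_x\|\hat P\hhatpsi(x)-h^*_\psi(x)\|^2$ into the projected KRR error and $\E_x\|(I-\hat P)h^*_\psi(x)\|^2$ (the paper writes this equality directly; your justification via $\hat P(I-\hat P)=0$ is the right one), the contraction $\|\hat P\|_{\mathrm{op}}\le 1$ followed by the KRR bound of Lemma~18 in \citet{Ciliberto2016} for the first term, Lemma~\ref{jensen} followed by the subspace-learning machinery of \citet{Rudi2013OnTS} for the second, the same identification of the two perturbation sources (the $(1-c)$ block concentrating at rate $\log(m/\delta)/m$, and the $c$ block requiring $u_n,v_n,w_n$ because $N$ is only accessible through the KRR plug-in $N_n=H_nC_nH_n^*$, cf.\ Lemmas~\ref{lemma:cov}, \ref{lemma:Hn}, \ref{lemma:HnCnHn}), and the same union bound. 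There is, however, one genuine gap: the mechanism you describe for the subspace step would not deliver the stated condition \eqref{eq:tradeoff-p}. You propose to control the absolute perturbation $\|\hat M_c-M_c\|$ and require it to be dominated by ``the eigengap around $\sigma_p$,'' i.e.\ a Davis--Kahan-type argument. Assumption~1 only constrains $\omega j^{-r}\le\sigma_j\le\Omega j^{-r}$, so the gap $\sigma_p-\sigma_{p+1}$ may be as small as zero (consecutive eigenvalues may coincide when $\omega<\Omega$), and even in the generic case $\sigma_j=j^{-r}$ it is of order $p^{-r-1}$, not $p^{-r}$. A literal eigengap argument therefore either fails outright or forces a condition of the form $p^{r+1}\lesssim\omega/t_{\min}$, strictly stronger than \eqref{eq:tradeoff-p}, which would in turn degrade the exponent obtained in Corollary~\ref{coro}.

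The paper (following \citet{Rudi2013OnTS}, see Lemmas~\ref{set} and \ref{mainlemma}) avoids eigengaps entirely: it never bounds $\|\hat M_c-M_c\|$ in absolute norm, but instead controls the \emph{relative} perturbation at a regularization scale $t$, namely $\|(M_c+tI)^{1/2}(\hat M_c+tI)^{-1/2}\|_\infty\le 2$, inside the factorization $\|(I-\hat P)M_c^{1/2}\|_{HS}\le\mathcal{A}\cdot\mathcal{B}\cdot\mathcal{C}$ with $\mathcal{B}=(\sigma_p(\hat M_c)+t)^{1/2}$ and $\mathcal{C}=\|(M_c+tI)^{-1/2}M_c^{1/2}\|_{HS}\le\Omega' t^{-1/(2r)}$. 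The only requirement is that $t$ exceed the noise level $t_{\min}=\max((1-c)t_1,ct_2)$ while one is free to choose $t=\sigma_p(M_c)\ge\omega p^{-r}$; this is exactly what produces \eqref{eq:tradeoff-p} with exponent $r$ rather than $r+1$. Your concentration bounds for the two blocks are the right ingredients, but they must be deployed in this weighted form, e.g.\ $\|(cHCH^*+tI)^{-1/2}\,c(H_nC_nH_n^*-HCH^*)\,(cHCH^*+tI)^{-1/2}\|_\infty\le\tfrac12$ as in Lemma~\ref{lemma:HnCnHn}, rather than as absolute-norm-versus-eigengap comparisons, for the theorem as stated to follow.
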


The first term in the above bound is the usual bias-variance trade-off that we expect from KRR (c.f. \cite{caponnetto2007optimal}). The second term is an approximation error due to the projection. 

%The resulting bound is constituted by two terms. The first is the usual bias-variance trade-off that we expect from KRR (compare with the result, under the same assumptions of \cite{caponnetto2007optimal}). The second term is an approximation error due to the projection. 
We further see a trade-off in the choice of $c$ when we try to maximize $p$. Choosing $c$ close to one aims to estimate the linear subspace of the $ h^*(x) = \E_{y|x}(\psi(y))$ which is smaller than the one of the $ \psi(y) $ leading to a better eigenvalues decay rate, but the learning is limited by the convergence of the least-square estimator as is clear by the term $v_n$ in Eq.~\eqref{eq:tradeoff-p} (via $ t_n $). Choosing $c$ close to zero leads to completely unsupervised learning of the output embeddings with a smaller eigenvalue decay rate $r$.

%Analyzing in more detail the contributions of the $p$ we see a trade-off in the choice of $c$ when we try to maximize $p$. Choosing $c$ close to one aims to estimate the linear subspace of the $ h^*(x) = \E_{y|x}(\psi(y))$ which is smaller than the one of the $ \psi(y) $ leading to a better eigenvalues decay rate, but the learning is limited by the convergence of the least-square estimator as clear by the third term of Eq.~\eqref{eq:tradeoff-p}. At the opposite, choosing $c$ close to zero leads to completely unsupervised learning of the output embeddings with a smaller eigenvalue decay rate $r$.
In the following corollary, we give a simplified version of the bound (we denote by $a(n) \gtrsim b(n)$ the fact that there exists $C >0$ such that $a(n) \geq C b(n)$ for $n \in \mathbb{N}$).
\begin{restatable}{corollary}{coro}
\label{coro}
Under the same assumptions of Theorem~\ref{thm:excess-risk}, let $\lambda = 1 / \sqrt{n}$. Then, running the proposed algorithm with a number of components 
\[p \gtrsim n^\frac{1}{2r},\] 
is enough to achieve an excess-risk of $\mathcal{R}(\widehat{f}) - \mathcal{R}(f^*)  = O(n^{-(1-\frac{1}{r})/4})$.
\end{restatable}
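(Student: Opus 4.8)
The plan is to derive the corollary as a direct consequence of Theorem~\ref{thm:excess-risk} by combining the comparison inequality of Lemma~\ref{lm:comparison} with a careful choice of the free parameters $\lambda$, $p$, and $c$. First I would fix $\lambda = 1/\sqrt{n}$ as prescribed, and substitute this into the KRR term of the bound. The term $C \frac{Q+R}{\sqrt{\lambda n}}\log^2\tfrac{10}{\delta} + R\sqrt{\lambda}$ then becomes, up to constants and logarithmic factors, of order $\frac{1}{\sqrt{\lambda n}} + \sqrt{\lambda} = \frac{1}{n^{1/4}} + \frac{1}{n^{1/4}} = O(n^{-1/4})$, since $\sqrt{\lambda n} = \sqrt{\sqrt{n}\cdot n} = n^{3/4}$ gives $1/\sqrt{\lambda n} = n^{-3/4}$ which is dominated, and $\sqrt{\lambda} = n^{-1/4}$ is the leading KRR contribution. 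So the regression part contributes $O(n^{-1/4})$.

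Next I would balance the reconstruction error term $\sqrt{\Omega'/p^{r-1}}$ against this rate. With the choice $p \gtrsim n^{1/(2r)}$, we get $p^{r-1} \gtrsim n^{(r-1)/(2r)} = n^{(1-1/r)/2}$, so $\sqrt{\Omega'/p^{r-1}} = O(n^{-(1-1/r)/4})$. The two contributions to $\sqrt{\E_x\|\tilde g \circ \hat h_g(x) - h^*_\psi(x)\|^2}$ are therefore $O(n^{-1/4})$ and $O(n^{-(1-1/r)/4})$; since $r>1$ implies $1-1/r < 1$, the reconstruction term dominates, yielding an overall rate of $O(n^{-(1-1/r)/4})$ for the square-root of the $L^2$ distance. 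Finally, Lemma~\ref{lm:comparison} transfers this directly to the excess risk: $\mathcal{R}(\widehat f) - \mathcal{R}(f^*) \leq c(\psi)\sqrt{\E_x\|\tilde g \circ \hat h_g(x) - h^*_\psi(x)\|^2} = O(n^{-(1-1/r)/4})$, with $c(\psi)$ an absolute constant.

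The one point requiring genuine care, rather than mere substitution, is verifying that the choice $p \gtrsim n^{1/(2r)}$ is admissible, i.e.\ that it respects the feasibility constraint \eqref{eq:tradeoff-p} on $p$ for $n$ (and $m$) large enough. I would need to check that $p^r \lesssim n^{1/2}$ stays below both $\frac{\omega m}{9(1-c)\log(m/\delta)}$ and $\frac{\omega}{8c\, t_n}$. The second branch is the delicate one, since $t_n$ depends on $v_n = \E_x\|\hat h_\psi(x) - h^*_\psi\|^2$, the KRR excess risk, which with $\lambda = 1/\sqrt n$ is itself $O(n^{-1/2})$ up to logs; hence $1/t_n$ grows at least polynomially in $n$, and one must confirm that $p^r \asymp n^{1/2}$ does not outpace this growth (choosing $c$ appropriately, e.g.\ bounded away from zero but accommodating the $v_n$ decay). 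Provided $m$ scales at least comparably to $n$, the unlabeled branch poses no obstruction. I expect the main obstacle to be this bookkeeping: confirming that the asymptotically chosen $p$ remains feasible under the constraint while simultaneously achieving the claimed balance of the two error terms, so that the stated high-probability guarantee holds for all sufficiently large $n$.
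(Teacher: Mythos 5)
Your proposal is correct and follows essentially the same route as the paper's proof: fix $\lambda = 1/\sqrt{n}$, verify via the rates $u_n, v_n = O(n^{-1/2})$ and $w_n = O(1)$ that $t_n = O(n^{-1/2})$, so that $p \asymp n^{1/(2r)}$ satisfies the feasibility constraint \eqref{eq:tradeoff-p} provided $m \gtrsim n$, then inject this $p$ into the bound of Theorem~\ref{thm:excess-risk}, where the reconstruction term $O(n^{-(1-1/r)/4})$ dominates the KRR term $O(n^{-1/4})$, and transfer to the excess risk via Lemma~\ref{lm:comparison}. One small slip to fix: with $\lambda = 1/\sqrt{n}$ one has $\lambda n = n^{1/2}$, hence $1/\sqrt{\lambda n} = n^{-1/4}$ (not $n^{-3/4}$ as you write in your aside), but this does not affect the conclusion since the KRR contribution is $O(n^{-1/4})$ either way.
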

Note that $n^{-1/4}$ is the typical rate for problems of structured prediction without further assumptions on the problem \cite{Ciliberto2016,ciliberto2020general}. Here it is interesting to note that the more regular the problem is, the more $\frac{r-1}{r} \approx 1$, and we can almost achieve $n^{-1/4}$ rate with a number of components $p \ll n$, in particular $p = n^{1/2r}$, leading to a relevant improvement in terms of computational complexity. The decoding time complexity for one test point reduces from $\bmO(n |\bmY|)$ to $\bmO(n^{1/2r}|\bmY|)$.

Finally we note that while from an approximation viewpoint the largest $p$ would lead to better results, there is still a trade-off with the computational aspects, since an increased $p$ leads to greater computational complexity. Moreover, we expect to find a more subtle trade-off between the KRR error and the approximation error due to projection, since reducing the dimensionality of the output space have a beneficial impact on the degrees of freedom of the problem. We observed this effect from an experimental viewpoint and we expect to observe using a more refined analysis, that we leave as future work. We want to conclude with a remark on why projecting on $M_c$ with $c \in (0,1)$ should be more beneficial than just projecting on the subspace spanned by the covariance operator associated to $\psi(y)$. 

\begin{remark}[Supervised learning benefit]
Learning the embedding in a supervised fashion is interesting as the principal components of the $ \psi(y) $ may differ from that of the $h_{\psi}^*(x)$. The most basic example: $x \in \R, \psi(y) \in \R^2, x \sim \mathcal{N}(0, \sigma_x^2)$, with the relationship between $ x $ and $\psi(y) = (x,z)$ where $z \sim \mathcal{N}(0, \sigma_z^2)$ independent from $x$ with $\sigma_z^2 > \sigma_x^2$.
In this case unsupervised learning is not able to find the $ 1$-dimensional subspace where the $h^*_{\psi}(x)$ lie, whereas, supervised learning could do so, assuming $ \hat h_{\psi} $ is a good estimation of $h^*_{\psi}$. From this elementary example we could build more complex and realistic ones, by adding any kind of non-isotropic noise $ z $ that change the shape of the $h_{\psi}^*(x)$ subspace. For instance, defining eigenvalue decay of $h_{\psi}^*(x)$ and $ z \in \bmHy $ with $ z $ lower eigenvalue decay, then for $p$ sufficiently large the order of principal components start to change (comparing $h_{\psi}^*(x)$ and $\psi(y) := h_{\psi}^*(x) + z$).
\end{remark}

% PCA is a particular case of this theorem with $\gamma = 0$. Hence, we have the following corollary.

% \begin{corollary}[Excess-risk bound, PCA]

% Let $ \rho $ be a distribution over $ \bmX \times \bmY $, $ \rho_y $ the marginal of $Y$, $ (x_i, y_i)_{i=1}^n $ be  i.i.d samples from $ \rho $, $ (y_i)_{i=1}^m $ i.i.d samples from $\rho_y$, $ \lambda \leq \kappa^2, \delta > 0$, and assuming that $\E_{y}(\psi(y) \otimes \psi(y))$ have eigenvalue decay rate of order $ r $ (i.e. $ \exists \; q, Q > 0, \quad qj^{-r} \leq \sigma_j \leq Qj^{-r}$, where $ \sigma_j $ are the (decreasingly ordered) eigenvalues and $ r > 1$), it is uniformly in $p \in {1, \dots, n}$, with probability $1 - \delta$

% $$\sqrt{\E_x(\|\tilde g \circ \hat h_g(x) - h^*_{\psi}(x) \|^2) } \leq 4 \kappa \frac{Q(p) + \mathcal{B}(\lambda, p )}{\sqrt{\lambda n}}\left(1 + \sqrt{\frac{4\kappa^2}{\sqrt{\lambda n}}}\right)\log^2(10/\delta) + \mathcal{A}(\lambda, p)  + \sqrt{\frac{Q'}{p^{(r-1)/2}}} $$

% if $p<p_n^*$, where it is $p_m = \left( \frac{qm}{9 Q^2 \log(m/\delta)}\right)^{1/r}$, and $Q' = 3\sqrt{3} Q^{1/r}(\Gamma(1 - 1/r)\Gamma(1 + 1/r)\Gamma(1/r))^{1/2}$.

% \end{corollary}

\section{EXPERIMENTS}\label{sec:exp}
% 4-experiments

%Embedding methods obtain good results in a variety of structured prediction problems. 
OEL is empirically studied on image reconstruction, multilabel classification, and labeled graph prediction. More details about experiments (hyperparameter selection, dataset splitting, competitors, computational time) are provided in the Supplements, together with an exhaustive study on label ranking.
\begin{table}[!ht]
  \centering
  \begin{tabular}{lll}
    \toprule
    Notation & $c$ & $m$ \\
    \midrule
     OEL$_0$ & 1 & 0 \\
     OEL  & $\in [0,1]$ & $\ge 0$\\
    \bottomrule
  \end{tabular}
\caption{OEL notations}
\label{notations}
\end{table}
We consider two variants of our method using the notations defined in Table \ref{notations} to explicitly mention when we provide $m >  0$ additional output data (OEL) or when we use neither the reconstruction error nor unlabeled data (OEL$_0$).

%For each dataset/task, we compare with the state-of-the-art algorithms. 
% We used  the following notations to refer to variants of our method:\\
% $OEL^0$: ($\gamma=0$) only reconstruction loss is used to learn $G$ with data coming only from  $\bmS_n$\\
% $OEL^1$: ($\gamma=1/2$) $G$ is learned using both criteria (regression  and reconstruction) also using $\bmS_n$\\
% $OEL^c$~with~$\bmU_m$: $\hat{c} \in [0,1]$ is  selected by inner CV, and additional unlabeled dataset $\bmU_m$ is leveraged to learn $G$.

\subsection{Image reconstruction}

%\textbf{Problem and dataset.} 
%We considered the following image reconstruction problem in \citet{weston2003kernel}: given the top half (8 x 16 pixels) of a USPS handwritten postal digit (16 x 16 pixels), the goal is to estimate a visually plausible bottom half (8 x 16 pixels). The dataset contains 7291 train and 2007 test images. Good results have been obtained (cf. \citet{weston2003kernel}) using a kernel ridge regression combined with a Gaussian output kernel on this structured prediction task where the output space is typically non-linear. 
In the image reconstruction problem provided by \citet{weston2003kernel}, the goal is to predict the bottom half of a USPS handwritten postal digit (16 x 16 pixels), given its top half. The dataset contains $7291$ labeled images and $2007$ test images. We split the $7291$ training data $(x_i, y_i)$ in $1000$ couples $(x_i, y_i)_{i=1}^{n}$ and $6000$ alone outputs $(y_i)_{i=1}^{m}$ in order to evaluate the impact of additional unexploited unsupervised data. For all tested methods, we used all the 7000 output training data as candidate set for the decoding. \paragraph{Role of $\lambda$ and $p$.} In Fig. \ref{usps_mse} we present  for a fixed value of $c=0.15$\footnote{value selected using a 5 repeated random sub-sampling validation (80\%/20\%)}, the behaviour of OEL in terms of Mean Squared Error of $G^*G\hhatpsi$ ~w.r.t. $\lambda$ and $p$, compared to IOKR (chosen as the baseline). We observe that the minimum of all measured MSEs is attained by OEL, highlighting the interest of finding a good trade-off between the two  regularization parameters. 
%It also shows that, for a good choice of the parameters, OEL reduces the MSE by 0.01329 compared to IOKR.
\begin{figure}[h!]
    \centering
    \includegraphics[width=75mm]{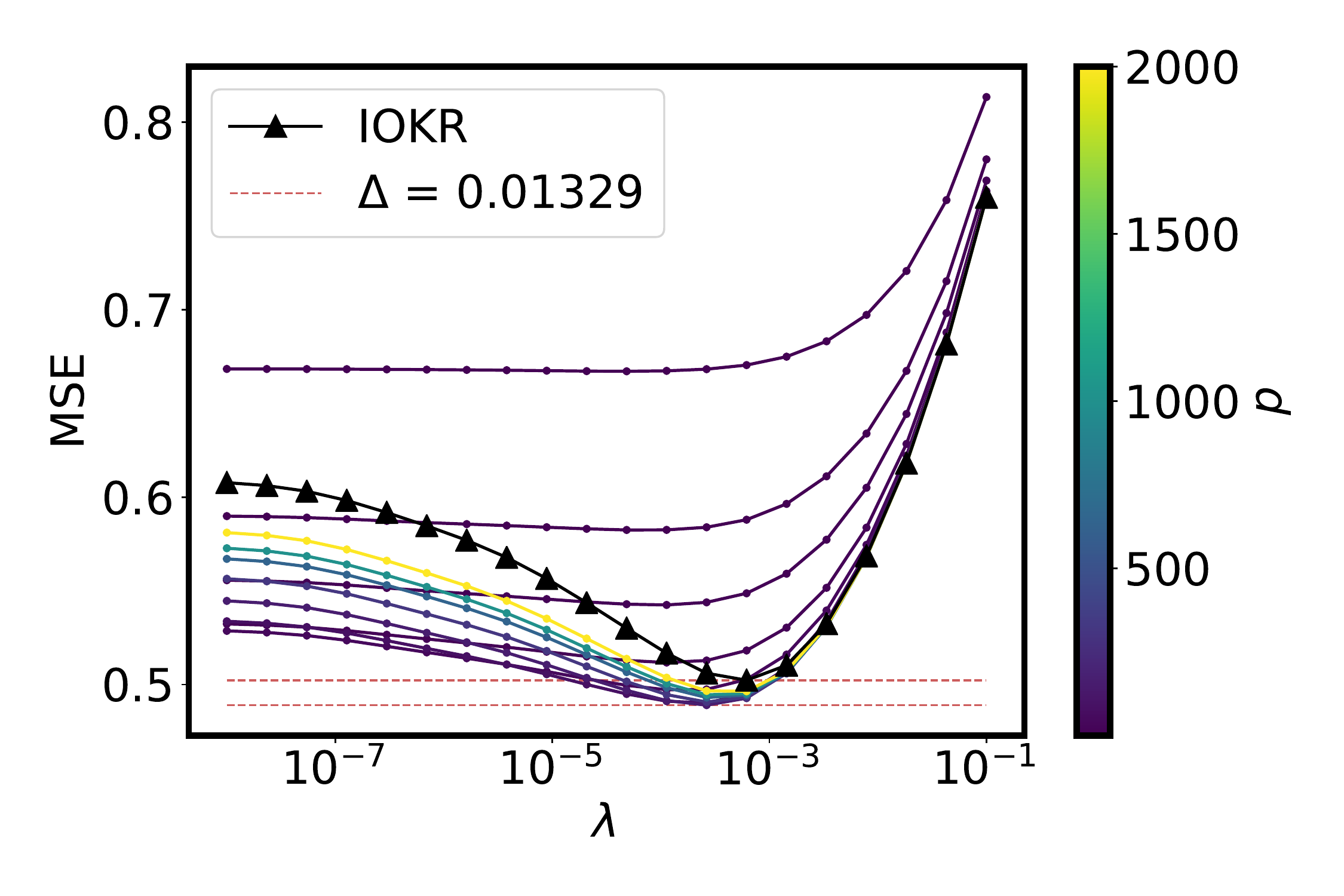}
    \caption{Test MSE of OEL w.r.t $ \lambda $ and $ p$ compared to IOKR (black plot line) on the USPS problem.}\label{usps_mse}
\end{figure}
\paragraph{Comparison with SOTA methods.} We then compared OEL to state-of-the-art methods: SPEN \citep{belanger2016}, IOKR \citep{brouard2016input}, and Kernel Dependency Estimation (KDE) \citep{weston2003kernel}. For SPEN usually exploited for multi-label classification, we employed the standard architecture and training method described in the corresponding article (cf. supplements for more details). The hyper-parameters for all methods (including $\lambda, p, c$ for OEL, and SPEN layers' sizes) have been selected using 5 repeated random sub-sampling validation (80\%/20\%). We evaluated the results in term of RBF loss (e.g. Gaussian kernel loss), the relevance of this loss on this problem has been shown in \citet{weston2003kernel}.
The obtained results are given in Table \ref{usps_table1}. Firstly, we see that SPEN obtains worse results than KDE, IOKR, and OEL. Indeed, the problem dimensions is typically favourable for kernel methods: a small dataset ($n=1000$) but a quite complex task with big input/output dimensions ($d=128$). Despite here the artificial split of the dataset for method analysis purpose, we claim that this is a classic situation in structured prediction (cf. real-world problem of metabolite identification below). Furthermore, note that the number of hyperparameters for SPEN (architecture and optimization) is usually larger than OEL. Secondly, we see that OEL$_0$, without additional data, obtains improved results in comparison to IOKR and KDE. When adding the $m=6000$ output data, OEL further improves the results. This shows the relevance of the learned embeddings, which can take advantage both of a low-rank assumption, and unexploited unsupervised output data.

\begin{table}[ht!]
    \centering
    \begin{tabular}{lll}
    \toprule
    Method     &  RBF loss & p \\
    \midrule
    SPEN  & 0.801 $\pm$ 0.011 & 128\\
    KDE  & 0.764 $\pm$ 0.011& 64\\
    IOKR & 0.751 $\pm$ 0.011 & $\infty$\\
    OEL$_0$ & 0.734 $\pm$ 0.011& 64 \\
    OEL & {\bf 0.725 $\pm$ 0.011}&98\\
    \bottomrule
    \end{tabular}
    \caption{Test mean losses and standard errors for IOKR , OEL, KDE , and SPEN  on the USPS digits reconstruction problem where $n/m = 1000/6000$.}\label{usps_table1}
\end{table}

\subsection{Multi-label classification}
In this subsection we evaluate the performances of OEL on different benchmark multi-label datasets described in Table~\ref{multilabel_dataset_1}.
\begin{table}[!ht]
  \centering
  \begin{tabular}{llllll}
    \toprule
    Dataset & $n$ & $n_{te}$ & $n_{features}$ & $n_{labels}$\\
    \midrule
     Bibtex & 4880 & 2515 & 1836 & 159 \\
     Bookmarks & 60000 & 27856 & 2150 & 208 \\
     Corel5k & 4500 & 499 & 37152 & 260 \\
    \bottomrule
  \end{tabular}
\caption{Multi-label datasets description.}
\label{multilabel_dataset_1}
\end{table}

\paragraph{Small training data regime.} In a first experiment we compared OEL with IOKR in a setting where only a small number of training examples is known and unsupervised output data are available. For this setting, we split the multi-label datasets using a smaller training set and using the rest of the examples as unsupervised output data. For IOKR and OEL, we used Gaussian kernels for both input and output. For OEL, hyper-parameters have been selected using 5 repeated random sub-sampling validation (80\%/20\%) and the same hyper-parameters were used for IOKR due to expensive computation. 
The results of this comparison are given in Table~\ref{tab:multilabel_small}. We observe that OEL$_0$ obtains higher $F_1$ scores than IOKR in this setup. Using additional unsupervised data provides further improvement in the case of the Bookmarks and Corel5k datasets. This highlights the interest of OEL when typically the supervised dataset is small in comparison to the difficulty of the task, and unexploited output data are available.

\begin{table}[!ht]
  \centering
    \begin{tabular}{llll}
    \toprule
    &  Bibtex & Bookmarks & Corel5k \\
    \midrule
 $n$ & $2000$ & $2000$ & $2000$\\
  $m$ & $2880$ & $4000$ & $2500$\\
  $n_{te}$ & $2515$ & $2500$ & $499$\\
    \midrule
    IOKR &  35.9 & 22.9 & 13.7  \\
    OEL$_0$ & {\bf 39.7} & 25.9 & 16.1 \\
    OEL & {\bf 39.7} & {\bf 27.1} & {\bf 19.0}\\
    \bottomrule
  \end{tabular}
    \caption{Test $F_1$ score of OEL and IOKR on different multi-label problems in a small training data regime.}
    \label{tab:multilabel_small}
\end{table}

We further show the impact of additional unsupervised data on the Bookmarks dataset by training the KRR with only $n = 2000$ data, and training OEL with various numbers of unexploited data from $0$ to $50000$ randomly selected. Figure \ref{fig:bookmark_unexp} shows that adding unsupervised output data through the right term of Equation \eqref{eq:linear-emp-3} allows to improve the results up to a certain level.

\begin{figure}[htb!]
    \centering
    \includegraphics[width=67mm]{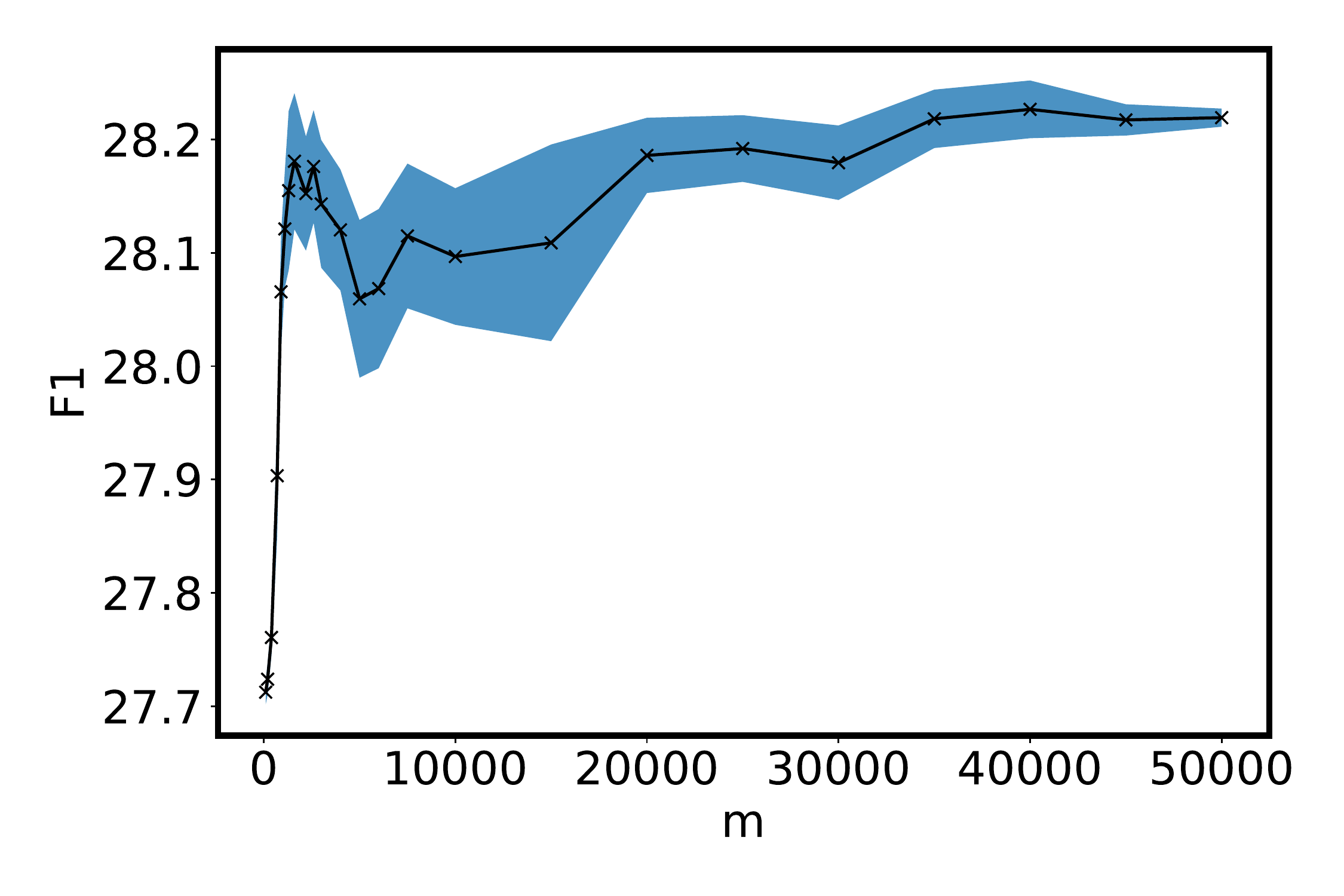}
    \caption{Test F1 of OEL on Bookmarks dataset with $n/n_{te}=2000/27856$ w.r.t the quantity of randomly selected unsupervised data $m \in [0, 50000]$ used.}\label{fig:bookmark_unexp}
\end{figure}

%\textbf{Experimental setting.} For all multi-label experiments we used a Gaussian output kernel with widths $\sigma^2_{output} = \frac{1}{\bar l}$, where $\bar l$ is the averaged number of labels per point.  As candidate sets we used all the training output data. We measured the quality of predictions using example-based F1 score.
%as in \cite{gygli2017,belanger2016,lin2014}. This allowed us to compare IOKR with other multi-label methods on the Bibtex dataset using results from the previously mentioned articles. 
%Moreover, in a second experiment we compared IOKR method and IOKR with OEL on several benchmark multi-label datasets by splitting the training sets in smaller training sets and using the rest as unsupervised sets. 
%Bibtex $n_{tr}/n_{te}=4880/2515 \rightarrow n_s/n_u/n_{te}=2000/2880/2515$, Corel5k $n_{tr}/n_{te}=4500/499 \rightarrow n_s/n_u/n_{te}=2000/2500/499$, Bookmarks $n_{tr}=60000/n_{te}=27856 \rightarrow n_s/n_u/n_{te}=2000/4000/2500$.
%\textbf{Competitors.}
%\textbf{Results.} 
%In Table~\ref{tab:bibtex} we compared IOKR and IOKR + OEL with state-of-the-art methods on the Bibtex dataset $n_{tr}/n_{te}=4880/2515$.

\paragraph{Learning without unlabeled data $m=0$.} In a second experiment we considered the case where all training data are used for training OEL and no unexploited data are available to use as unsupervised data. This allows us to compare OEL$_0$ with several multi-label and structured prediction approaches including IOKR \citep{brouard2016input}, logistic regression (LR) trained independently for each label \citep{lin2014}, a two-layer neural network with cross entropy loss (NN) by \citep{belanger2016}, the multi-label approach PRLR (Posterior-Regularized Low-Rank) \citep{lin2014}, the energy-based model SPEN (Structured Prediction Energy Networks) \citep{belanger2016} as well as DVN (Deep Value Networks) \citep{gygli2017}. 
The results in Table~\ref{tab:multilabel_comparison} show that OEL$_0$ can compete with state-of-the-art dedicated multilabel methods on the standard datasets Bibtex and Bookmarks. With Bookmarks ($n/n_{te}=60000/27856$) we used a Nyström approximation with 15000 anchors for IOKR and OEL$_0$ to reduce the training complexity. In order to alleviate the decoding complexity, 
%we exploited the fact that $\hat{h}_{\psi}$ and $\hat{G}$ can be trained
we learned OEL$_0$ with a subset of the training data used for $\hat{h}_\psi$, containing only $12000$ training datapoints: IOKR decoding took about 56 minutes, and OEL$_0$ decoding less than 4 minutes. With a drastically smaller amount of time, OEL$_0$ achieves the same order of magnitude of $F_1$ as IOKR at a lower cost and still has better performance than all other competitors.
% In a second experiment (Table~\ref{tab:multilabel_small}) we trained OEL by splitting the datasets in a smaller training set and using the rest of the examples as unsupervised output data. We observe that OEL obtains significantly higher $F_1$ scores than IOKR in this setup, showing the positive impact of using unexploited output data.

\begin{table}[!ht]
  \centering
  \begin{tabular}{lll}
    \toprule
    Method     &  Bibtex & Bookmarks \\
    \midrule
    IOKR & 44.0 & {\bf39.3}\\
    OEL$_0$  & 43.8  & 39.1\\
    LR  & 37.2 & 30.7\\
    NN   & 38.9 & 33.8\\
    SPEN & 42.2 & 34.4\\
    PRLR & 44.2 & 34.9\\
    DVN  & {\bf 44.7} & 37.1\\
    \bottomrule
  \end{tabular}
    \caption{Tag prediction from text data. $F_1$ score of OEL$_0$ compared to state-of-the-art methods. LR \citep{lin2014}, NN \citep{belanger2016}, SPEN \citep{belanger2016}, PRLR \citep{lin2014}, DVN \citep{gygli2017}. Results are taken from the corresponding articles.}
    \label{tab:multilabel_comparison}
\end{table}

% % \begin{table}[!ht]
% %   \centering
% %     \begin{tabular}{llll}
% %     \toprule
% %     &  Bibtex & Bookmarks & Corel5k \\
% %     \midrule
% %  $n$ & $2000$ & $2000$ & $2000$\\
% %   $m$ & $2880$ & $4000$ & $2500$\\
% %   $n_{te}$ & $2515$ & $2500$ & $499$\\
% %     \midrule
% %     IOKR &  35.9 & 22.9 & 13.7  \\
% %     OEL & {\bf 39.7} & {\bf 27.1} & {\bf 19.0}\\
% %     \bottomrule
% %   \end{tabular}
% %     \caption{$F_1$ score of OEL using unexploited unlabelled output data compared to IOKR on different multi-label problems.}
% %     \label{tab:multilabel_small}
% % \end{table}

% \begin{table}[!h]
%  \centering
%  \begin{tabular}{llll}
%     \toprule
%     &  Bibtex & Bookmarks & Corel5k \\
%     \midrule
% $n/m/n_{t}$ & $2000/2880/2515$ & $2000/4000/2500$ & $2000/2500/499$\\
%     \midrule
%     IOKR &  35.9 & 22.9 & 13.7  \\
%     OEL$^{1}$  & {\bf 39.7} & 25.9 & 16.1 \\
%     OEL$^{c}$ + data & {\bf 39.7} & {\bf 27.1} & {\bf 19.0}\\
%     \bottomrule
%  \end{tabular}
%     \caption{$F_1$ score results on different multi-label problems.}
%     \label{tab:multilabel}
% \end{table}

\subsection{Metabolite Identification}
%{\it Problem and dataset.
\paragraph{Learning with a very large $\vert \bmY \vert$ and a large $m$}
In this subsection, we apply OEL to the metabolite identification problem, which is a difficult problem characterized by a high output dimension $d$ and a very large number of candidates ($\vert \bmY \vert$). 
The goal of this problem is to predict the molecular structure of a metabolite given its tandem mass spectrum. The molecular structures of the metabolites are represented by fingerprints, that are binary vectors of length $d = 7593$. Each value of the fingerprint indicates the presence or absence of a certain molecular property. Labeled data are expensive to obtain, despite the problem complexity we only have $n=6974$ labeled data. However, an output set $\bmY$ containing more than 6 millions of fingerprints is available. State-of-the-art results for this problem have been obtained with the IOKR method by \citet{brouard2016fast}, and we adopt here a similar numerical experimental protocol (5-CV Outer/4-CV Inner loops), probability product input kernel for mass spectra, and Gaussian-Tanimoto output kernel on the molecular fingerprints. Using randomized singular value decomposition we trained OEL with $m=10^5$ molecular fingerprints, which are not exploited with plain IOKR as the corresponding inputs (spectra) are not known. Details can be found in the Supplementary material.
%The dataset is divided in 5 folds, such all the examples with the same output are contained in the same fold.
%\textit{Results.} 
% Analyzing the results in Table \ref{metabolite_table}, given for three metrics including the top-k-accuracy, we observe that without any additional data, $OEL^0$ and $OEL^1$ slightly improved upon plain IOKR and even more when exploiting additional unlabeled data with $OEL^c$. Such accuracy improvement is crucial in this real-world task. In $OEL^c$, the selected balancing parameter by a inner cross-validation on training set is $\hat{c} = 0.75$ in average on the outer splits, imposing a balance between the influence of the small size labeled dataset and the large unsupervised output set.
Analyzing the results in Table \ref{metabolite_table}, we observe that $OEL$ improved upon plain IOKR. Such accuracy improvement is crucial in this real-world task. The selected balancing parameter by a inner cross-validation on training set is $\hat{c} = 0.75$ in average on the outer splits, imposing a balance between the influence of the small size labeled dataset and the large unsupervised output set. Again using an output kernel reveals to be particularly efficient in supervised problems with complex outputs and a small training labeled dataset. 
%greater influence of supervised data when learning the embedding ($\frac{c}{n}$ is 60 times greater than $\frac{(1- c)}{m}$).

% \begin{table}[ht]
%   \centering
%   \begin{tabular}{llll}
%     \toprule
%     Method     &  MSE  & Tanimoto-Gaussian loss & Top-k accuracies\\
%     %\midrule
%     & & & $k=1$ | $k=5$ | $k=10$\\
%     \midrule
%     IOKR & $0.781 \pm 0.002$ & $0.463 \pm 0.009$ &  $29.6 \%\,|\, 61.1 \%\,|\, 71.0 \%$  \\
%     OEL$^{0}$   & $\mathbf{0.765 \pm 0.003}$ & $0.463 \pm 0.009$ &  $29.5 \%\,|\, 61.1 \%\,|\, 70.9 \%$   \\
%   OEL$^{1}$  & $0.766 \pm 0.003$ & $0.459 \pm 0.010$ &  $30.0 \%\,|\, 61.5 \%\,|\, 71.4 \%$   \\
%     \midrule
%     OEL$^{c}$ + $10^5$ data & $\mathbf{0.765 \pm 0.003}$ & $\mathbf{0.441 \pm 0.009}$  & $\mathbf{31.2} \%\,|\, \mathbf{63.5} \%\,|\, \mathbf{72.7} \%$  \\
%     \bottomrule
%   \end{tabular}
%     \caption{Test mean losses and standard errors for the metabolite identification problem.}
%     \label{metabolite_table}
% \end{table}
\begin{table}[htb!]
  \centering
  \begin{tabular}{llll}
    \toprule
    Method       & Gaussian  & Top-k accuracies\\
    %\midrule
    & -Tanimoto loss &  $k=1$ | $k=5$ | $k=10$\\
    \midrule
    SPEN &  $0.537 \pm 0.008$ & $25.9 \%\,|\, 64.3 \%\,|\, 54.1 \%$ \\
    IOKR  & $0.463 \pm 0.009$ &  $29.6 \%\,|\, 61.1 \%\,|\, 71.0 \%$  \\
    OEL & $\mathbf{0.441 \pm 0.009}$  & $\mathbf{31.2} \%\,|\, \mathbf{63.5} \%\,|\, \mathbf{72.7} \%$  \\
    \bottomrule
  \end{tabular}
    \caption{Test mean losses and standard errors for the metabolite identification problem.}
    \label{metabolite_table}
\end{table}

\section{CONCLUSION}\label{sec:conclusion}
Within the context of Output Kernel Regression for structured prediction, we propose a novel general framework OEL that approximates the infinite dimensional given embedding by a finite one by exploiting labeled training data and output data. Developed for linear projections, OEL leverages the rich Hilbert space representations of structured outputs while controlling the excess risk of the model through supervised dimensionality reduction in the output space, as witnessed by the theoretical analysis.
Our empirical experiments demonstrate that the method can take advantage of additional output data. Moreover, experimental results show a state-of-the-art performance or exceeding it for various applications with a drastically reduced decoding time compared to IOKR.

%\subsubsection*{Acknowledgements}
%\subsubsection*{References}
\newpage
\bibliographystyle{unsrtnat}
\bibliography{references}
\nocite{*}

\newpage
\onecolumn
\section{Supplementary Materials}\label{sec:supplements}
% \aistatstitle{Supplementary Materials}

This supplementary material is organized as follows. Subsection \ref{sec: sup_def} introduces definitions and notations that will be useful in the subsections \ref{sec: sup_lemma} and \ref{sec: sup_thm}. In subsection \ref{sec: sup_lemma} we prove a set of lemmas necessary for proving the excess-risk theorem in \ref{sec: sup_thm}. In subsection \ref{sec: sup_time} we give details and experimental results on the time complexity of OEL in comparison with IOKR. In subsection \ref{sec: sup_exp} we give details about the experiments of section \ref{sec:exp}. In subsection \ref{ranking} we give additional experimental results on label ranking. 

\subsection{Notations and definitions}\label{sec: sup_def}

Here we introduce, and give basic properties, on the ideal and empirical linear operators that we will use in the following to prove the excess-risk theorem.

\begin{itemize}
    \item $\phi: \bmX \rightarrow \bmHx$, $\forall x \in \bmX$, $\phi(x) = k_x(x, .) $
    \item $S : f \in \bmHx \rightarrow \langle f, \kerx(x,.) \rangle_{\bmHx} \in L^2(\bmX, \rho_\mathcal{X})$
    \item $Z : h \in \bmHy \rightarrow \langle h, h^*(.) \rangle_{\bmHy} \in L^2(\bmX, \rho_\mathcal{X})$, (with $h^*(x):= \E_{y|x}(\psi(y))$)
    \item $S_n : f \in \bmHx \rightarrow \frac{1}{\sqrt{n}} (\langle f, \kerx(x_i,.) \rangle_{\bmHx})_{1 \leq i \leq n} \in \R^n$
    \item $Z_n : h \in \bmHy \rightarrow \frac{1}{\sqrt{n}} (\langle h, \psi(y_i) \rangle_{\bmHy}))_{1 \leq i \leq n} \in \R^n$
    \item $C = \E_x(\phi(x) \otimes \phi(x)) = S^*S$ and its empirical counterpart $C_n = \frac{1}{n}\sum\limits_{i=1}^n \phi(x_i) \otimes \phi(x_i)$
    \item $V = \E_y(\psi(y) \otimes \psi(y))$ and its empirical counterpart $V_n = \frac{1}{m}\sum\limits_{i=1}^m \psi(y_i) \otimes \psi(y_i)$
    \item $N = \E_x(h^*(x) \otimes h^*(x)) = HCH^* $ (with $H = Z^*SC^{\dagger}$, via assumption 2, cf. lemma 16 in \citet{Ciliberto2016}) and its empirical counterpart $N_n = H_nC_nH_n^* $ ( with $H_n = Z_n^*S_n(C_n + \lambda I)^{-1}$,  cf. lemma 17 in \citet{Ciliberto2016}).
    \item $ M_c = c N + (1-c)V $ and its empirical counterpart $\hat M_c = c N_n + (1-c)V_m$
    \item $h_{\psi}^*(.) = H\phi(.)$ and its empirical counterpart $\hat h_{\psi}(.) = H_n\phi(.)$
    \item The gamma function $\Gamma :  z \rightarrow \int_{0}^{\infty} t^{z-1}e^{-t}dt$
\end{itemize}

We have the following properties:

\begin{itemize}
    \item $Z^*S = \int\limits_{\mathcal{X} \times \mathcal{Y}} \psi(y) \otimes \phi(x) d\rho(x, y)$
    \item $Z_n^* S_n = \frac{1}{n}\sum\limits_{i=1}^n \psi(y_i) \otimes \phi(x_i)$
    \item If $h^*(x):= \E_{y|x}(\psi(y)) \in \mathcal{H} $, then $h^*(x) = H\phi(x), \quad \forall x \in \mathcal{X} $, with $ H = Z^*SC^\dagger \in \bmHy \otimes \bmHx$, via assumption 2, cf. lemma 16 in \citet{Ciliberto2016}.
    \item $\forall \lambda > 0 $, $\hat h(x) = H_n\phi(x), \quad \forall x \in \bmX $, with $ H_n = Z_n^*S_n (C_n + \lambda I)^{-1} \in \bmHy \otimes \bmHx $, cf. lemma 17 in \citet{Ciliberto2016})
\end{itemize}
 
 \subsection{Lemmas}\label{sec: sup_lemma}

First, we give the following lemma showing the equivalence in the linear case between the general initial objective \eqref{eq:oel-gen} and the mixed linear subspace estimation one \eqref{eq:linear-pb-3}.
\begin{lemma}

Using the solution of the $\psi(Y)$ regression problem, Eq. \eqref{eq:linear-pb-1} is equivalent to: 
 \begin{equation}
\min\limits_{G} \gamma \E_{X} [\| G^*G \hbpsi(X) -  \hbpsi(X)\|^2_{\bmHy}] + (1- 2\gamma) \E_Y [\| G^*G\psi(Y) - \psi(Y)\|^2_{\bmHy}].
 \end{equation}
\end{lemma}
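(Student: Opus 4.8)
The plan is to eliminate $h$ first and then reduce the resulting functional of $G$ to the target objective by exploiting the partial-isometry structure of $G$. Since $h$ ranges over all of $\bmF(\bmX,\R^p)$ and $G$ is a fixed bounded linear operator, for each $G$ the inner minimization of the regression term is solved by the conditional expectation $h^*(X)=\E_{Y|X}[G\psi(Y)]=G\,\E_{Y|X}[\psi(Y)]=G\hbpsi(X)$, where I pull $G$ out of the expectation by linearity and boundedness. Substituting this optimal $h$ leaves a problem in $G$ alone, and I will show its objective differs from the claimed one only by a constant independent of $G$, which proves the two minimizations share the same minimizers.

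First I would expand the regression term at $h=G\hbpsi$,
\begin{equation*}
\E_{X,Y}\big[\|G\hbpsi(X)-G\psi(Y)\|^2_{\R^p}\big],
\end{equation*}
into three pieces and treat the cross term by conditioning on $X$: since $\E_{Y|X}[G\psi(Y)]=G\hbpsi(X)$, the inner product collapses and one obtains $-\E_X[\|G\hbpsi(X)\|^2_{\R^p}]+\E_Y[\|G\psi(Y)\|^2_{\R^p}]$. Here the key algebraic device is the identity $\|Gw\|^2_{\R^p}=\|G^*Gw\|^2_{\bmHy}$, valid for all $w$ because $GG^*=Id_p$ makes $P:=G^*G$ an orthogonal projection (so $\langle Gw,Gw\rangle=\langle Pw,w\rangle=\|Pw\|^2$). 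This rewrites both surviving terms in the output space $\bmHy$.

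Next I would use the same projection identity in the form $\|Pu-u\|^2=\|u\|^2-\|Pu\|^2$ (again from $P=P^*=P^2$) applied to $u=\hbpsi(X)$ and $u=\psi(Y)$. This lets me trade $\E_X[\|P\hbpsi\|^2]$ and $\E_Y[\|P\psi\|^2]$ for the reconstruction errors $\E_X[\|P\hbpsi-\hbpsi\|^2]$ and $\E_Y[\|P\psi-\psi\|^2]$ up to the $G$-independent constants $\E_X[\|\hbpsi\|^2]$ and $\E_Y[\|\psi\|^2]$. Collecting terms, the $\gamma$-weighted regression contribution becomes $\gamma\,\E_X[\|P\hbpsi-\hbpsi\|^2]-\gamma\,\E_Y[\|P\psi-\psi\|^2]$ plus a constant, and adding the original reconstruction term $(1-\gamma)\E_Y[\|P\psi-\psi\|^2]$ makes the coefficient of $\E_Y[\|P\psi-\psi\|^2]$ equal to $(1-2\gamma)$, which is exactly the claimed objective.

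The only delicate points are bookkeeping: making sure the cross-term simplification is justified by the tower property and that every discarded term is genuinely constant in $G$ (so minimizers are preserved), and tracking the sign that turns $(1-\gamma)$ into $(1-2\gamma)$ after the first term donates a $-\gamma$ coefficient to the output-reconstruction term. None of these is a real obstacle; the proof is essentially an exercise in the orthogonal-projection identities for $P=G^*G$ combined with one conditional-expectation computation.
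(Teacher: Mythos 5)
Your proof is correct and follows essentially the same route as the paper's: substitute the optimal regression $h = G\hbpsi$, decompose the quadratic term via the tower property so the cross term collapses, and use the identity $\|Gw\|_{\R^p} = \|G^*Gw\|_{\bmHy}$ (from $GG^*=Id_p$) to convert everything into reconstruction errors, with the $-\gamma$ contribution turning $(1-\gamma)$ into $(1-2\gamma)$. The paper compresses all of this into ``one can check''; your write-up simply supplies the bookkeeping, including the point that discarded terms are constant in $G$ so the minimizers coincide.
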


\begin{proof}In the linear case, \eqref{eq:oel-gen} instantiates as
\begin{equation}\label{eq:linear-pb-2}
 \min\limits_{G} \gamma \E_{X, Y} [\| G (\hbpsi(X) - \psi(Y)) \|^2_{\R^p}] + (1- \gamma) \E_Y [\| G^*G\psi(Y) - \psi(Y)\|^2_{\bmHy}].
 \end{equation}
Decomposing the first term, with $\hbpsi(x) = \E_{Y|X=x}[\psi(Y)]$, and noticing that $\| G \psi(y)\|_{\R^p} = \| G^*G \psi(y)\|_{\bmHy}$ ($G$ has orthogonal rows), one can check that we obtain the desired result.
\end{proof}

From here, we give the lemmas, and their proofs, that we used in order to prove the main theorem in the next section.

First, we leverage the {\em comparison inequality} from \cite{Ciliberto2016}, allowing to relate the excess-risk of $f = d \circ \tilde g \circ h $ to the $L^2$ distance of $\tilde g \circ \hat h$ to $h^*_{\psi}$.
 
\cil*

\begin{proof} The considered loss function $\Delta(y,y') = \|\psi(y) - \psi(y')\|^2_{\bmHy}$ belongs to the wide family of SELF losses \citep{Ciliberto2016} for which the comparison inequality holds. A loss is SELF if it satisfies the {\em implicit embedding property} \citep{ciliberto2020general}, i.e. there exists an Hilbert space ${\cal V}$ and two feature maps $\gamma, \theta: {\cal Y} \to {\cal V}$ such that
\[\Delta(y,y') = \left\langle\gamma(y),\theta(y')\right\rangle_{\cal V}, \quad \forall ~ y,y' \in {\cal Y}.\]
In our case the construction is direct and corresponds to ${\cal V} = {\bmH_y} \oplus \R \oplus \R$, $\gamma(y) = (\sqrt{2} \psi(y), \|\psi(y)\|_{\bmHy}^2, 1)$ and $\theta(y') = (-\sqrt{2} \psi(y'), 1, \|\psi(y')\|_{\bmHy}^2)$

Hence, directly applying Theorem 3. from \cite{ciliberto2020general}, we get a constant:

\[c_{\Delta} = 2 \times \sup_{y \in \bmY} \|\gamma(y)\| = 2\sup_{y \in \bmY} \sqrt{2\|\psi(y)\|^2 + \|\psi(y)\|^4 + 1} = 2 \sqrt{2Q^2 + Q^4 + 1}\]

\end{proof}

In the theorem's proof, we will split the surrogate excess-risk of $\tilde g \circ \hat h$, using triangle inequality, in two terms: 1) the KRR excess-risk of $ \hat h $ in estimating the new embedding $g$, 2) the excess-risk or reconstruction error of the learned couple $(g, \tilde g)$ when recovering $\psi$. For now, $ g, \tilde g $ are supposed to be of the form: $g : y \rightarrow G\psi(y), \tilde g: z \rightarrow G
^*z$, with $G \in \R^p \otimes \bmH_y$ such that $GG^* = I_{\R^p}$. 

The following lemma give a bound for the first term: the KRR excess-risk on the learned linear subspace of dimension $p$. To do so, we simply bound it with the KRR excess-risk on the entire space $\bmH_y$, and leave as a future work a refined analysis of this term studying its dependency w.r.t $p$.

\begin{lemma}[Kernel Ridge Excess-risk Bound on a linear subspace of dimension $p$]\label{krr} Let be $\hat G \in \R^p \otimes \bmH_y$ such that $\hat G \hat G^* = I_{\R^p}$, then with probability at least $1-\delta$:

\[\sqrt{\E_x(\|\hat G^*\hat G(\hat h_{\psi}(x) - h^*_{\psi}(x))\|^2)} \leq C \frac{Q + R}{\sqrt{\lambda n}}\log^2\tfrac{10}{\delta} + R \sqrt{\lambda}\]

with $Q:= \sup\limits_{y \in \bmY} \|\psi(y)\|_{\bmHy}$, $R = \|h^*_\psi\|_{\bmHx \times \bmHy}$, $C = 4\kappa(1 + (4\kappa^2/\sqrt{\lambda n})^{1/2})$.

\end{lemma}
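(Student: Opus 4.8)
The plan is to reduce the projected excess risk to the ordinary (unprojected) kernel ridge regression excess risk, and then invoke a standard excess-risk bound for vector-valued KRR.

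First I would exploit the constraint $\hat G \hat G^* = I_{\R^p}$. It implies that $P := \hat G^* \hat G$ is the orthogonal projection onto the $p$-dimensional range of $\hat G^*$ inside $\bmHy$; in particular $P$ is self-adjoint, idempotent, and has operator norm at most $1$. Hence pointwise $\|P(\hat h_{\psi}(x) - h^*_\psi(x))\|_{\bmHy} \le \|\hat h_{\psi}(x) - h^*_\psi(x)\|_{\bmHy}$ for every $x$, and taking expectations and square roots gives
\[
\sqrt{\E_x\|\hat G^*\hat G(\hat h_{\psi}(x) - h^*_\psi(x))\|^2} \;\le\; \sqrt{\E_x\|\hat h_{\psi}(x) - h^*_\psi(x)\|^2}.
\]
It therefore suffices to bound the right-hand side, which is exactly the excess risk $\sqrt{v_n}$ of the unconstrained KRR estimator; the dimension $p$ plays no further role, which is why the subspace version follows for free.

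Next, using the operator notation of the supplement, I would rewrite this quantity as a Hilbert--Schmidt norm. Writing $\hat h_{\psi}(x) = H_n\phi(x)$ and $h^*_\psi(x) = H\phi(x)$, with $H = Z^*SC^\dagger$ (well defined by Assumption~2) and $H_n = Z_n^*S_n(C_n + \lambda I)^{-1}$, and using $\E_x[\phi(x)\otimes\phi(x)] = C$, one gets
\[
\E_x\|\hat h_{\psi}(x) - h^*_\psi(x)\|^2 = \E_x\|(H_n - H)\phi(x)\|^2 = \|(H_n - H)C^{1/2}\|_{HS}^2.
\]
I would then split $(H_n - H)C^{1/2}$ by inserting the population regularized operator $H_\lambda := Z^*S(C+\lambda I)^{-1}$. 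The bias term $\|(H_\lambda - H)C^{1/2}\|_{HS}$ reduces to $\|H\,\lambda(C+\lambda I)^{-1}C^{1/2}\|_{HS}$, which is at most $R\sqrt{\lambda}$ using $\|H\|_{HS} = R$ together with $\sup_{t\ge0}\lambda t^{1/2}/(t+\lambda) = \tfrac12\sqrt\lambda$; this is where Assumption~2 (the source condition $h^*_\psi \in \bmHy\otimes\bmHx$) enters. The variance term $\|(H_n - H_\lambda)C^{1/2}\|_{HS}$ is controlled by the concentration of the empirical operators $C_n$ and $Z_n^*S_n$ around $C$ and $Z^*S$. This is precisely the content of Lemma~18 of \cite{Ciliberto2016} (equivalently the analysis of \cite{caponnetto2007optimal}), which yields, with probability at least $1-\delta$, the stated bound $C\frac{Q+R}{\sqrt{\lambda n}}\log^2\frac{10}{\delta} + R\sqrt{\lambda}$ with $C = 4\kappa(1 + (4\kappa^2/\sqrt{\lambda n})^{1/2})$.

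The only genuinely technical step is the concentration, whose main obstacle is controlling the cross term $(C+\lambda I)^{-1/2}(C_n+\lambda I)^{1/2}$ and the deviation of $Z_n^*S_n$ from $Z^*S$ through an operator Bernstein inequality, under the boundedness constants $Q$ and $\kappa$ and the realizability radius $R$. Since these estimates are already established in the cited references, I would invoke them directly rather than reprove them. The projection argument of the first step is what makes the subspace version immediate, at the cost of discarding any gain from the reduced dimension $p$ in the constant—an improvement the paper explicitly defers to future work.
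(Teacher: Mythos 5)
Your proposal is correct and follows essentially the same route as the paper's proof: first discard the projection via the operator-norm bound $\|\hat G^*\hat G\|_{\infty}\le 1$ (so the subspace plays no role), then invoke the standard vector-valued KRR excess-risk bound from \cite{Ciliberto2016}. The additional detail you provide (the Hilbert--Schmidt reformulation, the bias term $R\sqrt{\lambda}$ via the source condition, and the variance term via concentration) merely unpacks the internals of the cited Lemma~18 of \cite{Ciliberto2016}, which the paper treats as a black box.
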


\begin{proof}

We observe that: $\sqrt{\E_x(\|\hat G^*\hat G(\hat h(x)_{\psi} - h^*_{\psi}(x))\|^2)} \leq \sqrt{\E_x(\|\hat h(x)_{\psi} - h^*(x)_{\psi}\|^2)}$

Then, considering the assumptions of Theorem \ref{thm:excess-risk}, we use result for kernel ridge regression from \cite{Ciliberto2016}, with $Q:= \sup\limits_{y \in \bmY} \|\psi(y)\|_{\bmHy}$, $R = \|h^*_\psi\|_{\bmHx \times \bmHy}$, $C = 4\kappa(1 + (4\kappa^2/\sqrt{\lambda n})^{1/2})$.

\end{proof}

Then, we show that we can upper bound the reconstruction error term, using Jensen inequality, by the ideal counterpart of the empirical objective w.r.t the output embedding $ G $ of the algorithm \ref{algo:linear}.

\upb*

\begin{proof} Let $c \in [0,1]$, $G \in \R^p \otimes \bmHy$ such that $GG^* = I_{\R^p}$, $g: y \rightarrow G\psi(y)$, $\tilde g: z \rightarrow G^*z$

Defining the projection $P = G^*G \in \bmHy \times \bmHy$, we have
\begin{align*}
    \E_x[\|\tilde g (h^*_{g}(x)) - h^*_{\psi}(x) \|^2] & = \E_x(\|P h^*_{\psi}(x) - h^*_{\psi}(x)\|^2)\\
    &= c \E_x(\|P h^*_{\psi}(x) - h^*_{\psi}(x)\|^2) + (1 - c)\E_x(\|P h^*_{\psi}(x) - h^*_{\psi}(x)\|^2)\\
    &= c \E_x(\| P h^*_{\psi}(x) - h^*_{\psi}(x)\|^2) + (1 - c)\E_x(\| P \E_{y|x}(\psi(y)) - \E_{y|x}(\psi(y))\|^2)\\
    &\leq c \E_x(\|P h^*(x) - h^*(x)\|^2) + (1 - c)\E_{y}(\|P \psi(y) - \psi(y)\|^2) \quad \text{(Jensen's inequality)}\\
    &= c \langle I - P,\, \E_{x}(h^*(x) \otimes h^*(x)) \rangle_{\bmHy \otimes \bmHy} + (1-c) \langle I - P,\, \E_{y}(\psi(y) \otimes \psi(y)) \rangle_{\bmHy \otimes \bmHy}\\
    &= \langle I - P,\, M_c \rangle_{\bmHy \otimes \bmHy}
\end{align*}
\end{proof}

Then, the following lemma study the efficiency of our algorithm in minimizing $\langle I - \hat G^*\hat G,\, M_c \rangle_{\bmHy \otimes \bmHy}$. To do so, we followed the approach of \cite{Rudi2013OnTS}, with the help of an additional necessary technical lemma \ref{mainlemma}.

\begin{lemma}[OEL Subspace estimation]\label{set}If $t_{min} \leq t \leq \min(\|V\|_{\infty}, \|N\|_{\infty})$, with probability $1-3\delta$:

\[\sqrt{\langle I - \hat G^*\hat G,\, M_c \rangle_{\bmHy \otimes \bmHy}} \leq 3 \Omega' t^{-1/2r} \sqrt{\sigma_k(M_c) + t}\]

with: $t_{min} = \max((1-c) t_1, c t_2)$, and $t_1 = \frac{9}{m}\log(\frac{m}{\delta}), t_2 = 5 \max\left(4c^2v_n, 2c v_n, 4cRw_n u_n, 2c w_n^2u_n, 2R^2 u_n\right)$, with $u_n = \frac{4\kappa^2 \log\frac{2}{\delta}}{\sqrt{n}}, v_n = \E_x(\|\hat h_{\psi}(x) - h^*_\psi\|^2)$, $w_n=(\frac{Q}{\lambda \kappa} u_n (1+R) + \lambda R)$, and $\Omega' = (\Omega^{1/r}\Gamma(1 - 1/r)\Gamma(1 + 1/r)\Gamma(1/r))^{1/2}$, $R =  \|h^*_{\psi}\|_{\bmH}$.
\end{lemma}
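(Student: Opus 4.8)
The plan is to treat this as the empirical-subspace reconstruction problem of \cite{Rudi2013OnTS} applied to the mixed covariance. By construction in Algorithm~\ref{algo:linear}, $P := \hat G^*\hat G$ is exactly the orthogonal projection onto the span of the top-$k$ eigenvectors (here $k=p$) of the empirical operator $\hat M_c = cN_n + (1-c)V_m$, so $\langle I - P,\, M_c\rangle_{\bmHy\otimes\bmHy}$ is the reconstruction error of this data-dependent projection measured against the \emph{ideal} operator $M_c = cN + (1-c)V$. First I would invoke the deterministic estimate underlying \cite{Rudi2013OnTS}: introducing a level $t>0$ and the resolvent $(\hat M_c + tI)^{-1/2}$, one controls $\langle I-P,\,M_c\rangle$ by a regularized tail of the spectrum of $M_c$, provided the perturbation $\|(\hat M_c+tI)^{-1/2}(M_c-\hat M_c)(\hat M_c+tI)^{-1/2}\|$ stays below a fixed threshold. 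This is where the technical Lemma~\ref{mainlemma} enters, supplying the resolvent/operator-monotonicity inequality needed to pass from the empirical spectral projection to a bound written in terms of $M_c$ on the regularized complement.

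Next I would feed in the eigenvalue-decay Assumption~1. After the deterministic step the residual appears as a weighted tail sum of the spectrum of $M_c$; the rate $\sigma_j \leq \Omega j^{-r}$ lets me compare $\sum_j \sigma_j/(\sigma_j+t)$-type sums with an integral of $\Omega u^{-r}/(\Omega u^{-r}+t)$, which evaluates through Beta/reflection identities to the Gamma constant $\Omega' = (\Omega^{1/r}\Gamma(1-1/r)\Gamma(1+1/r)\Gamma(1/r))^{1/2}$ and yields the advertised factor $t^{-1/2r}\sqrt{\sigma_k(M_c)+t}$, together with the harmless numerical constant $3$. It then remains to guarantee, with the stated probability, that the perturbation is dominated by $t$ whenever $t \geq t_{\min}$.

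I obtain this by splitting $M_c - \hat M_c = (1-c)(V - V_m) + c(N - N_n)$ and controlling each piece at its own level. The output term $(1-c)(V - V_m)$ is a standard empirical-covariance deviation from $m$ i.i.d. outputs with $\|\psi(y)\|\leq Q$; an operator concentration bound (as in \cite{Rudi2013OnTS}) shows it is controlled once $t \geq (1-c)t_1$ with $t_1 = \tfrac{9}{m}\log(m/\delta)$, on an event of probability $1-\delta$. The input term $c(N - N_n)$ is the delicate one, and is the main obstacle: unlike a plain empirical covariance, $N_n = H_nC_nH_n^* = \tfrac1n\sum_i \hat h_\psi(x_i)\otimes \hat h_\psi(x_i)$ is built from the \emph{estimated} regression $\hat h_\psi$ rather than from $h^*$. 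I would insert the intermediate operator $\tfrac1n\sum_i h^*(x_i)\otimes h^*(x_i)$ and bound $N-N_n$ by (i) a pure sampling fluctuation $\E_x[h^*\otimes h^*]-\tfrac1n\sum_i h^*(x_i)\otimes h^*(x_i)$, handled by concentration with $\|h^*(x)\|\leq R\kappa$ (since $h^*=H\phi$, $\|H\|_{HS}=R$, $\|\phi(x)\|\leq\kappa$) and producing the $R^2 u_n$ term with $u_n = 4\kappa^2\log(2/\delta)/\sqrt n$; and (ii) an estimation error $\tfrac1n\sum_i(h^*(x_i)\otimes h^*(x_i)-\hat h_\psi(x_i)\otimes\hat h_\psi(x_i))$.

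For (ii) I would use $a\otimes a - b\otimes b = (a-b)\otimes a + b\otimes(a-b)$ to factor out $h^*(x_i)-\hat h_\psi(x_i)$, then apply Cauchy--Schwarz over the sample so that $\tfrac1n\sum_i\|h^*(x_i)-\hat h_\psi(x_i)\|^2$ appears; this is exactly controlled by the KRR guarantee (Lemma~18 of \cite{Ciliberto2016}) through $v_n = \E_x\|\hat h_\psi(x)-h^*_\psi\|^2$ and the norm bound $w_n$ on $\hat h_\psi$. Collecting the resulting cross-terms $v_n$, $Rw_nu_n$, $w_n^2u_n$ and $R^2u_n$ with their appropriate powers of $c$ and taking a maximum gives $t_2$, so $c(N-N_n)$ is controlled once $t\geq c t_2$ on an event of probability $1-2\delta$. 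A union bound over the two events yields $1-3\delta$, and combining the deterministic estimate with these concentration bounds gives the claim. The principal difficulty is precisely this coupling in $N-N_n$ between the KRR estimation error and the empirical average over inputs, which must be expressed in the resolvent-weighted norm required by the subspace-learning step rather than in plain operator norm.
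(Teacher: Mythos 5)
Your proposal follows essentially the same route as the paper's proof: the subspace-learning machinery of \cite{Rudi2013OnTS} with regularization level $t$ (the eigenvalue-decay assumption entering through Lemma 3.7 of that reference to give $\Omega' t^{-1/2r}$, and the constant $3$ coming from the resolvent bounds supplied by Lemma~\ref{mainlemma}), the same split of the perturbation into $(1-c)(V-V_m)$ controlled at level $t_1$ by covariance concentration and $c(N-N_n)$ controlled at level $t_2$, and the same $1-3\delta$ union bound. The only organizational difference is that you pivot through $\frac1n\sum_i h^*(x_i)\otimes h^*(x_i) = HC_nH^*$ and a rank-one difference formula, whereas the paper expands $H_nC_nH_n^*-HCH^*$ into seven terms pairing $H_n-H$ with the \emph{population} covariance $C$ so that $v_n=\E_x\|\hat h_\psi(x)-h^*_\psi(x)\|^2$ appears directly; your in-sample average $\frac1n\sum_i\|h^*(x_i)-\hat h_\psi(x_i)\|^2$ is not literally $v_n$ and requires the extra $(C_n-C)$ correction, which is exactly the source of the $w_n^2u_n$ and $Rw_nu_n$ cross-terms you list, so the completed argument coincides with the paper's.
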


\begin{proof} Let be $c \in [0,1]$, we have from Proposition C.4. in \cite{Rudi2013OnTS}:
\begin{align}
    \langle I - \hat G^*\hat G,\, M_c \rangle_{\bmHy \otimes \bmHy} &= \|(\hat G^*\hat G - I)M_c^{\frac{1}{2}}\|_{HS}^2\label{hsd}
\end{align}

Then, following \cite{Rudi2013OnTS} proofs, we split (\ref{hsd}) into three parts, and bound each term,
\[\|(\hat G^*\hat G - I)M_c^{\frac{1}{2}}\|_{HS} \leq \underbrace{\|(M_c + tI)^{\frac{1}{2}}(\hat M_c + tI)^{-\frac{1}{2}}\|_{\infty}}_{\mathcal{A}} \times \underbrace{(\sigma_k(\hat M_c) + t)^{\frac{1}{2}}}_{\mathcal{B}} \times \underbrace{\|(M_c + tI)^{-\frac{1}{2}}M_c^{\frac{1}{2}}\|_{HS}}_{\mathcal{C}}\]

[\textbf{Bound} $\mathcal{A}=\|(M_c + tI)^{\frac{1}{2}}(\hat M_c + tI)^{-\frac{1}{2}}\|_{\infty}$]
We apply Lemma \ref{mainlemma}, which gives if $t_{min} \leq t \leq \min(\|V\|_{\infty}, \|N\|_{\infty})$. Then with probability $1-3\delta $ it is
\[\frac{2}{3} \leq \|(M_c + tI)^{\frac{1}{2}}(\hat M_c + tI)^{-\frac{1}{2}}\|_{\infty} \leq 2\]

[\textbf{Bound} $\mathcal{B}=(\sigma_k(\hat M_c) + t)^{\frac{1}{2}}$]
As in Lemma 3.5 in \cite{Rudi2013OnTS} (cf. Lemma B.2 point 4), the previous lower bound $\frac{2}{3}\sqrt{\frac{2}{3}} \leq \|(M_c + tI)^{\frac{1}{2}} (\hat M_c + tI)^{-\frac{1}{2}}\|_{\infty}$ gives us that:
\[\sqrt{\sigma_k(\hat M_c) + t} \leq \frac{3}{2} \sqrt{\sigma_k(M_c) + t}\]

[\textbf{Bound} $\mathcal{C}=\|(M_c + tI)^{-\frac{1}{2}}M_c^{\frac{1}{2}}\|_{HS}$]
Lemma 3.7 of \cite{Rudi2013OnTS} with the eigenvalue decay assumption of our theorem gives us that:
\[\|(M_c + tI)^{-\frac{1}{2}}M_c^{\frac{1}{2}}\|_{HS} \leq \Omega' t^{-\frac{1}{2r}}\]

with: $\Omega' = (\Omega^{1/r}\Gamma(1 - 1/r)\Gamma(1 + 1/r)\Gamma(1/r))^{1/2}$

Finally, we get the wanted upper bound on $\sqrt{\langle I - \hat G^*\hat G,\, M_c \rangle_{\bmHy \otimes \bmHy}}$.

\end{proof}

The following lemma is the technical lemma necessary in lemma \ref{set}'s proof to bound the 2 first terms ($\mathcal{A}$ and $\mathcal{B}$) in the decomposition of the reconstruction error.

\begin{lemma}[Term $\mathcal{A}$]\label{mainlemma} Let $t_{min} \leq t \leq \min(\|V\|_{\infty}, \|N\|_{\infty})$. Then with probability $1-3\delta $ it is

\[\frac{2}{3} \leq \|(M_c + tI)^{\frac{1}{2}} (\hat M_c + tI)^{-\frac{1}{2}}\|_{\infty} \leq 2 \]

with: $t_{min} = \max((1-c) t_1, c t_2)$, and $t_1 = \frac{9}{m}\log(\frac{m}{\delta}), t_2 = 5 \max\left(4c^2v_n, 2c v_n, 4cRw_n u_n, 2c w_n^2u_n, 2R^2 u_n\right)$, with $u_n = \frac{4\kappa^2 \log\frac{2}{\delta}}{\sqrt{n}}, v_n = \E_x(\|\hat h_{\psi}(x) - h^*_\psi\|^2)$, $w_n=(\frac{Q}{\lambda \kappa} u_n (1+R) + \lambda R), R =  \|h^*_{\psi}\|_{\bmH}$.
\end{lemma}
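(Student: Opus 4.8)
The plan is to follow the relative-perturbation machinery of \cite{Rudi2013OnTS}. Write $A = M_c + tI$ and $\hat A = \hat M_c + tI$ and introduce the relative perturbation
\[
\Delta \;=\; A^{-1/2}(\hat M_c - M_c)\,A^{-1/2}.
\]
Since $A,\hat A$ are self-adjoint and $\hat A = A^{1/2}(I+\Delta)A^{1/2}$, a direct computation gives $\|A^{1/2}\hat A^{-1/2}\|_\infty^2 = \|A^{1/2}\hat A^{-1}A^{1/2}\|_\infty = \|(I+\Delta)^{-1}\|_\infty$. Hence, if $\|\Delta\|_\infty \le \beta$ for a small enough absolute constant $\beta$, the eigenvalues of $I+\Delta$ lie in $[1-\beta,1+\beta]$ and
\[
(1+\beta)^{-1/2} \;\le\; \|(M_c+tI)^{1/2}(\hat M_c+tI)^{-1/2}\|_\infty \;\le\; (1-\beta)^{-1/2}.
\]
Taking $\beta = 3/4$ reproduces exactly the constants $2/3$ and $2$ of the statement, so the whole proof reduces to establishing $\|\Delta\|_\infty \le \tfrac34$ on an event of probability at least $1-3\delta$.

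To bound $\|\Delta\|_\infty$ I would split the perturbation along the two components of the mixed covariance, $\hat M_c - M_c = c\,(N_n - N) + (1-c)\,(V_m - V)$, and control the two relative terms separately, asking each to be below half of $\beta$. Using the operator orderings $M_c + tI \succeq (1-c)V + tI$ and $M_c + tI \succeq cN + tI$, each term localizes at its own effective ridge scale: the $V$ contribution reduces to $\|(V + t'I)^{-1/2}(V_m - V)(V+t'I)^{-1/2}\|_\infty$ with $t' = t/(1-c)$, and the $N$ contribution to the analogous quantity for $N$ at scale $t/c$. This is exactly why the threshold appears as $t_{min} = \max((1-c)t_1,\, c\,t_2)$. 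The $V$ term is the easy one: $V_m$ is an i.i.d.\ empirical covariance of the bounded vectors $\psi(y_j)$, so a standard operator Bernstein / covariance-concentration argument (as in \cite{Rudi2013OnTS}) shows its relative perturbation is controlled as soon as $t' = t/(1-c) \ge t_1 = \tfrac{9}{m}\log(m/\delta)$, i.e.\ $t \ge (1-c)t_1$, on an event of probability $1-\delta$.

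The main obstacle is the $N$ term, because $N_n$ is not a clean empirical average of fixed random vectors. Writing $N_n = H_n C_n H_n^* = \tfrac1n\sum_i \hat h_\psi(x_i)\otimes \hat h_\psi(x_i)$ exhibits it as the empirical second moment of the data-dependent KRR estimates $\hat h_\psi = H_n\phi$, whereas $N = \E_x(h^*(x)\otimes h^*(x))$ is the population second moment of the true conditional mean $h^*$. To bound $N_n - N$ at ridge scale $t/c$ I would interpolate through the covariance of $h^*$ and expand $\hat h_\psi\otimes\hat h_\psi - h^*\otimes h^*$, separating one part that accounts for the estimation error $\hat h_\psi - h^*$ (quantified by $v_n = \E_x\|\hat h_\psi(x) - h^*_\psi\|^2$ together with the operator bound $w_n$ and $R = \|h^*_\psi\|_{\bmH}$) and another that accounts for the empirical-versus-population averaging of the true covariance (the concentration term $u_n = 4\kappa^2\log(2/\delta)/\sqrt{n}$); the cross terms produced by this expansion yield precisely the five-way maximum defining $t_2$, and requiring $t/c \ge t_2$ forces this relative perturbation below $\beta/2$. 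The delicate point is disentangling the randomness of $H_n$ (through $S_n$, $Z_n$, $(C_n+\lambda I)^{-1}$) from that of the evaluation points while keeping every estimate relative to $M_c + tI$. A union bound over the KRR-consistency event, the $N$-concentration event, and the $V$-concentration event then gives the overall probability $1-3\delta$, completing the argument.
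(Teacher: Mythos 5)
Your reduction of the lemma to a bound on the relative perturbation $\Delta = (M_c+tI)^{-1/2}(\hat M_c - M_c)(M_c+tI)^{-1/2}$ is correct (indeed $\|(M_c+tI)^{1/2}(\hat M_c+tI)^{-1/2}\|_{\infty}^2 = \|(I+\Delta)^{-1}\|_{\infty}$), and your remaining ingredients are exactly the ones the paper uses: localization of each component at its own ridge scale, covariance concentration for $V_m$ (the constant $9$ in $t_1$), and the expansion of $N_n - N$ into KRR-estimation and averaging terms (the five-way maximum in $t_2$, proved in the paper via its lemmas bounding $\|H_n-H\|_{\infty}$ and $H_nC_nH_n^*-HCH^*$). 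The genuine gap is in how you recombine the two pieces. You split $\Delta$ additively and require each piece to be below $\beta/2 = 3/8$, but the thresholds fixed in the statement are calibrated to give each relative perturbation only $\leq 1/2$: the covariance lemma of \cite{Rudi2013OnTS} invoked under $t/(1-c) \geq t_1$ concludes exactly $1/2$, and the constant $5$ in $t_2$ is tuned so that the seven-term expansion of the $N$-part sums to $1/2$. With $1/2$ per piece, your additive split only yields $\|\Delta\|_{\infty} \leq 1$, from which $\|(I+\Delta)^{-1}\|_{\infty}$ cannot be bounded at all (an eigenvalue of $\Delta$ may sit at $-1$). So the step ``asking each to be below half of $\beta$'' is not available under the stated $t_{min}$; your argument proves the lemma only after inflating the constants in $t_1,t_2$ by fixed factors, which is a different (weaker) statement.

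The paper closes this by composing the two perturbations multiplicatively rather than additively: it bounds $\|(M_c+tI)^{1/2}(\hat M_c+tI)^{-1/2}\|_{\infty}$ by the product of $\|(cN+(1-c)V+tI)^{1/2}(cN+(1-c)V_m+tI)^{-1/2}\|_{\infty}$ and $\|(cN+(1-c)V_m+tI)^{1/2}(cN_n+(1-c)V_m+tI)^{-1/2}\|_{\infty}$, i.e.\ it first swaps $V$ for $V_m$ holding $N$ fixed, then swaps $N$ for $N_n$ holding $V_m$ fixed. Each factor equals $\|(I-B^{(i)})^{-1}\|_{\infty}^{1/2}$ for its own relative perturbation $B^{(i)}$, so a $1/2$-bound per perturbation gives a factor $\sqrt{2}$ each, hence the product $2$ (and correspondingly the lower constant $2/3$). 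Since $(1-\beta_1)(1-\beta_2) \geq 1-\beta_1-\beta_2$, this sequential decomposition is strictly tighter than the additive one, and it is precisely what makes the stated constants $t_1, t_2$ sufficient. If you adopt this two-factor decomposition, the rest of your outline — the localization inequalities, the two concentration arguments, and the union bound over the three events giving probability $1-3\delta$ — goes through essentially as you wrote it.
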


\begin{proof} We will decompose this term to study separately the convergence of $ V_m $ to $ V $, and the convergence of $ N_n = H_nC_nH_n^*$ to $ N = HCH^*$. However, in order to study the convergence of $ N_n $ to $ N $ we will need to do an additional decomposition as $ N $ is estimated thanks to the KRR estimation $H_n$ of $H$. So, we do the two decompositions leading to the two terms (1), (2), and then we bound each term:
\begin{align*}
    \|(M_c + tI)^{\frac{1}{2}} (\hat M_c + tI)^{-\frac{1}{2}}\|_{\infty} &= \|(c N + (1-c)V + tI)^{\frac{1}{2}} (c N_n + (1-c)V_m + tI)^{-\frac{1}{2}}\|_{\infty}\\
    &\leq \underbrace{\|(c N + (1-c)V + tI)^{\frac{1}{2}}(c N + (1-c)V_m + tI)^{-\frac{1}{2}}\|_{\infty}}_{(1)} \\&\times \underbrace{\|(c N + (1-c)V_m + tI)^{\frac{1}{2}}(c N_n + (1-c)V_m + tI)^{-\frac{1}{2}}\|_{\infty}}_{(2)}
\end{align*}

[\textbf{Bound} $(1)$] We apply lemma \ref{lemma:cov} and get, if $ \frac{9}{m}\log(\frac{m}{\delta}) \leq t \leq \|V\|_{\infty}$, with probability $1-\delta $: $\sqrt{\frac{2}{3}} \leq (1) \leq \sqrt{2}$

[\textbf{Bound} $(2)$] We write:
$\|(c H C H^* + (1-c)V_m + tI)^{\frac{1}{2}}(c H_n C_n H_n + (1-c)V_m + tI)^{-\frac{1}{2}}\|_{\infty} = \|(I-B_n)^{-1} \|_{\infty}^{1/2}$

with: $B_n = (c H C H^* + (1-c)V_m + tI)^{-\frac{1}{2}}c(H_nC_nH_n^* - HCH^*)(c H C H^* + (1-c)V_m + tI)^{-\frac{1}{2}}$ and:
\begin{align*}
    \|B_n\|_{\infty} \leq \|(c H C H^* + tI)^{-\frac{1}{2}}c(H_nC_nH_n^* - HCH^*)(c H C H^* + tI)^{-\frac{1}{2}}\|_{\infty}
\end{align*}
We apply lemma \ref{lemma:HnCnHn}, and if $t \geq 5 \max\left(4c^2v_n, 2c v_n, 4cRw_n u_n, 2c w_n^2u_n, 2R^2 u_n\right)$ with probability $1-2\delta$,
$\sqrt{\frac{2}{3}} \leq  (2) \leq \sqrt{2}$

[\textbf{Conclusion}] As $(\mathcal{A}) = (1) \times (2)$, we conclude by union bound with probability $1-3\delta$, the bound on $\mathcal{A}$: $\frac{2}{3} \leq \mathcal{A} \leq 2$

\end{proof}

The next three lemmas are technical lemmas about convergences used in the proof of lemma \ref{mainlemma}.

\begin{lemma}[Convergence of covariance operator]\label{lemma:cov} Let be $H \in \bmH_y \otimes \bmH_x$, $A=\E_x[H\phi(x) \otimes H\phi(x)]$, $A_n=\frac{1}{n} \sum\limits_{i=1}^n H\phi(x_i) \otimes H\phi(x_i)$, $B \in \bmH_y \otimes \bmH_y$ positive semi-definite, $ \frac{9}{n}\log(\frac{n}{\delta}) \leq t \leq \|A\|_{\infty}$, with probability $1-\delta$ it is
\[\sqrt{\frac{2}{3}} \leq \|(A + B + tI)^{\frac{1}{2}}(A_n + B + tI)^{-\frac{1}{2}}\|_{\infty} \leq \sqrt{2}\]
\end{lemma}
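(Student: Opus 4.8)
The plan is to reduce the two-sided operator-norm estimate to a single high-probability bound $\|\Delta\|_\infty \le \tfrac12$ on the self-adjoint ``whitened deviation''
\[
\Delta := (A+B+tI)^{-1/2}(A-A_n)(A+B+tI)^{-1/2},
\]
and then obtain that bound from an operator concentration inequality, following the covariance-concentration argument of \cite{Rudi2013OnTS}.

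First I would carry out the algebraic reduction, which already pins down the stated constants $\sqrt{2/3}$ and $\sqrt2$. Writing $W=A+B+tI$ and $\hat W=A_n+B+tI$, one has $\hat W = W - (A-A_n) = W^{1/2}(I-\Delta)W^{1/2}$, whence $W^{1/2}\hat W^{-1}W^{1/2}=(I-\Delta)^{-1}$ and therefore $\|(A+B+tI)^{1/2}(A_n+B+tI)^{-1/2}\|_\infty^2 = \|(I-\Delta)^{-1}\|_\infty$. If $\|\Delta\|_\infty\le\tfrac12$, then $\lambda_{\max}(\Delta)\in[-\tfrac12,\tfrac12]$, so $I-\Delta$ is positive with spectrum in $[\tfrac12,\tfrac32]$ and $(I-\Delta)^{-1}$ has spectrum in $[\tfrac23,2]$; taking operator norms and square roots yields exactly $\sqrt{2/3}\le \|\cdots\|_\infty\le\sqrt2$. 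Thus it remains only to prove $\|\Delta\|_\infty\le\tfrac12$ with probability $1-\delta$.

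Second I would treat the concentration. Setting $\xi_i=H\phi(x_i)$ and $\eta_i=W^{-1/2}\xi_i$, we get $\Delta=\tfrac1n\sum_{i=1}^n Z_i$ with $Z_i=\tilde A-\eta_i\otimes\eta_i$ i.i.d., self-adjoint and mean zero, where $\tilde A=W^{-1/2}AW^{-1/2}=\E[\eta\otimes\eta]$ satisfies $0\preceq\tilde A\preceq I$. The bound $W\succeq tI$ gives $\|\eta_i\|^2\le\|\xi_i\|^2/t$, controlling the per-summand norm, while $\E[Z^2]\preceq\E[\|\eta\|^2\,\eta\otimes\eta]\preceq t^{-1}\tilde A$ controls the variance; the hypothesis $t\le\|A\|_\infty$ is precisely what keeps $\|\tilde A\|_\infty$ and the effective dimension $\mathcal N(t)=\mathrm{tr}(W^{-1}A)$ of order $1$ relative to $1/t$. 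Applying a dimension-free (intrinsic-dimension) operator Bernstein inequality to $\tfrac1n\sum_i Z_i$ and bounding the intrinsic dimension by $\mathcal N(t)$ (itself of order $1/t$ up to the uniform norm $\sup_x\|H\phi(x)\|^2$, which enters only through $1/t$ and is bounded by the kernel's sup-norm) produces a tail of the form $\mathbb P(\|\Delta\|_\infty>\tfrac12)\le\delta$ exactly under the lower bound $t\ge\tfrac9n\log\tfrac n\delta$; the $n$ inside the logarithm appears when the effective-dimension factor is bounded using $1/t\le n/(9\log(n/\delta))$.

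The hard part will be the probabilistic step in infinite dimension: a naive matrix Bernstein estimate carries an ambient-dimension factor, so one must invoke a dimension-free version and track the intrinsic/effective dimension and all constants carefully so that the sufficient condition collapses exactly to $t\ge\tfrac9n\log\tfrac n\delta$ rather than a looser bound with extraneous factors. The only new ingredient relative to \cite{Rudi2013OnTS} is the fixed positive semi-definite shift $B$, which is harmless: enlarging $W$ by $B$ leaves the boundedness estimate (which uses only $W\succeq tI$) unchanged and can only decrease $\|\tilde A\|_\infty$ and $\mathcal N(t)$ in the Loewner order, so every bound above goes through verbatim.
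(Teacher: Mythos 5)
Your proposal is correct and follows essentially the same route as the paper's proof: the identical algebraic reduction to bounding the whitened deviation $\|(A+B+tI)^{-1/2}(A_n-A)(A+B+tI)^{-1/2}\|_\infty \le \tfrac12$ (giving the spectrum of $(I-\Delta)^{-1}$ in $[\tfrac23,2]$ and hence the constants $\sqrt{2/3}$ and $\sqrt{2}$), together with the same monotonicity observation that the positive semi-definite shift $B$ is harmless. The only difference is that the paper dispatches the probabilistic step by dropping $B$ and directly citing Lemma~3.6 of \cite{Rudi2013OnTS}, whereas you propose to re-derive that concentration bound via an intrinsic-dimension operator Bernstein inequality, which is precisely the argument underlying the cited lemma.
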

\begin{proof}

We write:
\[\|(A + B + tI)^{\frac{1}{2}}(A_n + B + tI)^{-\frac{1}{2}}\|_{\infty} = \|(I-B_n)^{-1} \|_{\infty}^{1/2}\]
with: $B_n = (A + B + tI)^{-\frac{1}{2}}(A_n - A)(A + B + tI)^{-\frac{1}{2}}$ and:
\begin{align*}
    \|B_n\|_{\infty} &= \|(A + B + tI)^{-\frac{1}{2}}(A_n - A)(A + B + tI)^{-\frac{1}{2}}\|_{\infty}\\
    &\leq \|(A + tI)^{-\frac{1}{2}}(A_n - A)(cM + tI)^{-\frac{1}{2}}\|_{\infty}
\end{align*}
We apply Lemma 3.6 of \cite{Rudi2013OnTS} and get with probability $1-\delta$, if $\frac{9}{n}\log(\frac{n}{\delta}) \leq t \leq \|A\|_{\infty}$
\[\|(A + tI)^{-\frac{1}{2}}(A_n - A)(A + tI)^{-\frac{1}{2}}\|_{\infty} \leq \frac{1}{2}\]
and we conclude as in Lemma 3.6 of \cite{Rudi2013OnTS}.

\end{proof}

\begin{lemma}[Bound $\|H_n - H\|_{\infty}$]\label{lemma:Hn} Let be $u_n=\frac{4\kappa^2\log\frac{2}{\delta}}{\sqrt{n}}$, with probability $1-2\delta $ it is

\[\|H_n - H\|_{\infty} \leq \frac{Q}{\kappa \lambda}u_n(1+\|h^*_{\psi}\|_{\bmH}) + \lambda \|h^*_{\psi}\|_{\bmH}\]

\end{lemma}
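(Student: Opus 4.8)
The plan is to use Assumption~2, which gives $h^*(x) = H\phi(x)$ exactly, in order to split the empirical ridge operator $H_n = Z_n^*S_n(C_n+\lambda I)^{-1}$ into a centered \emph{noise} part and a deterministic \emph{regularization} part, and then control the noise part by an operator concentration inequality. First I would introduce the residuals $\xi_i := \psi(y_i) - h^*(x_i)$, which satisfy $\E[\xi_i \mid x_i] = 0$ and, using $\|\psi(y_i)\|_{\bmHy}\le Q$ together with $\|h^*(x_i)\|_{\bmHy} = \|H\phi(x_i)\|_{\bmHy}\le R\kappa$ (and $R=\|H\|_\infty\le\|h^*_\psi\|_{\bmH}$, $\|\phi(x)\|_{\bmHx}\le\kappa$), are bounded by $\|\xi_i\|_{\bmHy} \le Q + R\kappa$. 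Substituting $\psi(y_i) = H\phi(x_i) + \xi_i$ into $Z_n^*S_n = \tfrac1n\sum_i \psi(y_i)\otimes\phi(x_i)$ yields the key identity $Z_n^*S_n = H C_n + \Xi_n$, where $\Xi_n := \tfrac1n\sum_{i=1}^n \xi_i \otimes \phi(x_i)$ is a mean-zero average of i.i.d.\ bounded operators.

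Second, using $C_n(C_n+\lambda I)^{-1} - I = -\lambda(C_n+\lambda I)^{-1}$, this identity gives the clean decomposition
\[
H_n - H = \Xi_n (C_n+\lambda I)^{-1} \;-\; \lambda\, H (C_n+\lambda I)^{-1}.
\]
The second (bias) term is deterministic given the inputs: bounding it through $\lambda\,\|H\|_\infty\,\|(C_n+\lambda I)^{-1}\|_\infty$ with $\|H\|_\infty\le R$ and $\|(C_n+\lambda I)^{-1}\|_\infty\le 1/\lambda$ produces the regularization contribution $\lambda R$ in the statement. For the first (variance) term, $\|\Xi_n(C_n+\lambda I)^{-1}\|_\infty \le \|\Xi_n\|_\infty/\lambda$, so the whole argument reduces to concentrating $\|\Xi_n\|_\infty$.

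Third — and this is the core technical step — I would apply a Bernstein-type concentration inequality for sums of i.i.d.\ self-adjoint (or Hilbert--Schmidt) random operators to $\Xi_n$. Each summand $\xi_i\otimes\phi(x_i)$ is centered and bounded in operator norm by $(Q+R\kappa)\kappa$, so the inequality yields, up to the universal constants absorbed in it, $\|\Xi_n\|_\infty \lesssim (Q+R\kappa)\kappa\,\log(2/\delta)/\sqrt{n}$ with probability at least $1-\delta$. Rewriting this through $u_n = 4\kappa^2\log(2/\delta)/\sqrt{n}$ gives the factor $\tfrac{Q}{\kappa}u_n(1+R)$, and dividing by $\lambda$ reproduces the first term $\tfrac{Q}{\kappa\lambda}u_n(1+R)$ of the statement. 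Collecting the two terms and taking a union bound over the two concentration events entering the analysis (one for the empirical covariance $C_n$, encoded in the scale $u_n$, and one for the noise operator $\Xi_n$) yields the bound with probability $1-2\delta$.

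The hard part will be the operator concentration for $\Xi_n$: one must verify the boundedness and variance conditions of the chosen Hilbert-space Bernstein inequality and track the constants carefully so that the residual scale $Q+R\kappa$ and the covariance scale $\kappa^2$ combine into the advertised $u_n(1+R)$ factor with the correct $\log(2/\delta)$ dependence. Everything else — the exact algebraic identity for $H_n-H$, the deterministic bound $\|(C_n+\lambda I)^{-1}\|_\infty\le 1/\lambda$, and the treatment of the regularization term via $\|H\|_\infty\le R$ — is routine once the decomposition $Z_n^*S_n = HC_n + \Xi_n$ is in place.
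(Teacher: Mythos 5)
Your route is genuinely different from the paper's: you use the bias--variance decomposition $H_n - H = \Xi_n(C_n+\lambda I)^{-1} - \lambda H(C_n+\lambda I)^{-1}$ built on the centered residual operator $\Xi_n=\frac1n\sum_{i=1}^n \xi_i\otimes\phi(x_i)$, while the paper telescopes through population operators, bounding $(Z_n^*S_n-Z^*S)(C_n+\lambda I)^{-1}$ and $Z^*S\bigl((C_n+\lambda I)^{-1}-(C+\lambda I)^{-1}\bigr)$ by concentration of $Z_n^*S_n-Z^*S$ and $C_n-C$, plus the deterministic term $Z^*S\bigl((C+\lambda I)^{-1}-C^\dagger\bigr)$. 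Your algebraic identity is correct and your treatment of the noise term is sound in spirit, but there is a genuine gap at the bias term: the chain $\|\lambda H(C_n+\lambda I)^{-1}\|_\infty\le\lambda\|H\|_\infty\|(C_n+\lambda I)^{-1}\|_\infty$ with $\|H\|_\infty\le R$ and $\|(C_n+\lambda I)^{-1}\|_\infty\le 1/\lambda$ evaluates to $\lambda\cdot R\cdot\frac1\lambda=R$, not to the claimed $\lambda R$. Since $R$ decays neither with $\lambda$ nor with $n$, this renders the bound vacuous exactly at the term the lemma exists to control.

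The obstruction is structural, not cosmetic. The $1/\lambda$ of the resolvent can only be beaten by pairing the resolvent with a covariance factor, and $H$ carries a factor of the \emph{population} covariance only: Assumption 2 gives $Z^*S=HC$, not $HC_n$. The paper's proof is arranged so that the ridge bias appears at the population level, as $Z^*S\bigl((C+\lambda I)^{-1}-C^\dagger\bigr)$, where $Z^*S=HC$ meets the resolvent and $\|C(C+\lambda I)^{-1}\|_\infty\le1$ is what produces $\lambda R$; even there the step is delicate (the exact identity is $Z^*S\bigl((C+\lambda I)^{-1}-C^\dagger\bigr)=-\lambda\, Z^*SC^\dagger(C+\lambda I)^{-1}$, and the paper's displayed identity drops the $C^\dagger$), which confirms that this term is the crux of the lemma. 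To repair your argument you would split $\lambda H(C_n+\lambda I)^{-1}=\lambda H(C+\lambda I)^{-1}+\lambda H(C+\lambda I)^{-1}(C-C_n)(C_n+\lambda I)^{-1}$, bound the fluctuation via $\|C_n-C\|_\infty$ --- this is the paper's term $(B)$ and is where the second concentration event behind the $1-2\delta$ actually enters; in your proposal that second event is announced but never used --- and then face the population bias term exactly as the paper does: the repair essentially reconstructs the paper's proof. As a secondary point, concentrating $\Xi_n$ gives the scale $(Q+R\kappa)\kappa$, which matches the advertised $Q\kappa(1+R)$ only if $\kappa\lesssim Q$, a relation never assumed; the paper sidesteps this by concentrating $Z_n^*S_n-Z^*S$ (scale $Q\kappa$) and $C_n-C$ separately.
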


\begin{proof}

In order to bound $\|H_n - H\|_{\infty}$ we do the following decomposition in three terms, and bound each term:
\begin{align*}
    \|H_n - H\|_{\infty} &=  \|Z_n^*S_n(C_n + \lambda I)^{-1} - Z^*SC^{\dagger}\|_{\infty}\\
    % &\leq \|Z_n^*S_n(C_n + \lambda I)^{-1} - Z^*S(C_n + \lambda I)^{-1}\|_{\infty} + \| Z^*S(C_n + \lambda I)^{-1} - Z^*S(C + \lambda I)^{-1}\|_{\infty}\\
    % &\quad + \|Z^*S(C + \lambda I)^{-1} - Z^*SC^{\dagger}\|_{\infty}\\
    &\leq \underbrace{\|(Z_n^*S_n- Z^*S)(C_n + \lambda I)^{-1}\|_{\infty}}_{(A)} + \underbrace{\| Z^*S((C_n + \lambda I)^{-1} - (C + \lambda I)^{-1})\|_{\infty}}_{(B)}\\
    &\quad + \underbrace{\|Z^*S ((C + \lambda I)^{-1} - C^{\dagger})\|_{\infty}}_{(C)}
\end{align*}
[\textbf{Bound (A)}]
We have:
\begin{align*}
    (A) &= \|(Z_n^*S_n- Z^*S)(C_n + \lambda I)^{-1}\|_{\infty}
    \leq \frac{1}{\lambda}\|Z_n^*S_n- Z^*S\|_{HS}
\end{align*}
From \cite{Ciliberto2016} (proof of lemma 18.), with probability $1-\delta$: $(A) \leq \frac{4Q\kappa\log\frac{2}{\delta}}{\lambda \sqrt{n}}$.

[\textbf{Bound (B)}]
We have:
\begin{align*}
    (B) &= \| Z^*S((C + \lambda I)^{-1} - (C_n + \lambda I)^{-1})\|_{\infty}\\
    &= \| Z^*S((C + \lambda I)^{-1}(C_n - C)(C_n + \lambda I)^{-1})\|_{\infty}\\
    &\leq \| Z^*S(C + \lambda I)^{-1}\|_{\infty} \|(C_n - C)\|_{\infty} \|(C_n + \lambda I)^{-1}\|_{\infty}\\
    &\leq \frac{1}{\lambda} \|h^*_{\psi}\|_{\bmH} \|(C_n - C)\|_{\infty}
\end{align*}
where we used the fact that for two invertible operators $A, B$: $A^{-1} - B^{-1} = A^{-1}(B-A)B^{-1}$, and noting that $\| Z^*S(C + \lambda I)^{-1}\|_{\infty} \leq \| Z^*S(C + \lambda I)^{-1}\|_{HS} \leq \|H\|_{HS} = \|h^*_{\psi}\|_{\bmH}$. From \cite{Ciliberto2016}, with probability $1-\delta$: $(B) \leq \frac{4\|h^*_{\psi}\|_{\bmH}Q\kappa\log\frac{2}{\delta}}{\lambda \sqrt{n}}$.

[\textbf{Bound (C)}]
We have:
\begin{align*}
    (C) &= \|Z^*S ((C + \lambda I)^{-1} - C^{\dagger})\|_{\infty}\\
    &= \lambda \|Z^*S(C + \lambda I)^{-1}\|_{\infty}\\
    &\leq \lambda \|h^*_{\psi}\|_{\bmH}
\end{align*}
We conclude now by union bound, with probability at least $1-2\delta$:
\[\|H_n - H\|_{\infty} \leq \frac{4Q\kappa\log\frac{2}{\delta}}{\lambda \sqrt{n}} + \frac{4\|h^*_{\psi}\|_{\bmH}Q\kappa\log\frac{2}{\delta}}{\lambda \sqrt{n}} + \lambda \|h^*_{\psi}\|_{\bmH}\]
\end{proof}

\begin{lemma}\label{lemma:HnCnHn} If $B_n = (c H C H^* + tI)^{-\frac{1}{2}}c(H_nC_nH_n^* - HCH^*)(c H C H^* + tI)^{-\frac{1}{2}}$, and $t \geq 5 \max\left(4c^2v_n, 2c v_n, 4cRw_n u_n, 2c w_n^2u_n, 2R^2 u_n\right)$, with probability $1-2\delta $ it is

\[\|B_n\|_{\infty} \leq \frac{1}{2} \]

with $u_n = \frac{4\kappa^2 \log\frac{2}{\delta}}{\sqrt{n}}, v_n = \E_x(\|\hat h_{\psi}(x) - h^*_\psi\|^2)$, $w_n=(\frac{Q}{\lambda \kappa} u_n (1+R) + \lambda R), R= \|h^*_{\psi}\|_{\bmH}$
\end{lemma}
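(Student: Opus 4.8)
The plan is to bound $\|B_n\|_\infty = \|(cN+tI)^{-1/2}\,c(N_n-N)\,(cN+tI)^{-1/2}\|_\infty$ directly and show it is at most $\tfrac12$, where $N=HCH^*$ and $N_n=H_nC_nH_n^*$ (recall that this already reflects having dropped the positive semidefinite term $(1-c)V_m$ from the weighting in Lemma~\ref{mainlemma}, which only enlarges the norm). Writing $E:=H_n-H$ for the KRR operator error (controlled by Lemma~\ref{lemma:Hn}) and splitting the empirical covariance as $C_n=C+(C_n-C)$, I would expand, using $aCa^* - bCb^* $-type algebra,
\begin{align*}
N_n - N &= \underbrace{H(C_n-C)H^*}_{\text{(I)}} + \underbrace{ECH^* + HCE^* + ECE^*}_{\text{(II): deterministic in }E} \\
&\quad + \underbrace{E(C_n-C)H^* + H(C_n-C)E^* + E(C_n-C)E^*}_{\text{(III): fluctuation}}.
\end{align*}
These five symmetrised pieces are engineered to match the five entries of the maximum defining $t$: piece (I) is the concentration of the empirical covariance of $h^*(x)=H\phi(x)$ and yields $2R^2u_n$; the quadratic and cross parts of (II) yield the two $v_n$-entries ($2cv_n$ and $4c^2v_n$); and the cross and quadratic parts of (III) yield $4cRw_nu_n$ and $2cw_n^2u_n$.

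Next I would bound each piece in the weighted operator norm $\|(cN+tI)^{-1/2}(\cdot)(cN+tI)^{-1/2}\|_\infty$. For the positive semidefinite pieces $ECE^*$, $E(C_n-C)E^*$ and $H(C_n-C)H^*$ I use the elementary estimate $\|(cN+tI)^{-1/2}M(cN+tI)^{-1/2}\|_\infty\le\tfrac1t\|M\|_\infty$, together with the identities $\|ECE^*\|_\infty\le \mathrm{tr}(ECE^*)=\E_x\|E\phi(x)\|^2=v_n$, $\|H\|_\infty\le\|H\|_{HS}=R$, and $\|C_n\|_\infty,\|C\|_\infty\le\kappa^2$. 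The cross (non-positive) pieces are handled by factoring through $C^{1/2}$, e.g.\ $ECH^*=(EC^{1/2})(HC^{1/2})^*$, and applying a weighted Cauchy--Schwarz: the factor $\|(cN+tI)^{-1/2}EC^{1/2}\|_\infty\le\sqrt{v_n/t}$, while $\|(cN+tI)^{-1/2}c^{1/2}HC^{1/2}\|_\infty\le 1$ since $cN=cHCH^*\preceq cN+tI$. The fluctuation cross piece is treated identically after inserting the factor $\|C_n-C\|_\infty$.

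The two genuinely random ingredients are $\|E\|_\infty=\|H_n-H\|_\infty\le w_n$ and $\|C_n-C\|_\infty\le u_n$ with $u_n=4\kappa^2\log(2/\delta)/\sqrt n$. Both hold on the single event underlying Lemma~\ref{lemma:Hn} (probability at least $1-2\delta$): the first is its conclusion, and the second is exactly the covariance concentration used inside its proof (the same estimate as in Lemma~\ref{lemma:cov}). The pieces in (II), involving only $C$, are then deterministic given $H_n$. Collecting the contributions, each piece is made small once $t$ dominates (up to a universal constant) the corresponding entry of the maximum, and summing the five bounds gives $\|B_n\|_\infty\le\tfrac12$ on that event, which is the claim with probability $1-2\delta$.

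The main obstacle is the treatment of the non-positive cross terms $ECH^*+HCE^*$ and $E(C_n-C)H^*+H(C_n-C)E^*$: one must route the bound through $C^{1/2}$ and exploit $\|(cN+tI)^{-1/2}c^{1/2}HC^{1/2}\|_\infty\le1$ so as to avoid picking up a spurious $c^{-1/2}$ or $\|N\|_\infty$ factor and instead recover exactly the $c^2v_n$ and $cRw_nu_n$ scalings (the precise constants require the careful bookkeeping of \cite{Rudi2013OnTS}). The second delicate point is probabilistic accounting: because $H_n$ and $C_n$ are built from the same sample, all randomness must be confined to the event of Lemma~\ref{lemma:Hn} rather than spending additional probability budget, which is what keeps the final statement at $1-2\delta$.
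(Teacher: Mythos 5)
Your proposal is correct and follows essentially the same route as the paper's proof: the same seven-term expansion of $H_nC_nH_n^* - HCH^*$ (you merely group it as $1+3+3$ rather than listing terms (i)--(vii)), the same weighted Cauchy--Schwarz for the cross terms (you factor through $C^{1/2}$ where the paper factors through $S^*$, which is the same thing since $C=S^*S$), the same $1/t$ estimates for the remaining pieces, and the same two concentration events $\|H_n-H\|_\infty\le w_n$ and $\|C_n-C\|_\infty\le u_n$. If anything, your accounting of the probability budget --- reusing the covariance-concentration event already underlying Lemma~\ref{lemma:Hn} instead of spending additional $\delta$'s --- is slightly more careful than the paper's own conclusion step.
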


\begin{proof}

Here, we do the following decomposition in 7 terms in order to only have a sum of product of empirical estimators appearing only in term of difference with their ideal target. Then we will bound each associated term in $\|B_n\|_{\infty}= \|(c H C H^* + tI)^{-\frac{1}{2}}c(H_nC_nH_n^* - HCH^*)(c H C H^* + tI)^{-\frac{1}{2}}\|_{\infty}$.
\begin{align*}
    H_nC_nH_n^* - HC_nH^* &= (H_n-H)CH^* \quad (i)\\
    &\quad + H C(H_n-H)^* \quad (ii)\\
    &\quad + (H_n-H)C(H_n-H)^* \quad (iii)\\
    &\quad + (H_n-H)(C_n-C)H^* \quad (iv)\\
    &\quad + H(C_n-C)(H_n-H)^* \quad (v)\\
    &\quad + (H_n-H)(C_n - C)(H_n-H)^* \quad (vi)\\  
    &\quad + H(C_n - C)H^* \quad (vii)  
\end{align*}

[\textbf{Bound} $(i)$ and $(ii)$]
\begin{align*}
    \|(c H C H^* + tI)^{-\frac{1}{2}}cH C(H_n-H)^*(c H C H^* + tI)^{-\frac{1}{2}}\|_{\infty} &\leq c \|(c H C H^* + tI)^{-\frac{1}{2}}HS^*\|_{\infty} \\ 
    &\quad \times \|(c H C H^* + tI)^{-\frac{1}{2}}(H_n-H)S^*\|_{\infty}\\
\end{align*}
But:
\begin{align*}
    \|(c H C H^* + tI)^{-\frac{1}{2}}HS^*\|_{\infty} &= \|(c H C H^* + tI)^{-\frac{1}{2}}HS^*SH^*(c H C H^* + tI)^{-\frac{1}{2}}\|_{\infty}^2\\
    &= \|(c H C H^* + tI)^{-\frac{1}{2}}HCH^*(c H C H^* + tI)^{-\frac{1}{2}}\|_{\infty}^2\\
    &\leq 1
\end{align*}
And:
\begin{align*}
    \|(c H C H^* + tI)^{-\frac{1}{2}}(H_n-H)S^*\|_{\infty} &\leq \frac{1}{\sqrt{t}}\|(H_n-H)S^*\|_{\infty}\\
    &= \frac{1}{\sqrt{t}} \sqrt{\E_x(\|\hat h(x) - h^*(x)\|^2)}
\end{align*}

[\textbf{Bound} $(iii)$]
\begin{align*}
    \|(c H C H^* + tI)^{-\frac{1}{2}}(H_n-H)C(H_n-H)^*(c H C H^* + tI)^{-\frac{1}{2}}\|_{\infty} &\leq \frac{c}{t}\E_x(\|\hat h(x) - h^*(x)\|^2)
\end{align*}

[\textbf{Bound} $(iv)$ and $(v)$]
\begin{align*}
    \|(c H C H^* + tI)^{-\frac{1}{2}}(H_n-H)(C_n-C)H^*(c H C H^* + tI)^{-\frac{1}{2}}\|_{\infty} &\leq \frac{c}{t}\|H_n-H\|_{\infty}\|C_n-C\|_{\infty} \|H\|_{\infty}\\
    &\leq \frac{c}{t} R \|H_n-H\|_{\infty}\|C_n-C\|_{\infty},
\end{align*} noting $R =  \|h^*_{\psi}\|_{\bmH}$.

[\textbf{Bound} $(vi)$]
\begin{align*}
    \|(c H C H^* + tI)^{-\frac{1}{2}}(H_n-H)(C_n - C)(H_n-H)^*(c H C H^* + tI)^{-\frac{1}{2}}\|_{\infty} &\leq \frac{c}{t} \|H_n-H\|_{\infty}^2 \|C_n-C\|_{\infty}
\end{align*}

[\textbf{Bound} $(vii)$]
\begin{align*}
    \|(c H C H^* + tI)^{-\frac{1}{2}}H(C_n - C)H^*(c H C H^* + tI)^{-\frac{1}{2}}\|_{\infty} &\leq \frac{c}{t} R^2 \|C_n-C\|_{\infty}
\end{align*}

[\textbf{Conclusion}]

Hence, noting $u_n=\frac{4\kappa^2 \log\frac{2}{\delta}}{\sqrt{n}}, v_n = \E_x(\|\hat h(x) - h^*(x)\|^2), w_n = \frac{4Q\kappa\log\frac{2}{\delta}}{\lambda \sqrt{n}}(1+\|h^*_{\psi}\|_{\bmH}) + \lambda \|h^*_{\psi}\|_{\bmH}$, if $t \geq 5 \max\left(4c^2v_n, 2c v_n, 4cRw_n u_n, 2c w_n^2u_n, 2R^2 u_n\right)$, by union bound with probability $1-2\delta$

\[\|B_n\|_{\infty} \leq \frac{1}{2}\]

As, with probability $1-\delta$: $\|(C_n-C)\|_{\infty} \leq u_n$ (cf. \cite{Ciliberto2016}, proof of lemma 18.), and also with probability $1-\delta$ (cf. lemma \ref{lemma:Hn}): $\|H_n - H\|_{\infty} \leq w_n$.
\end{proof}

\subsection{Theorem}\label{sec: sup_thm}

% In this section we prove a stronger version of Theorem \ref{thm:excess-risk}: Theorem \ref{thm:excess-risk3} below, with a change of the reconstruction error term from $\mathcal{O}(1/p^{(r-1)/4})$ to $\mathcal{O}(1/p^{(r-1)/2})$ and better constants.

In this section we prove the Theorem \ref{thm:excess-risk}.

\thmer*

% \begin{theorem}[Excess-risk bound, KRR + linear OEL]
% \label{thm:excess-risk3}
% Let $ \rho $ be a distribution over $ \bmX \times \bmY $, $ \rho_y $ the marginal of $Y$, $ (x_i, y_i)_{i=1}^n $ be  i.i.d samples from $ \rho $, $ (y_i)_{i=1}^m $ i.i.d samples from $\rho_y$, $ \lambda \leq \kappa^2 := \sup\limits_{x \in \bmX} \|\kerx(x, .)\|_{\bmHx}, \delta > 0$, and $M_{c}$ and $h^*_\psi$ satisfy Assumption 1 and 2. When
% \begin{equation}
% p^r \leq  \min\left\{\frac{\omega m}{9 (1-c) \log(m/\delta)},\frac{\omega}{8 c t_n}\right\},
% \end{equation}
% then the following holds with probability at least $1 - 4\delta$,
% %
% \[\sqrt{\E_x(\|\tilde g \circ \hat h_g(x) - h^*_{\psi}(x) \|^2) } \leq \underbrace{C \frac{Q + R}{\sqrt{\lambda n}}\log^2\tfrac{10}{\delta} + R \sqrt{\lambda}}_{\text{KRR excess-risk on linear subspace of dimension p}}  + \underbrace{\sqrt{\frac{\Omega'}{p^{r-1}}}}_{\text{Reconstruction error}},\]
% where $Q= \sup_{y \in \bmY} \|k(y, .)\|_{\bmHy}$, $R = \|h^*_\psi\|_{\bmHx \times \bmHy}$,~$C = 4\kappa(1 + (4\kappa^2/\sqrt{\lambda n})^{1/2})$,~ $t_n = 5 \max\left(4c^2v_n, 2c v_n, 4cRw_n u_n, 2c w_n^2u_n, 2R^2 u_n\right)$, with $u_n = \frac{4\kappa^2 \log\frac{2}{\delta}}{\sqrt{n}}, v_n = \E_x(\|\hat h_{\psi}(x) - h^*_\psi\|^2)$, $w_n=(\frac{Q}{\lambda \kappa} u_n (1+R) + \lambda R)$. Finally $\Omega' = \Omega q_r$, $q_r$ a constant depending only on $r$ (defined in the proof).
% \end{theorem}

\begin{proof} First, we bound $\E_x(\|\tilde g \circ \hat h_g(x) - h^*_{\psi}(x) \|^2)$ by decomposing it in two parts. We have, defining $\hat P = \hat G^* \hat G$,
\begin{align}
    \E_x(\|\tilde g \circ \hat h_g(x) - h^*_{\psi}(x) \|^2) &= \E_x(\|\hat P \hat h_{\psi}(x) - h^*_{\psi}(x)\|^2)\\
    &= \underbrace{\E_x(\|\hat P (\hat h_{\psi}(x) - h^*_{\psi}(x))\|^2)}_{(1)} + \underbrace{\E_x(\|\hat P h^*_{\psi}(x) - h^*_{\psi}(x)\|^2)}_{(2)}
\end{align}
We now bound the two terms of equations (19).

[\textbf{Bound} $(1)=\E_x(\|\hat P (\hat h_{\psi}(x) - h^*_{\psi}(x))\|^2)$]
We upper bound this term using Lemma \ref{krr}.

[\textbf{Bound} $(2)=\E_x(\|\hat P h^*_{\psi}(x) - h^*_{\psi}(x)\|^2)$]
We upper bound this term using two dedicated lemmas, first Lemma~\ref{jensen}, then Lemma~\ref{set}, with $t = \max(\sigma_k(N), t_1, t_2)$. If $p^r \leq \min(\omega/t_1, \omega/t_2^{n})$, then $\sigma_k(N) \geq \max(t_1, t_2)$, so $t = \sigma_k(N)$, and with probability $1-3\delta$:
\begin{align*}
    \sqrt{\E_x(\|\hat P h^*_{\psi}(x) - h^*_{\psi}(x)\|^2)} &\leq \sqrt{\langle I - \hat G^*\hat G,\, M_c \rangle_{\bmHy \otimes \bmHy}}\\
    &\leq 3 \Omega' t^{-1/2r} \sqrt{2 \sigma_k(M_c)}\\
    &= 3\sqrt{2} \Omega' \sigma_k(M_c)^{1/2(1-1/r)}\\
    &\leq \sqrt{\Omega q_r} \times p^{(1-r)/2}
\end{align*}

with: $q_r = 36 \times \Gamma(1 - 1/r)\Gamma(1 + 1/r)\Gamma(1/r)$

We conclude the desired bound by union bound.
\end{proof}

% \begin{restatable}{corollary}{coro2}
% \label{coro2}\label{new-cor}
% Under the same assumptions of Theorem~\ref{thm:excess-risk}, let $\lambda = 1 / \sqrt{n}$. Then, running the proposed algorithm with a number of components 
% \[p \gtrsim n^\frac{1}{2r},\] 
% is enough to achieve an excess-risk of $\mathcal{R}(\widehat{f}) - \mathcal{R}(f^*)  = O(n^{-(1-\frac{1}{r})/4})$.
% \end{restatable}
\coro*
\begin{proof}

We have: $u_n = \mathcal{O}(n^{-1/2})$, and using $\lambda = 1 / \sqrt{n}$, we have $v_n = \E_x(\|\hat h(x) - h^*(x)\|^2) = \mathcal{O}(n^{-1/2})$, and $w_n = \mathcal{O}(1)$, so we can use a number of components $p = \mathcal{O}(n^{1/2r})$ and having the condition on $p$ of the theorem verified: $p^r \leq  \min\left\{\frac{\omega m}{9 (1-c) \log(m/\delta)},\frac{\omega}{8 c t_n}\right\}$, if $m \geq n$.

Now, injecting $p = \mathcal{O}(n^{-1/2})$ in the reconstruction error term we get the desired $O(n^{-(r-1)/4r})$.
\end{proof}

It's interesting to note that when $ r $ increases $O(n^{-(r-1)/4r})$ becomes really close to the typical rate of $O(n^{-1/4})$ but with a number of components $p \ll n$.

\subsection{Time and Space Complexity Analysis of OEL}
%% PB HERE
\textbf{Train.} Algorithm 1 is used to obtain, thanks to the training data, $\hat \beta \in \R^{(n+m) \times p}$ and $W = (K_x + n\lambda I)^{-1} \in \R^{n \times n}$ defining $\hat G$ and $\hat h_{\psi}$ respectively. The complexities of algorithm 1 are given by summing the complexities of computing $\beta$ and $W$. We give in Tables \ref{time_complexity} and \ref{space_complexity} the time and space complexities for these computations. Both can be solved using standard approximation methods, and we also give the complexity of algorithm 1 when using Nystr\"{o}m KRR approximation of rank $ q $ and randomized SVD approximation of rank $p$.
%% PB HERE

\textbf{Test.} During the test phase, we use $\beta$ and $W$ computed during the training phase in order to compute $\hat f(x) \in \bmY$ for any test point $x$ (see below). The decoding part is computationally very expensive, in general, as it requires an exhaustive search in the candidate sets which can be very large, with costly distance computations. Noting $\alpha(x) = (K_x + n \lambda I_n)^{-1}k_x \in \R^n$, $k_x = (k_x(x, x_1), ..., k_x(x, x_n))$, $k_y = (k_y(y, y_1), ..., k_y(y, y_n))$, $K_y \in \R^{n \times n}$ the training output gram matrix, $\beta \in \R^{(n+m) \times p}$ the coefficient matrix obtained with algorithm \ref{algo:linear}, the decoding computation for OEL is:

\begin{align*}
    \hat f (x) &= \argmin\limits_{y \in \bmY} \|\hat G^* \hat G \hat h_{\psi}(x) - \psi(y)\|^2_{\bmH_y}\\
    &= \argmin\limits_{y \in \bmY} \|\hat G^* \hat G \hat h_{\psi}(x)\|^2_{\bmH_y} +  \|\psi(y)\|^2_{\bmH_y} - 2 \langle \hat G^*\hat G \hat h_{\psi}(x)\, \psi(y)\rangle_{\bmH_y}\\
    &= \argmin\limits_{y \in \bmY} \|\psi(y)\|^2_{\bmH_y} - 2 \langle \hat G \hat h_{\psi}(x)\,\hat G \psi(y) \rangle_{\R^p}\\
    &= \argmin\limits_{y \in \bmY} \|\psi(y)\|^2_{\bmH_y} - 2 \alpha(x)^T K_y \beta \beta^T k_y\\
\end{align*}

Computing the $\|\psi(y)\|^2_{\bmH_y}$ for all candidates is $|\bmY|$ kernel evaluations. Computing the projected training and candidates points $K_y \beta \in \R^{n \times p}$ and $\beta^T k_y \in \R^{|\bmY| \times p}$ costs $n^2p + n|\bmY|p = \mathcal{O}(np|\bmY|)$ (as $n< |\bmY|$). The previous operations can be done only one time for the whole test set. Then, for one test point $x$, computing $\alpha(x)^T \times  K_y \beta  \times \beta^T k_y \in \R$ for all candidates costs $n + p|\bmY| = \mathcal{O}(p|\bmY|)$ (as $n< |\bmY|$). Finally, the decoding complexity for all test points is $\mathcal{O}(np|\bmY| + n_{te} p|\bmY|) = \mathcal{O}(n_{te} p|\bmY|)$ (if $p< n_{te}$).

When decoding with IOKR, instead of $\alpha(x)^T K_y \beta \beta^T k_y$, we need to compute $\alpha(x)^T k_y \in \R$ for all candidates which costs $n|\bmY|$. Finally, the decoding complexity for all test points is $n_{te}n|\bmY|$.

\begin{table}[!ht]
  \centering
  \begin{tabular}{lll}
    \toprule
    Algorithm & KRR & SVD \\
    \midrule
     Standard & $\bmO(n^3)$ &  $\bmO((n + m)^3)$ \\
     Approximated & $\bmO(n q^2 + q^3)$ & $\bmO((n + m)^2 p + (n + m)p^2)$ \\
    \bottomrule
  \end{tabular}
\caption{Training time complexity of algorithm 1}
\label{time_complexity}
\end{table}

\begin{table}[!ht]
  \centering
  \begin{tabular}{lll}
    \toprule
    Algorithm & KRR & SVD \\
    \midrule
    Standard & $\bmO(n^2)$ & $\bmO((n + m)^2)$ \\
    Approximated & $\bmO(q^2 + n q)$ & $\bmO((n + m) p)$ \\
    \bottomrule
  \end{tabular}
\caption{Training space complexity of algorithm 1}
\label{space_complexity}
\end{table}

\begin{table}[!ht]
  \centering
  \begin{tabular}{ll}
    \toprule
     & Decoding \\
    \midrule
     IOKR & $\bmO(n_{te} n|\bmY|)$\\
     OEL & $\bmO(n_{te} p|\bmY| + n|\bmY|p) \approx~ \bmO(n_{te} p|\bmY|)$\\
    \bottomrule
  \end{tabular}
\caption{Decoding time complexity of algorithm 1}
\label{time_complexity_decoding}
\end{table}

\textbf{Experimental computational time evaluation.}\label{sec: sup_time}

In the following Table \ref{tab:experimental_computation_time} we give the fitting and decoding time of IOKR and OEL for the experiments of Table \ref{usps_table1} and \ref{tab:multilabel_comparison}. In the Table \ref{usps_table1} the setup is $n = 1000$ training couples $(x, y)$, $m=6000$ alone outputs $y$, $n_{te}=2007$ test data, $7000$ candidates for decoding (the training outputs), and $p=98$ for OEL. In the Table \ref{tab:multilabel_comparison} the setup for Bibtex and Bookmarks are, respectively, $n = 4880/60000$ training couples $(x, y)$, $m=0/0$ alone output $y$, $n_{te}=2007/27856$ test data, $4880/13779$ candidates for decoding (the training outputs), and $p=130/200$ for OEL.

\begin{table}[!ht]
  \centering
  \begin{tabular}{llll}
    \toprule
     & IOKR & OEL$_0$ & OEL \\
    \midrule
    Bibtex & 2s/13s & 15s/4s &  Na \\
    Bookmarks & 465s/3371s & 617s/214s & Na \\
    USPS & 0.1s/9s & 0.4s/1s & 37s/38s \\
    \bottomrule
  \end{tabular}
\caption{Fitting/Decoding computation time of IOKR compared to OEL (in seconds)}
\label{tab:experimental_computation_time}
\end{table}

In comparison with mere IOKR, OEL$_0$ and OEL necessitate an extra training: the linear subspace estimation. However, OEL alleviates the time complexity of the decoding. In the Table \ref{tab:experimental_computation_time}, we see that when $n, n_{te}$, and $|\bmY|$ (the number of candidates) are big, and $p \ll n$, OEL leads to a significant improvement (see Bookmarks). Alleviating the testing time even by increasing training time is an interesting property. OEL training (with $m$ extra data) in comparison with OEL$_0$ leads to a greater computation time but better statistical performance (cf. USPS results Table \ref{usps_table1}, and Table \ref{tab:multilabel_small}).

\subsection{Additional Experimental Results and Details}\label{sec: sup_exp}

\subsubsection{Image Reconstruction}

\paragraph{Experimental setting.}  As in \cite{weston2003kernel} we used as target loss an RBF loss $ \|\psi(y) - \psi(y')\|^2 $ induced by a Gaussian kernel $ k $ and visually chose the kernel's width $ \sigma_{output}^2 = 10 $ looking at reconstructed images of IOKR without embedding learning. We constituted a supervised training set with the first 1000 train digits, and an unsupervised training set with the 6000 last bottom half train digits. We used a Gaussian input kernel of width $\sigma_{input}$. For the pre-image step, we used the same candidate set for all methods constituted with all the 7000 training bottom half digits. We selected the hyper-parameters $ \gamma_{input}, p, \lambda $ using logarithmic grids and the supervised/unsupervised balance parameter $ \gamma $ using linear grid, via 5 repeated random sub-sampling validation (80\%/20\%) selecting the best mean validation MSE, then we trained a model on the entire training set, and we tested on the test set.

%\paragraph{Additional figure}
%We plotted the test MSE w.r.t $ p $, and $\lambda$, for the best embedding learning algorithm (OEL with additional unsupervised data) with the best parameter $ \sigma_{input} $ selected.

%\begin{figure}[h!]
%\centering
%\includegraphics[width=0.75\textwidth]{Figures/Plot_OEL_mixed_MSE.pdf}
%\caption{Test MSE w.r.t $ \lambda $ and $ p$ (mixed OEL}
%\end{figure}

\textbf{Link to downloadable dataset } \url{https://web.stanford.edu/~hastie/StatLearnSparsity_files/DATA/zipcode.html}

\paragraph{SPEN USPS experiments' details.} We used an implementation of SPEN in python with PyTorch by Philippe Beardsell and Chih-Chao Hsu (cf. https://github.com/philqc/deep-value-networks-pytorch). Small changes have been made. 
SPEN was trained using standard architecture from \cite{belanger2016}, that is a simple 2-hidden layers neural network for the feature network with equal layer size $n_h=110$, and a single-hidden layer neural network for the structure learning network with size $n_s=50$. The size of the two hidden layers $n_h \in [10,30,50,70,90,110,130]$ was selected during the pre-training of the feature network using 5 repeated random sub-sampling validation (80\%/20\%) selecting the best mean validation MSE (cf. figure \ref{fig:convergence} for convergence of this phase). $n_s \in [5, 10, 20, 50, 70]$ was selected during the training phase of the SPEN network (training of the structure learning network plus the last layer of the feature network) doing approximate loss-augmented inference (cf. figure \ref{fig:convergence}  for inferences' convergences), and minimizing the SSVM loss, using 5 repeated random sub-sampling validation (80\%/20\%) selecting the best mean validation MSE (cf. figure \ref{fig:convergence} for convergence of this phase).

\begin{figure}[!ht]
    \centering
    \begin{minipage}[l]{0.33\textwidth}
        \centering
        \includegraphics[height=0.17\textheight]{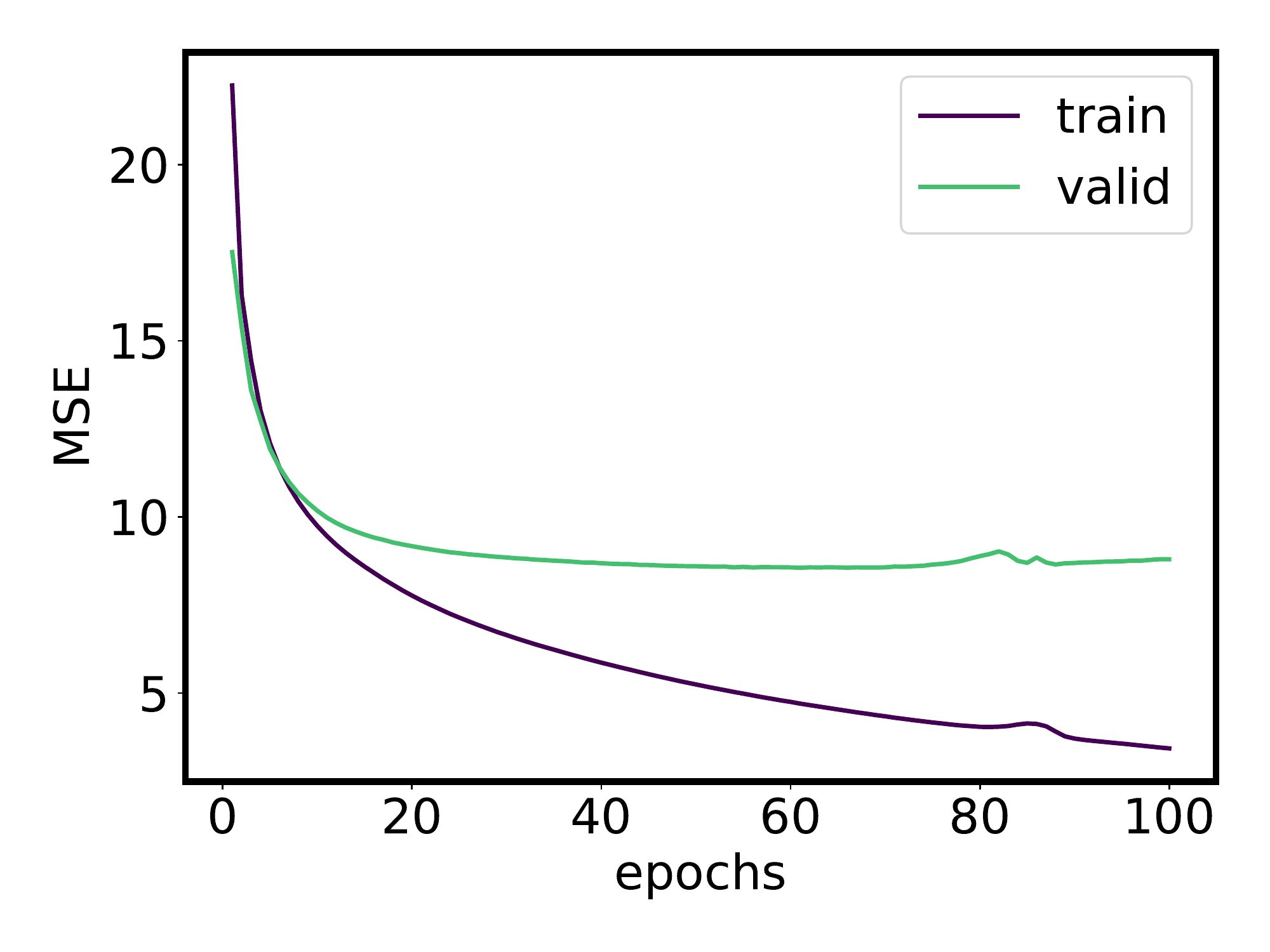}
    \end{minipage}%
    \begin{minipage}[l]{0.33\textwidth}
        \centering
        \includegraphics[height=0.17\textheight]{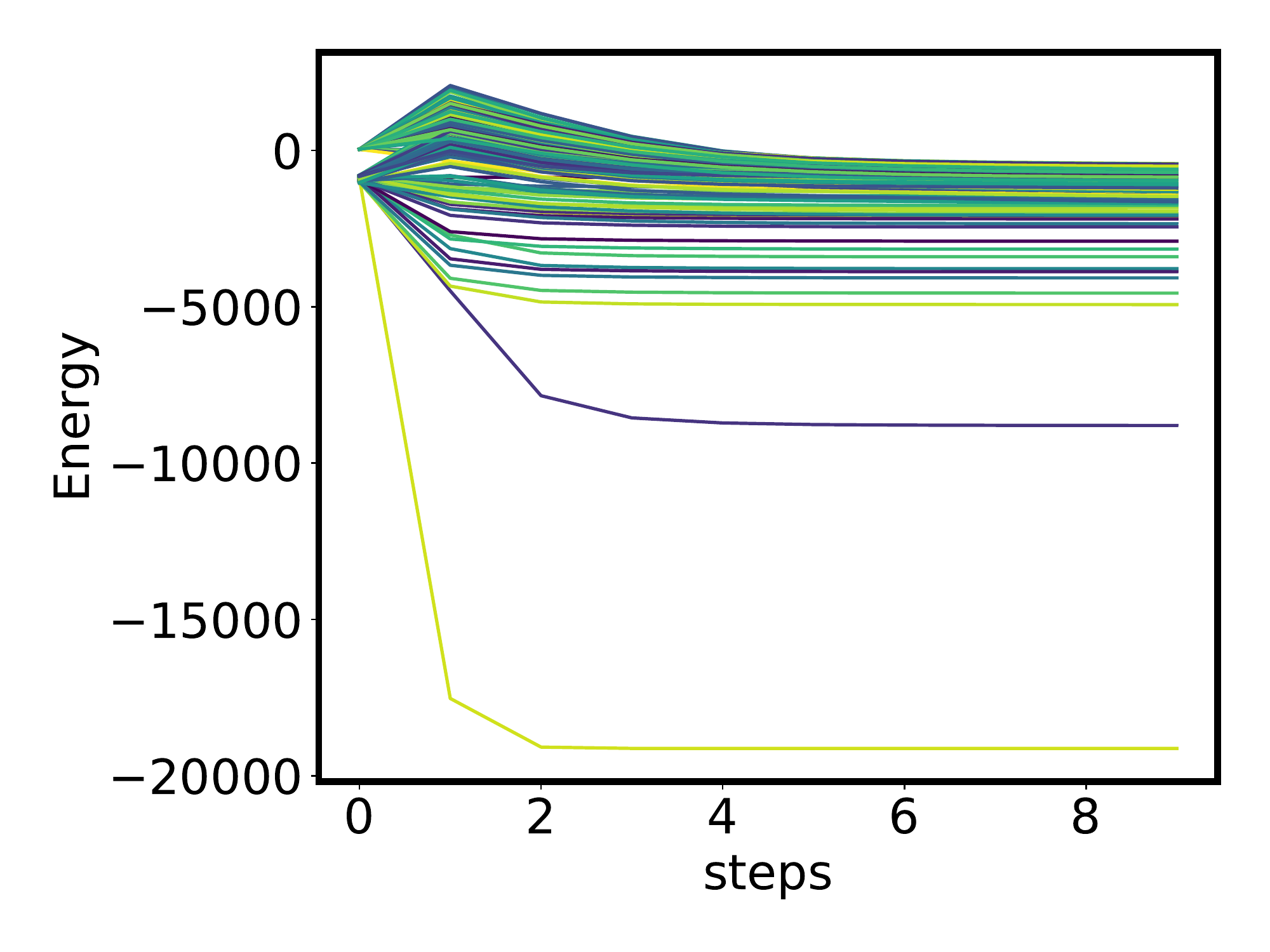}
    \end{minipage}%
    \begin{minipage}[l]{0.33\textwidth}
        \centering
        \includegraphics[height=0.17\textheight]{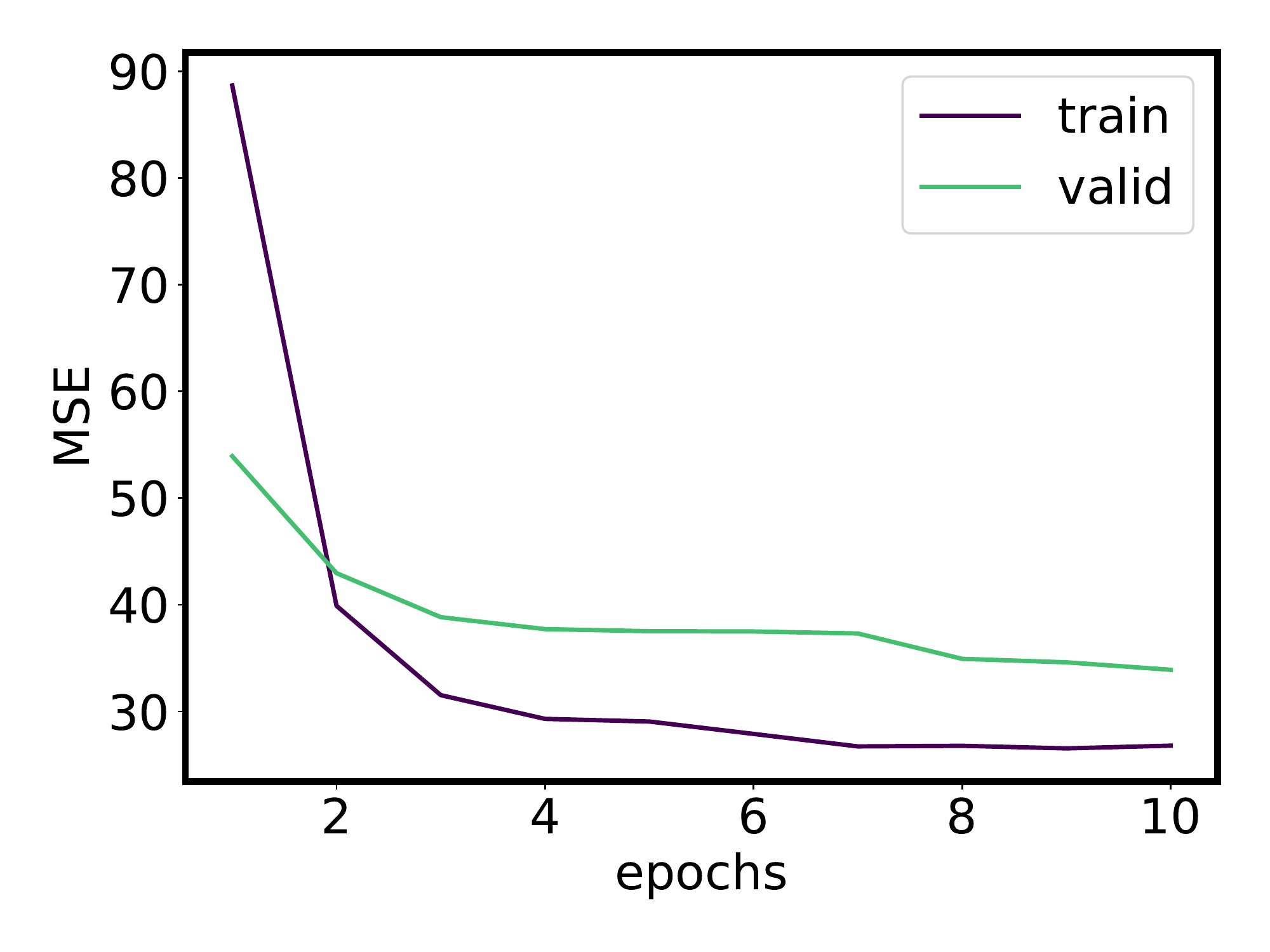}
    \end{minipage}
    \caption{Left: Convergence of train/validation MSE when pre-training the feature network / Center: approximate loss-augmented inferences' convergences/ Right: Convergence of train/validation SSVM loss when training the SPEN network}
    \label{fig:convergence}
\end{figure}

\subsubsection{Multi-label classification}

\paragraph{Problem and dataset}
Bibtex and Bookmarks \citep{katakis2008} are tag recommendation problems, in which the objective is to propose a relevant set of tags (e.g. url, description, journal volume) to users when they add a new Bookmark (webpage) or Bibtex entry to the social bookmarking system Bibsonomy. Corel5k is an image dataset and the goal of this application is to annotate these images with keywords. Information on these datasets is given in Table \ref{multilabel_dataset}.

\begin{table}[!ht]
  \centering
  \begin{tabular}{llllll}
    \toprule
    Dataset & $n$ & $n_{te}$ & $n_{features}$ & $n_{labels}$ & $\bar l$\\
    \midrule
     Bibtex & 4880 & 2515 & 1836 & 159 & 2.40\\
     Bookmarks & 60000 & 27856 & 2150 & 208 & 2.03\\
     Corel5k & 4500 & 499 & 37152 & 260 & 3.52\\
    \bottomrule
  \end{tabular}
\caption{Multi-label datasets description. $\bar l$ denotes the averaged number of labels per point.}
\label{multilabel_dataset}
\end{table}

\paragraph{Experimental setting}
For all multi-label experiments we used a Gaussian output kernel with widths $\sigma^2_{output} = \frac{1}{\bar l}$, where $\bar l$ is the averaged number of labels per point.  As candidate sets we used all the training output data. We measured the quality of predictions using example-based F1 score. We selected the hyper-parameters $\lambda$ and $p$ in logarithmic grids. 
%The full Bibtex dataset used in Table~\ref{tab:multilabel} contains 4880 training examples and 2515 test examples.

\textbf{Link to downloadable dataset } \url{http://mulan.sourceforge.net/datasets-mlc.html}

\paragraph{About the selected output embeddings' dimensions p}

We selected the output embeddings' dimensions $p$ with integer logarithmic scales, ensuring that the selected dimensions were always smaller than the maximal one of the grids. In multilabel experiments (cf. Table \ref{tab:multilabel_small} and \ref{tab:multilabel_comparison}), we obtained the following dimensions (cf. Table \ref{selected_p}).

\begin{table}[!ht]
  \centering
  \begin{tabular}{lll}
    \toprule
    Dataset & Table \ref{tab:multilabel_small} & Table \ref{tab:multilabel_comparison} \\
    \midrule
     Bibtex & 80/80 & 130\\
     Bookmarks & 30/40 & 200\\
     Corel5k & 24/162 & Na\\
    \bottomrule
  \end{tabular}
\caption{Selected output embeddings' dimensions $p$ with OEL$_0$/OEL in Table \ref{tab:multilabel_small} and OEL in Table \ref{tab:multilabel_comparison}}
\label{selected_p}
\end{table}

In Table \ref{tab:multilabel_small} recall that we used a reduced number of training couples, which allows to have alone training outputs. If we interpret $p$ as a regularisation parameter, we see that when $n$ increases (from Table \ref{tab:multilabel_small} to Table \ref{tab:multilabel_comparison}) or  $m$ increases (from OEL$_0$ to OEL in Table \ref{tab:multilabel_small}), then there is less need for regularisation hence $p$ is bigger.

%%%
\subsubsection{Metabolite identification}

\paragraph{Problem and dataset}

An important problem in metabolomics is to identify the small molecules, called metabolites, that are present in a biological sample. Mass spectrometry is a widespread method to extract distinctive features from a biological sample in the form of a tandem mass (MS/MS) spectrum.  
In output the molecular structures of the metabolites are represented by fingerprints, that are binary vectors of length $d = 7593$. Each value of the fingerprint indicates the presence or absence of a certain molecular property.
 Labeled data are expensive to obtain, but a very large unsupervised dataset (several millions, 6455532 in our case) is available in output. For each input the molecular formula of the output is assumed to be known, and we consider all the molecular structures having the same molecular formula as the corresponding candidate set. 
The median size of the candidate sets is 292, and the biggest candidate set is of size 36918.

\paragraph{Experimental setting}
The dataset contains 6974 supervised data $(x_i , y_i)$ and several millions of unlabeled data are available in output. In input we use a probability product kernel on the tandem mass spectra. 
As output kernel we used a a Gaussian kernel (with parameter $\sigma^2=1$) in which the distances are taken between feature vectors associated with a Tanimoto kernel. When no additional unsupervised data are used,  we selected the hyper-parameters $\lambda, p $ in logarithmic grids using nested cross-validation with 5 outer folds and 4 inner folds. In the case of OEL with $10^5$ additional unsupervised data, we fixed $p=2000$, and selected $\lambda, \gamma$ with 5 outer folds and 4 inner folds but only using $5 \times 10^3$ additional data.

\paragraph{SPEN metabolite identification experiments' details}
For this problem, we first used kpca in order to compute finite input representations of the mass spectra (with not too big dimension $p=2000$). For the purpose of checking the quality of these finite inputs, we used them with IOKR (with linear output kernel), just computing the top-k accuracies on test inputs with less than 300 candidates for faster computations. Using these finite inputs' representations leads effectively to comparable results, and even better top-1 accuracy (cf. Table \ref{kpca_input}). We choose $p=2000$. Notice that bigger $ p $ would results in more difficult SPEN optimization.

\begin{table}[ht]
  \centering
  \begin{tabular}{llll}
    \toprule
    Input       & MSE & Hamming & Top-k accuracies\\
    %\midrule
    & &  & $k=10$ | $k=5$| $k=1$\\
    \midrule
    PPK kernel $p = + \infty$ &  $206.09$ & 111.95 &$ 59.83\%\, | 45.64\%\, |16.92\%$ \\
    KPCA $p = n = 5579$  &  $206.09$ & 111.95 &$ 59.83\%\, |45.64\%\, |16.92\%$ \\
    KPCA $p = 2000$  &  $ 212.6$ & 114.40 &$57.26\%\, |42.74\%\, |17.26\%$ \\
    \bottomrule
  \end{tabular}
    \caption{Test mean losses and standard errors for the metabolite identification problem with IOKR (linear output kernel) and different inputs' representations.}
    \label{kpca_input}
\end{table}

We used the same architecture than for USPS. Similarly to McCallum, we did not tune the sizes of the hidden layers for the feature network ($n_h=1500$), but set them based on intuition and the size of the data, the number of training examples, etc. Then, we train the SPEN network with $n_s \in [50, 100, 500, 1000]$ which gave comparable results and we kept the best one in terms of top-k accuracies, that is $n_s=500$.

%%%
\subsection{Label Ranking}\label{ranking}

The goal of label ranking is to learn to rank $ K $ items indexed by {1, . . . , K}. A ranking can be seen as a permutation, i.e a bijection $\sigma : \llbracket 1, K \rrbracket\rightarrow  \llbracket 1, K \rrbracket $ mapping each item to its rank. $i$ is preferred over $j$ according to $ \sigma $ if and only if i is ranked lower than j: $\sigma(i) < \sigma(j)$. The set of all permutations over $ K $ items is the symmetric group which we denote by $S_k$, and can be seen as a structured objects set (cf. \cite{korba2018structured}).

\cite{korba2018structured} have shown that IOKR is a competitive method with state of the art label ranking methods. We evaluate the performance of OEL on benchmark label ranking datasets. Following \cite{korba2018structured} we embedded the permutation using Kemeny embedding. We trained regressors using Kernel ridge regression (Gaussian kernel). We adopt the same setting as \cite{korba2018structured} and report the results of our predictors in terms of mean Kendall’s $\tau$ from five repetitions of a ten-fold cross-validation (c.v.). We also report the standard deviation of the resulting scores. The parameters of our regressors and output embeddings learning algorithms were tuned in a five folds inner c.v. for each training set. 
%We report our parameter grids in the Supplementary. 

The results are given in Table~\ref{table_ranking}. Learning a linear output embedding from Kemeny embedding shows a small improvement in term of Kendall’s $\tau$ compared to IOKR. This improvement is observed on most of the datasets.

% \begin{table}[!h]
%   \caption{Test MSE (lower is better)}
%   \centering
%   \begin{tabular}{llllll}
%     \toprule
%     & glass & housing & iris & sushi & vehicle \\
%     \midrule
%     IOKR  & $ 0.77 \pm 0.24 $ & $ 1.57 \pm 0.12$ & $ 0.076 \pm 0.031 $ & $ 9.59 \pm 0.21 $  & $ 0.31 \pm 0.04 $  \\
%     IOKR with KPCA   & $0.77 \pm 0.24$ & $ 1.56\pm 0.12 $ & $ 0.076 \pm 0.031 $ &  $ 9.53 \pm 0.24 $ & $ 0.30 \pm 0.04 $  \\
%     IOKR with OEL (supervised)  & $ 0.78 \pm 0.24 $ &  & $ 0.076 \pm 0.031 $ &   & $ 0.30 \pm 0.04 $  \\
%     \midrule
%     & vowel & wine & wisconsin & bodyfat &  authorship\\
%     \midrule
%     IOKR & $ 1.75 \pm 0.26 $ & $ 0.093 \pm 0.036 $ & $ 17.61 \pm 0.97 $ & $ 4.79 \pm 0.29 $ &  $ 0.179 \pm 0.025 $ \\
%     IOKR with KPCA  & $ 1.75 \pm 0.26 $ & $ 0.091 \pm 0.036 $ & $ 17.29 \pm 0.94 $ & $ 4.77 \pm 0.33 $  & $ 0.178 \pm 0.026 $   \\
%     IOKR with OEL (supervised) & $ 1.75 \pm 0.26 $ & $ 0.092 \pm 0.036 $ &  &   & $ 0.178 \pm 0.026 $  \\
%     \midrule
%     & bodyfat &  &&&\\
%     \midrule
%     IOKR & $ 4.79 \pm 0.29 $ &  &&&   \\
%     IOKR with KPCA  & $ 4.77 \pm 0.33 $ & &&&   \\
%     IOKR with OEL (supervised)  &  & &&&   \\
%     \bottomrule
%   \end{tabular}
% \end{table}

\begin{table}[!ht]
  \centering
  \begin{tabular}{llllll}
    \toprule
    Method & cold & diau & dtt & heat & sushi\\
    \midrule
    IOKR & $ 0.097 \pm 0.033 $ & $ 0.228 \pm 0.023 $ & $ \mathbf{0.135 \pm 0.036} $ &  $ \mathbf{0.058 \pm 0.020} $ & $ 0.321 \pm 0.021 $ \\
    % OEL$^0$  & $ 0.097 \pm 0.034 $ & $ 0.228 \pm 0.026 $ & $ \mathbf{0.136 \pm 0.036} $ &  $ \mathbf{0.059 \pm 0.022} $    \\
    OEL$_0$ & $ \mathbf{0.105 \pm 0.029} $ & $ \mathbf{0.231 \pm 0.026} $ & $ 0.132 \pm 0.036 $ &  $ 0.056 \pm 0.021 $ & $ \mathbf{0.325 \pm 0.027} $ \\
    % IOKR & $ 0.321 \pm 0.021 $ & & & \\
    % % OEL$^0$ & $ \mathbf{0.326 \pm 0.023} $ & & & \\
    % OEL$^1$ & $ 0.325 \pm 0.027 $ & & & \\
    \bottomrule
  \end{tabular}
\caption{Mean Kendall's $\tau$ coefficient (higher is better) obtained with IOKR and OEL methods on several label ranking datasets.}
\label{table_ranking}
\end{table}

\end{document}